\documentclass[twoside,11pt]{article}

\usepackage{blindtext}

% ********************** added *****************************
\usepackage{dsfont}
\usepackage{mathalfa, mathrsfs}
\usepackage{amsfonts, mathtools}
\usepackage[noend]{algorithmic}
\usepackage[ruled,vlined,linesnumbered]{algorithm2e}
\usepackage{tikz, subcaption}
\usetikzlibrary{positioning}
\usepackage[inline]{enumitem}
\usepackage{array, booktabs}

\usetikzlibrary{shapes, arrows, arrows.meta, positioning}
\newcolumntype{M}[1]{>{\centering\arraybackslash}m{#1}}

\newcommand{\independent}{\perp\!\!\!\perp}
\newcommand{\notindependent}{\not\!\perp\!\!\!\perp}
\newcommand{\sep}[5]{ (#1 \independent_{#5} #2 \vert #3)_{#4}}
\newcommand{\nsep}[5]{ (#1 \notindependent_{#5} #2 \vert #3)_{#4}}

\newcommand{\nmarginsep}[4]{(#1 \notindependent_{#4} #2)_{#3}}
\newcommand{\IM}[2]{\text{IM}_{#2}(#1)}
\newcommand{\smax}[1]{\zeta_{\max}(#1)}
\newcommand{\Pa}[2]{\textit{Pa}_{#2}(#1)}
\newcommand{\Ch}[2]{\textit{Ch}_{#2}(#1)}
\newcommand{\Ne}[2]{\textit{Ne}_{#2}(#1)}
\newcommand{\Anc}[2]{\textit{Anc}_{#2}(#1)}
\newcommand{\De}[2]{\textit{De}_{#2}(#1)}
\newcommand{\SCC}[2]{\textit{SCC}_{#2}(#1)}

\newcommand{\V}[0]{\mathbf{V}}

\newcommand{\U}[0]{\mathbf{U}}
\newcommand{\E}[0]{\mathbf{E}}
\newcommand{\X}[0]{\mathbf{X}}

\newcommand{\Y}[0]{\mathbf{Y}}
\newcommand{\Z}[0]{\mathbf{Z}}
\newcommand{\A}[0]{\mathbf{A}}
\newcommand{\B}[0]{\mathbf{B}}

\newcommand{\D}[0]{\mathbf{D}}
\newcommand{\N}[0]{\mathbf{N}}
\newcommand{\G}[0]{\mathcal{G}}
\newcommand{\M}[0]{\mathcal{M}}

\newcommand{\I}[0]{\mathcal{I}}

% **********************************************************

\usepackage{style}

\ShortHeadings{A Unified Experiment Design Approach for Cyclic and Acyclic Causal Models}{Mokhtarian, Salehkaleybar, Ghassami, and Kiyavash}
\firstpageno{1}

\begin{document}

\title{A Unified Experiment Design Approach for Cyclic and Acyclic Causal Models}

\author{\name Ehsan Mokhtarian \email ehsan.mokhtarian@epfl.ch\\
       \addr School of Computer and Communication Sciences\\
       EPFL\\
       1015 Lausanne, Switzerland\\
       \AND
       \name Saber Salehkaleybar \email s.salehkaleybar@liacs.leidenuniv.nl\\
       \addr Leiden Institute of Advanced Computer Science (LIACS)\\
       Leiden University\\
       2333 CA Leiden, Netherlands\\
       \AND
       \name AmirEmad Ghassami \email ghassami@bu.edu \\
       \addr Department of Mathematics and Statistics,\\
       Boston University\\
       Boston, MA 02215 USA\\
       \AND
       \name Negar Kiyavash \email negar.kiyavash@epfl.ch \\
       \addr College of Management of Technology\\
       EPFL\\
       1015 Lausanne, Switzerland\\
       }

\editor{}

\maketitle

\begin{abstract}%   <- trailing '%' for backward compatibility of .sty file
    We study experiment design for unique identification of the causal graph of a simple SCM, where the graph may contain cycles. The presence of cycles in the structure introduces major challenges for experiment design as, unlike acyclic graphs, learning the skeleton of causal graphs with cycles may not be possible from merely the observational distribution. Furthermore, intervening on a variable in such graphs does not necessarily lead to orienting all the edges incident to it. In this paper, we propose an experiment design approach that can learn both cyclic and acyclic graphs and hence, unifies the task of experiment design for both types of graphs. We provide a lower bound on the number of experiments required to guarantee the unique identification of the causal graph in the worst case, showing that the proposed approach is order-optimal in terms of the number of experiments up to an additive logarithmic term. Moreover, we extend our result to the setting where the size of each experiment is bounded by a constant. For this case, we show that our approach is optimal in terms of the size of the largest experiment required for uniquely identifying the causal graph in the worst case.
\end{abstract}
\begin{keywords}
    experiment design, cyclic graphs, cyclic SCMs, causal structure learning
\end{keywords}

\section{Introduction}
    One of the fundamental undertakings of empirical sciences is recovering causal relationships among variables of interest in a system \citep{pearl2009causality}.
    Causal relationships are commonly represented by a directed graph (DG), referred to as the causal graph of the system.
    In such a representation, a directed edge from variable $X$ to variable $Y$ denotes that $X$ is a direct cause of $Y$.
    In causal structure learning literature, it is predominantly assumed that the causal graph is a directed acyclic graph (DAG).
    However, in many real-life systems, feedback loops exist among the variables to ensure stability.
    Such feedback loops create cycles in the causal graph of the system when temporal dynamics are sampled at a low rate or when modeling a system's equilibrium states \citep{bongers2021foundations}.
    
    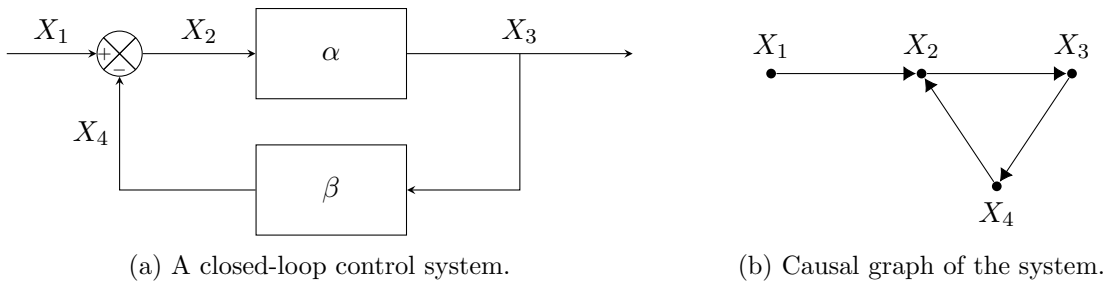
\begin{figure}[b] 
        \centering		
        \begin{subfigure}[b]{0.6\textwidth}
            \centering
            \begin{tikzpicture}
                \node[draw, circle, minimum size=0.6cm] (sum) at (0,0){};
                \draw (sum.north east) -- (sum.south west) 
                    (sum.north west) -- (sum.south east);
             
                \draw (sum.north east) -- (sum.south west)
                    (sum.north west) -- (sum.south east);
                \node[left=-1pt] at (sum.center){\tiny $+$};
                \node[below] at (sum.center){\tiny $-$};

                % Controller
                \node [draw,
                    minimum width=2cm,
                    minimum height=1.2cm,
                    right=1.5cm of sum
                ]  (controller) {$\alpha$};
                
                % feedback block
                \node [draw,
                    minimum width=2cm, 
                    minimum height=1.2cm, 
                    below = 0.6cm of controller
                ]  (feedback) {$\beta$};

                % Arrows with text label
                \draw[-stealth] (sum.east) -- (controller.west)
                    node[midway,above]{$X_2$};
                \draw[-stealth] (controller.east) -- ++ (3,0) 
                    node[midway](output){}node[midway,above]{$X_3$};
                \draw[-stealth] (output.center) |- (feedback.east);
                \draw[-stealth] (feedback.west) -| (sum.south) 
                    node[near end,left]{$X_4$};
                \draw[-stealth] (-1.5,0) -- (sum.west)
                    node[midway,above]{$X_1$};
            \end{tikzpicture}
            \caption{A closed-loop control system.}
            \label{fig: 1a}
        \end{subfigure}
        \hfill
        \begin{subfigure}[b]{0.35\textwidth}
            \centering
            \tikzstyle{block} = [circle, inner sep=1.3pt, fill=black]
            \begin{tikzpicture}
                \tikzset{edge/.style = {->,> = latex',-{Latex[width=2mm]}}}
                % vertices
                \node[block] (X1) at  (0,0) {};
                \node[] ()[above=0 of X1]{$X_1$};
                \node[block] (X2) at  (2,0) {};
                \node[] ()[above=0 of X2]{$X_2$};
                \node[block] (X3) at  (4,0) {};
                \node[] ()[above=0 of X3]{$X_3$};
                \node[block] (X4) at  (3,-1.5) {};
                \node[] ()[below=0 of X4]{$X_4$};

                % %edges
                \draw[edge] (X1) to (X2);
                \draw[edge] (X2) to (X3);
                \draw[edge] (X3) to (X4);
                \draw[edge] (X4) to (X2);
            \end{tikzpicture}
            \caption{Causal graph of the system.}
            \label{fig: 1b}
        \end{subfigure}
        \caption{An example with a feedback loop in control systems that can be modeled with a cyclic SCM (Example \ref{example: simple SCM}).}
        \label{fig: 1}
    \end{figure}
   
    As an example of a system with a feedback loop, consider the closed-loop control system in Figure \ref{fig: 1a} with four variables $X_1,X_2,X_3,X_4$.
    Figure \ref{fig: 1b} illustrates the causal graph among these variables, which is cyclic due to the feedback loop of the control system.
    We shall later revisit this example in more detail in Example \ref{example: simple SCM}.
    Another example appears in gene regulatory networks (GRN), where a collection of biological regulators interact with each other in order to determine the expression level of proteins.
    A GRN can be represented by a DG, where the vertices are the genes, and there is a directed edge from gene $X$ to gene $Y$ if activating gene $X$ may directly activate or suppress gene $Y$.
    Figure \ref{fig: gen} depicts a sub-network of Yeast's GRN \citep{schaffter2011genenetweaver}, where the label of each vertex is the name of the corresponding gene.
    This causal graph is cyclic as it contains a directed cycle (the edges in red).
    Another example is in human circadian rhythms, where genes such as Bmal1, Per2, and Cry1 form feedback loops, creating cycles that control our daily biological clock \citep{pett2018co}.

    \begin{figure}[t] 
        \centering
        \begin{subfigure}[b]{\textwidth}
            \centering
            \tikzstyle{block} = [circle, inner sep=1.3pt, fill=black]
            \begin{tikzpicture}
                \tikzset{edge/.style = {->,> = latex',-{Latex[width=2mm]}}}
                % vertices
                \node[block] (X1) at  (0,0) {};
                \node[] ()[below=0 of X1]{YDL020C};
                \node[block] (X2) at  (2.5,0) {};
                \node[] ()[below=0 of X2]{YBR049C};
                \node[block] (X3) at  (2.5,2) {};
                \node[] ()[above=0 of X3]{YNL216W};
                \node[block] (X4) at  (0,2) {};
                \node[] ()[above=0 of X4]{YKL062W};
                \node[block] (X5) at  (5,2) {};
                \node[] ()[above=0 of X5]{YDR501W};
                \node[block] (X6) at  (7.5,2) {};
                \node[] ()[above=0 of X6]{YIL131C};
                \node[block] (X7) at  (7.5,0) {};
                \node[] ()[below=0 of X7]{YDR451C};
                \node[block] (X8) at  (10,0) {};
                \node[] ()[below=0 of X8]{YKL185W};
                \node[block] (X9) at  (10,2) {};
                \node[] ()[above=0 of X9]{YLR131C};
                \node[block] (X10) at  (5,0) {};
                \node[] ()[below=0 of X10]{YDR216W};

                % %edges
                \draw[edge,red] (X1) to (X2);
                \draw[edge,red] (X2) to (X3);
                \draw[edge,red] (X3) to (X4);
                \draw[edge,red] (X4) to (X1);
                \draw[edge] (X3) to (X5);
                \draw[edge] (X5) to (X6);
                \draw[edge] (X6) to (X9);
                \draw[edge] (X9) to (X8);
                \draw[edge] (X6) to (X7);
                \draw[edge] (X8) to (X7);
                \draw[edge] (X7) to (X10);

            \end{tikzpicture}
        \end{subfigure}
        \caption{A sub-network of Yeast's gene regulatory network that contains a directed cycle of length $4$ (the edges in red).}
        \label{fig: gen}
    \end{figure}
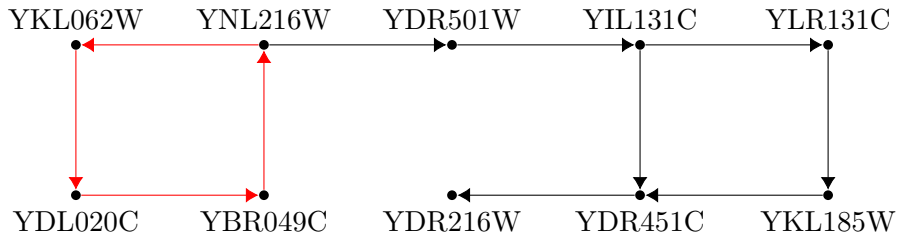
    
    Allowing cycles introduces major challenges to structure learning from observational data.
    For instance, for DAGs, the skeleton of the graph (i.e., the undirected graph obtained by removing the directions of the edges) can be learned from observational data \citep{spirtes2000causation, pearl2009causality, mokhtarian2021recursive, akbari2021recursive}.
    As we shall discuss in Section \ref{subsec: skeleton}, for cyclic DGs, we can only learn a supergraph of the skeleton.
    Another fundamental challenge is as follows.
    For DAGs, if data is generated from a structural causal model (SCM), $d$-Markov property holds, i.e., the joint distribution over the variables contains all the conditional independencies encoded by the $d$-separation relations in the graph.
    For cyclic DGs, this property holds only in specific cases, such as linear systems with continuous variables (see Section \ref{subsec: properties} for a detailed discussion).
    In short, observational data is far less informative for structure learning in the case of cyclic graphs.
    
    To gain more insight into the underlying causal graph, the gold standard is to perform \emph{experiments} in the system.
    That is, to intervene on a subset of variables and study the effect of such intervention on the resulting interventional distribution.
    We refer to the problem of designing a set of experiments sufficient for learning the underlying causal graph as \emph{experiment design problem}.
    As performing experiments are often costly and time-consuming, it is desirable to minimize the number of necessary experiments in the design.

    Experiment design has been studied extensively for DAGs (see related work in Section \ref{sec: related work}).
    Unfortunately, the findings for DAGs are not directly applicable to graphs with cycles.
    For instance, in DAGs, an intervention on a subset of the vertices orients all the edges between the subset and the rest of the variables.
    In Section \ref{subsec: singleton}, we show that in cyclic DGs, performing experiments in some cases neither leads to learning the presence of edges nor orientating them.
    This shows that entirely new techniques are required to develop algorithms for the experiment design problem in the presence of cycles.
   
    To the best of our knowledge, this paper proposes the first unified framework for experiment design for cyclic and acyclic graphs.
    Our main contributions are as follows.
    
    \begin{itemize}[leftmargin=*]
        \item We provide a two-stage experiment design algorithm for learning DG $\G$, the causal graph of a \textit{simple SCM} (Definition \ref{def: simple SCM}).
        In the first stage, we extend the so-called \emph{separating systems} to \emph{colored separating systems} (Definition \ref{def: colored sep sys}), which we utilize to design a set of experiments for learning the strongly connected components (SCC) of $\G$ (Algorithm \ref{algo: SCC}).
        In the second stage, we introduce the novel concept of \emph{lifted separating systems} (Definition \ref{def: lifted sep sys}), which are defined based on the SCCs of the graph.
        As we mentioned before, performing an experiment does not necessarily lead to learning the presence of edges or orientating them.
        However, we show that by performing experiments on the elements of a lifted separating system, we can learn the set of parents of each variable and therefore, exactly recover $\G$ (Algorithm \ref{algo: main}).
        
        \item We provide lower bounds on the number of experiments and the size of the largest experiment that leads to the unique identification of $\G$ for both adaptive and non-adaptive designs in the worst case for any fixed value of $\smax{\G}$, where $\smax{\G}$ denotes the size of the largest SCC of $\G$.
        Specifically, we show that in the worst case, $\G$ cannot be identified by performing experiments with size less than $\smax{\G}-1$ (Theorem \ref{thm: lower bound - size}) or the number of experiments less than $\smax{\G}$ (Theorem \ref{thm: lower bound - number}).
        Additionally, we show that the former bound is tight (Corollary \ref{cor: Thm 1 is tight}), and the latter differs from our achievable bound (Equation \ref{eq: upper bound 1}) by an additive logarithmic term, which demonstrates the order-optimality of our proposed method.
        Note that in acyclic models, the lower bound on the size of the experiments is one since singleton experiments are always sufficient for learning a DAG.
        
        \item Finally, we consider a setup where the size of each designed experiment is bounded by a constant (Section \ref{sec: bounded size}).
        We provide an extension of our approach to this setting and present an upper bound on the number of designed experiments (Equation \ref{eq: upper bound 2}).
        In particular, we formulate the construction of bounded-size lifted separating systems as a combinatorial optimization problem and propose a greedy method for solving it (Theorem \ref{thm: upper bound - bounded size}).
    \end{itemize}
    Table \ref{table: summary} summarizes the main contributions of the paper. 
    The remainder of the paper is organized as follows.
    In Section \ref{sec: Preliminaries}, we review the preliminaries, introduce notations and assumptions, and formally describe the experiment design problem in the presence of cycles.
    In Section \ref{sec: challenges}, we discuss two fundamental challenges of causal discovery from observation or interventional distributions in the presence of cycles.
    In Section \ref{sec: lower bound}, we present lower bounds on the size and number of experiments required for the unique identification of the causal graph.
    In Section \ref{sec: unbounded size}, we propose the two stages of our experiment design algorithm.
    We then generalize our results in Section \ref{sec: bounded size} to the setting where the size of each designed experiment is bounded by a constant.
    In Section \ref{sec: simulation}, we provide a set of simulations over syntactic data sets to illustrate the performance of our method in practice.
    In Section \ref{sec: related work}, we review and discuss related work.
    Finally, in Section \ref{sec: discussion}, we conclude the paper and discuss potential future work.

    \begin{table}[t]
        \fontsize{9}{10.5}\selectfont
        \centering
        \begin{tabular}{M{3.3cm}|M{3.7cm}|M{7cm}}
            \toprule
            & \textbf{Max experiment size}
            & \textbf{Number of experiments}
            \\
            \hline
            \textbf{Unbounded-size alg.}
            & $ n-1$
            & $2\lceil\log_2(\chi(\G_r^{obs}))\rceil + \smax{\G}$
            \\
            % \hline
            \textbf{Bounded-size alg.}
            & $\smax{\G}-1 \leq M < n$
            & $\lceil\frac{n}{M}\rceil \lceil\log_{\lceil\frac{n}{M}\rceil} n\rceil+ \smax{\G}(1+ \lfloor \frac{n-\smax{\G}-1}{M-\smax{\G}+2}\rfloor)$
            \\
            % \hline
            \textbf{Lower bound}
            & $\smax{\G}-1$
            & $\smax{\G}$
            \\
           \bottomrule
        \end{tabular}
        \caption{
        Main contributions of the paper.
        The first two rows provide the achievable bounds on the number of performed experiments for our proposed unbounded-sized (Section \ref{sec: unbounded size}) and bounded-size (Section \ref{sec: bounded size}) experiment design algorithms.
        The last row represents our lower bounds on the number of experiments (Theorem \ref{thm: lower bound - number}) and the size of the largest experiment (Theorem \ref{thm: lower bound - size}) that lead to the unique identification of $\G$ in the worst case.
        The number of variables and the size of the largest SCC of $\G$ are denoted by $n$ and $\smax{\G}$, respectively.
        $\G_r^{obs}$ denotes the skeleton of a graph that can be learned from the observational distribution (Definition \ref{def: G observed}), and $\chi(\G_r^{obs})$ is its coloring number.
        }
        \label{table: summary}
    \end{table}

\section{Preliminaries and Problem Description} \label{sec: Preliminaries}
    Throughout the paper, we denote random variables by capital letters (e.g., $X$), sets of variables by bold letters (e.g., $\X$), and graphs by calligraphic letters (e.g., $\G$).
    \subsection{Preliminary Graph Definitions}
        A \emph{directed graph} (DG) is a graph $\G = (\V,\E)$, where $\V$ is a set of variables and $\E$ is a set of directed edges between the variables in $\V$.
        We denote a directed edge from $X$ to $Y$ by $(X,Y)$, where $X$ is called a \emph{parent} of $Y$ and $Y$ a \emph{child} of $X$.
        Further, \emph{neighbors} of a variable is the union of parents and children of that variable.
        In this paper, we consider DGs without self-loop, i.e., $(X,X)\notin \E$ for all $X\in \V$. 
        However, a DG can have multiple edges (at most one in each direction), i.e., it is possible that $(X,Y)\in \E$ and $(Y,X)\in \E$.
        Similarly, an undirected graph is a graph with undirected edges.
        We denote an undirected edge between two distinct variables $X$ and $Y$ by $\{X,Y\}$.
        The \emph{skeleton} of a DG $\G$ is an undirected graph $(\V, \E')$, where there is an undirected edge $\{X,Y\}$ in $\E'$ if $X$ and $Y$ are neighbors, that is, either $(X,Y)\in \E$ or $(Y,X)\in \E$.
        A \emph{directed acyclic graph} (DAG) is a DG with no cycles.
        
        A \emph{vertex coloring} for an undirected graph $\G$ is an assignment of colors to the vertices, such that no two adjacent vertices are of the same color.
        \emph{Chromatic number} of $\G$, denoted by $\chi(\G)$, is the smallest number of colors needed for a vertex coloring of $\G$.
        
        Suppose $\G =(\V,\E)$ is a DG.
        A path $(X_1,X_2,\cdots,X_k)$ in $\G$ is called a \emph{directed path} from $X_1$ to $X_k$ if $(X_i,X_{i+1})\in \E$ for all $1\leq i<k$.
        Variable $X$ is called an \emph{ancestor} of $Y$ and $Y$ a \emph{descendant} of $X$ if there exists a directed path from $X$ to $Y$ in $\G$.
        Note that $X$ is an ancestor and a descendant of itself.
        A non-endpoint vertex $X$ on a path is called a \emph{collider} if both of the edges incident to $X$ on the path have an arrowhead at $X$.
        A variable $Y$ is \emph{strongly connected} to variable $X$ if $Y$ is both an ancestor and a descendant of $X$.
        We denote the set of parents, children, neighbors, descendants, ancestors, and strongly connected variables of $X$ in $\G$ by $\Pa{X}{\G}$, $\Ch{X}{\G}$, $\Ne{X}{\G}$, $\De{X}{\G}$, $\Anc{X}{\G}$, and $\SCC{X}{\G}$, respectively.
        We will also apply these definitions disjunctively to sets of variables, e.g., $\Pa{\X}{\G} = \bigcup_{X\in \X}\Pa{X}{\G}$ or $\Anc{\X}{\G} = \bigcup_{X\in \X}\Anc{X}{\G}$.
        
        \begin{definition}[SCC]
            Strongly connected variables of $\G$ partition $\V$ into so-called, \emph{strongly connected components} (SCCs); two variables are strongly connected if and only if they are in the same SCC.
            We denote the size of the largest SCC of $\G$ by $\smax{\G}$.
        \end{definition} 
    
    \subsection{Generative Model}
        \emph{Structural causal models} (SCMs) are commonly used to describe the causal mechanisms of a system \citep{pearl2009causality}.
        
        \begin{definition}[SCM]
            An SCM is a tuple $\M = \langle \V,\U,\mathbf{F},P(\U) \rangle$, where $\V$ is a set of endogenous variables, $\U$ is a set of exogenous variables with the joint distribution $P(\U)$ where the variables in $\U$ are assumed to be jointly independent, and $\mathbf{F}$ is a set of functions $\{f_X\}_{X\in \V}$ such that $X = f_X(\Pa{X}{}, \U^X)$, where $\Pa{X}{} \subseteq \V \setminus \{X\}$ and $\U^X\subseteq \U$.
        \end{definition}    
        Let $\M = \langle \V,\U,\mathbf{F},P(\U) \rangle$ be an SCM.
        The assumption of causal sufficiency holds for $\M$ if for any two distinct variables $X,Y\in \V$, $\U^X \cap \U^Y = \varnothing$.
        In this paper, we assume causal sufficiency.
        Under the causal sufficiency assumption, the causal graph of $\M$ is a DG over $\V$ with directed edges from $\Pa{X}{}$ to $X$ for each variable $X\in \V$.

        \begin{definition}[Acyclic SCM]
            An SCM is called acyclic if the corresponding causal graph is a DAG.
            Acyclic SCMs are also known as recursive SEMs.
        \end{definition}
        \begin{remark}
            The definition of SCM does not require the causal graph to be acyclic.
            Acyclicity is often added (or implicitly assumed) in the literature.
        \end{remark}
        Acyclic SCMs have been widely studied in the past few decades because of their convenient properties.
        For instance, they always induce unique observational, interventional, and counterfactual distributions \citep{pearl2009causality}.
        This is not necessarily the case for cyclic SCMs since cycles lead to various complications pertaining to solvability issues.
        \cite{bongers2021foundations} introduced \emph{simple SCMs}, a subclass of SCMs (cyclic or acyclic), which retain most of the convenient properties of acyclic SCMs.

        \begin{definition}[Simple SCM] \label{def: simple SCM}
            An SCM is simple if any subset of its structural equations can be solved uniquely for its associated variables in terms of the other variables that appear in these equations.
        \end{definition}
        We refer the interested reader to \cite{bongers2021foundations} for a more detailed definition of simple SCMs.
        The following result provides a few important properties of simple SCMs.

        \begin{proposition}[\citealt{bongers2021foundations}]
            Simple SCMs always have uniquely defined observational, interventional, and counterfactual distributions.
            \label{prop:simple SCM}
        \end{proposition}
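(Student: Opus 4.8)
The plan is to reduce each of the three claims to the single defining property of simple SCMs --- unique solvability of every subset of the structural equations --- by unwinding the definitions of the observational, interventional, and counterfactual distributions in turn. First, for the observational distribution, I would fix a realization $\mathbf{u}$ of the exogenous variables and regard the full system $\{X = f_X(\Pa{X}{},\U^X)\}_{X\in\V}$ as a set of equations in the endogenous unknowns $\V$ with $\mathbf{u}$ as a parameter. Applying the simplicity assumption to the subset consisting of \emph{all} structural equations yields a unique solution $\mathbf{v}$ for each $\mathbf{u}$, hence a solution map $g$ with $\mathbf{v} = g(\mathbf{u})$. The observational distribution is then the pushforward of $P(\U)$ under $g$, which is well-defined precisely because $g$ is single-valued.

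Second, for an interventional distribution under $\mathrm{do}(\X=\x)$, I would form the post-intervention model by deleting the equations of the intervened variables and fixing them at $\x$. The surviving equations $\{X = f_X(\Pa{X}{},\U^X)\}_{X\in\V\setminus\X}$ form a \emph{subset} of the original structural equations, now in the unknowns $\V\setminus\X$ with $\x$ and $\mathbf{u}$ as parameters. Simplicity applied to this subset gives a unique solution map, whose pushforward of $P(\U)$ is the desired interventional distribution; the observational case is recovered by taking $\X=\varnothing$.

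Third, for counterfactuals I would use the abduction--action--prediction (twin-network) construction: two copies of the endogenous variables sharing the same $\mathbf{u}$, one governed by the factual system and the other by an intervened subsystem. Each copy is resolved by a subset of the structural equations, so by simplicity each is uniquely determined by $\mathbf{u}$; conditioning $P(\U)$ on the observed evidence and pushing the resulting posterior through the joint solution map yields a well-defined counterfactual distribution.

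The main obstacle is not the algebraic uniqueness but \emph{measurability}: for each pushforward to be a genuine probability measure, the solution maps must be measurable, which pointwise uniqueness alone does not guarantee. I expect the cleanest resolution is to adopt the formulation of \cite{bongers2021foundations}, in which ``uniquely solvable'' is defined to include the existence of a measurable solution function; simplicity then supplies measurable solution maps for \emph{every} required subset simultaneously, and the remaining work is the routine check that the observational, interventional, and counterfactual constructions each correspond to one such subset.
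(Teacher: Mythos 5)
The paper offers no proof of this proposition: it is imported verbatim from \cite{bongers2021foundations}, so there is no in-paper argument to compare against. Judged on its own terms, your sketch is a correct reconstruction of the argument in that reference. The reduction of all three claims to unique solvability of subsets of the structural equations is exactly right: the observational distribution comes from solving the full system, the interventional distribution from solving the surviving subset $\{X = f_X(\Pa{X}{},\U^X)\}_{X\in\V\setminus\X}$ with the intervened variables treated as parameters, and the counterfactual distribution from the twin construction, where the two copies share only exogenous variables and have no cross-edges, so each copy is again resolved by a subset of the original equations. You also correctly identify the one genuinely technical point --- measurability of the solution maps --- and resolve it the same way the source does, by building measurability into the definition of unique solvability. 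The only refinement worth adding is that for the counterfactual case one should note explicitly that the twinned (and possibly intervened) SCM inherits simplicity from the original, which follows from the decomposition you describe; this is stated as a separate lemma in \cite{bongers2021foundations} and parallels the paper's own separate proposition that $\M_{do(\mathbf{I})}$ is simple whenever $\M$ is.
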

        \begin{example}[Simple SCM] \label{example: simple SCM}
            Consider the control system shown in Figure \ref{fig: 1} with four variables $X_1,X_2,X_3,X_4$.
            The followings are structural equations modeling the control system.
            \begin{equation} \label{eq: control system}
                X_1 = U_1, \quad X_2 = X_1-X_4 + U_2, \quad
                X_3 = \alpha X_2+ U_3, \quad X_4 = \beta X_3 + U_4,
            \end{equation}
            where $\alpha$ and $\beta$ are two constants such that $\alpha \beta \neq -1$, and $U_1, U_2, U_3, U_4$ are independent noise variables ($U_1$ could be viewed as the input to the system and $U_2,U_3,U_4$ as the noise for each state variable in the system).
            This SCM is simple as any subset of the equations in \eqref{eq: control system} can be solved uniquely for its associated variables in terms of the other variables that appear in these equations.
            Proposition \ref{prop:simple SCM} implies that observational, interventional, and counterfactual distributions all exist and are unique for this SCM.
            For instance, suppose we perform an intervention on variable $X_4$ by replacing the corresponding structural equation with $X_4=U_4'$, where $U_4'$ is an independent noise variable.
            This will remove the feedback loop and the variables in the system will be uniquely determined as follows.
            \begin{equation}
                X_1 = U_1, \quad X_2 = U_1- U_4' + U_2, \quad
                X_3 = \alpha (U_1- U_4' + U_2) + U_3, \quad X_4 = U_4'.
            \end{equation}
        \end{example}
    
    \subsection{From \texorpdfstring{$d$}{}-separation to \texorpdfstring{$\sigma$}{}-separation} \label{subsec: properties}
        For three disjoint subsets $\X,\Y,\Z$ of variables with the joint distribution $P$, conditional independence (CI) $\sep{\X}{\Y}{\Z}{P}{}$ denotes that $\X$ and $\Y$ are independent conditioned on $\Z$, i.e., $P(\X,\Y \mid \Z) = P(\X \mid \Z) P(\Y \mid \Z)$.
        %Such a relationship is referred to as conditional independence (CI).

        In the following, we formally define $d$-separation and $\sigma$-separation for DGs.
        
        \begin{definition}[$d$-separation]
            Suppose $\G = (\V,\E)$ is a DG, $X$ and $Y$ are two distinct variables in $\V$, and $\mathbf{S} \subseteq \V \setminus \{X,Y\}$. 
            A path $\mathcal{P}= (X, Z_1, \cdots, Z_k, Y)$ between $X$ and $Y$ in $\G$ is $d$-blocked by $\mathbf{S}$ if there exists $1 \leq i \leq k$ such that
            \begin{itemize}
                \item $Z_i$ is a collider on $\mathcal{P}$ and $Z_i \notin \Anc{\mathbf{S}}{\G}$, or
                \item $Z_i$ is not a collider on $\mathcal{P}$ and $Z_i \in \mathbf{S}$. 
            \end{itemize}
            We say $\mathbf{S}$ $d$-separates $X$ and $Y$ in $\G$ and denote it by $\sep{X}{Y}{\mathbf{S}}{\G}{d}$ if all the paths in $\G$ between $X$ and $Y$ are $d$-blocked by $\mathbf{S}$. 
            For three disjoint subsets $\X,\Y,\mathbf{S}$ in $\V$, we say $\mathbf{S}$ $d$-separates $\X$ and $\Y$ in $\G$, denoted by $\sep{\X}{\Y}{\mathbf{S}}{\G}{d}$, if for any $X \in \X$ and $Y \in \Y$,  $\sep{X}{Y}{\mathbf{S}}{\G}{d}$. 
        \end{definition}
        \begin{definition}[$\sigma$-separation]
            Suppose $\G = (\V,\E)$ is a DG, $X$ and $Y$ are two distinct variables in $\V$, and $\mathbf{S} \subseteq \V \setminus \{X,Y\}$. 
            A path $\mathcal{P}= (X=Z_0, Z_1, \cdots, Z_k, Z_{k+1}=Y)$ between $X$ and $Y$ in $\G$ is $\sigma$-blocked by $\mathbf{S}$ if there exists $1 \leq i \leq k$ such that 
            \begin{itemize}
                \item $Z_i$ is a collider on $\mathcal{P}$ and $Z_i \notin \Anc{\mathbf{S}}{\G}$, or
                \item $Z_i$ is not a collider on $\mathcal{P}$, $Z_i \in \mathbf{S}$, and either $Z_i \to Z_{i+1}$ and $Z_{i+1} \notin \SCC{Z_i}{\G}$, or $Z_{i-1} \gets Z_i$ and $Z_{i-1} \notin \SCC{Z_i}{\G}$. 
            \end{itemize}
            We say $\mathbf{S}$ $\sigma$-separates $X$ and $Y$ in $\G$, denoted by $\sep{X}{Y}{\mathbf{S}}{\G}{\sigma}$, if all the paths in $\G$ between $X$ and $Y$ are $\sigma$-blocked by $\mathbf{S}$. 
            For three disjoint subsets $\X,\Y,\mathbf{S}$ in $\V$, we say $\mathbf{S}$ $\sigma$-separates $\X$ and $\Y$ in $\G$, denoted by $\sep{\X}{\Y}{\mathbf{S}}{\G}{\sigma}$, if for any $X \in \X$ and $Y \in \Y$, $\sep{X}{Y}{\mathbf{S}}{\G}{\sigma}$. 
        \end{definition}
        \begin{remark} \label{rem: d-sigma}
           for DAGs, $\sigma$-separation and $d$-separation are equivalent.
           That is, for three disjoint subsets $\X,\Y,\mathbf{S}$, if the $\sigma$-separation $\sep{\X}{\Y}{\Z}{\G}{\sigma}$ holds, then the $d$-separation $\sep{\X}{\Y}{\Z}{\G}{d}$ holds and visa versa.
           However, for cyclic DGs, the reverse direction does not necessarily hold.
        \end{remark}
        For ease of representation, we introduce letter $r$ to stand for either $d$ (as in $d$-separation) or $\sigma$ (as in $\sigma$-separation).
        Next, we formally define $r$-independence model, $r$-Markov equivalence class, $r$-Markov property, and $r$-faithfulness.
        
        \begin{definition}[$\IM{\G}{r}$]
            For a DG $\G$, the $r$-independence model $\IM{\G}{r}$ is defined as the set of $r$-separations of $\G$.
            That is, 
            \begin{equation*}
                \IM{\G}{r} = \{(X,Y,\Z) \mid X,Y \in \V, \Z \subseteq \V \setminus \{X,Y\},\, \sep{X}{Y}{\Z}{\G}{r}\}.
            \end{equation*}
        \end{definition}
        When $\G$ is a DAG, given their equivalence, we drop subscripts $d$ and $\sigma$ in $d$-separation and $\sigma$-separation notations, respectively, and refer to the independence model as $\IM{\G}{}$ since $\IM{\G}{d} = \IM{\G}{\sigma}$.
        
        \begin{definition}[$r$-MEC]
            Two DGs with identical $r$-independence models are called to be $r$-Markov equivalent.
            We denote by $[\G]^r$ the $r$-Markov equivalence class ($r$-MEC) of $\G$, i.e., the set of $r$-Markov equivalent DGs of $\G$.
        \end{definition}
        \begin{definition}[$r$-Markov property, $r$-faithfulness]
            A distribution $P$ satisfies $r$-Markov property with respect to a DG $\G$ if for any $r$-separation $\sep{\X}{\Y}{\Z}{\G}{r}$ in $\G$, the CI $\sep{\X}{\Y}{\Z}{P}{}$ holds in $P$.
            Similarly, a distribution $P$ satisfies $r$-faithfulness with respect to a DG $\G$ if for any CI $\sep{\X}{\Y}{\Z}{P}{}$ in $P$, the $r$-separation $\sep{\X}{\Y}{\Z}{\G}{r}$ holds in $\G$.
        \end{definition}
        Suppose $\M= \langle \V,\U,\mathbf{F},P(\U) \rangle$ is a simple SCM with observational distribution $P^{\M}(\V)$ and causal graph $\G$. 
        We often drop the superscript $\M$ when it is clear from the context.
        It has been shown that $P$ always satisfies $\sigma$-Markov property with respect to $\G$.
        However, the $d$-Markov property holds in specific settings, e.g., acyclic SCMs, SCMs with continuous variables and linear relations, or SCMs with discrete variables \citep{mooij2020constraint,forre2017markov}.
        On the other hand, $\sigma$-faithfulness is a stronger assumption than $d$-faithfulness due to Remark \ref{rem: d-sigma}.

    \subsection{Intervention and Experiment}
        Suppose $\M= \langle \V,\U,\mathbf{F},P(\U) \rangle$ is an SCM.
        A \emph{full-support hard intervention} on a subset $\mathbf{I} \subseteq \V$, denoted by $do(\mathbf{I})$, converts $\M$ to a new SCM $\M_{do(\mathbf{I})} = \langle \V,\U,\mathbf{F}',P(\U) \rangle$, where for each $X\in \mathbf{I}$, the structural assignment of $X$ in $\mathbf{F}$ is replaced by $X=\xi_X$ in $\mathbf{F}'$, where $\xi_X$ is a random variable whose support is the same as the support of $X$ and is independent of all other random variables in the system.
        We denote the corresponding interventional distribution (i.e., the distribution of $\M_{do(\mathbf{I})}$) by $P_{do(\mathbf{I})}$.
        
        \begin{proposition}[\citealt{bongers2021foundations}]
            If $\M= \langle \V,\U,\mathbf{F},P(\U) \rangle$ is a simple SCM, then for any $\mathbf{I} \subseteq \V$, SCM $\M_{do(\mathbf{I})}$ is also a simple SCM.
        \end{proposition}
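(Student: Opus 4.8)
The plan is to reduce solvability of any subset of equations in $\M_{do(\mathbf{I})}$ to solvability of a suitable subset of the original simple SCM $\M$, exploiting the fact that a full-support hard intervention only replaces equations by trivially solvable ones and never introduces a new dependency. Concretely, I would show that every subsystem of $\M_{do(\mathbf{I})}$ decomposes into an intervened part that is solved directly and a non-intervened part that is a verbatim subsystem of $\M$, and then argue that composing the two unique solutions yields a unique solution for the whole block.

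First I would fix an arbitrary subset of the structural equations of $\M_{do(\mathbf{I})}$ and let $\mathbf{O} \subseteq \V$ be its set of associated (left-hand side) variables. I split $\mathbf{O}$ according to whether a variable is intervened on, writing $\mathbf{O}_1 := \mathbf{O} \cap \mathbf{I}$ and $\mathbf{O}_2 := \mathbf{O} \setminus \mathbf{I}$. For every $X \in \mathbf{O}_1$ the corresponding equation in $\mathbf{F}'$ is $X = \xi_X$, which is solved uniquely for $X$ by the fresh noise $\xi_X$; by construction $\xi_X$ is independent of everything else and plays the role of one of the other variables appearing in the equation. For every $X \in \mathbf{O}_2 \subseteq \V \setminus \mathbf{I}$ the equation is left untouched by the intervention, so it coincides with the original equation $X = f_X(\Pa{X}{}, \U^X)$ of $\M$.

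Next I would invoke simplicity of $\M$ on the subsystem indexed by $\mathbf{O}_2$: since $\{X = f_X(\Pa{X}{}, \U^X) : X \in \mathbf{O}_2\}$ is a subset of the structural equations of $\M$, there is a unique solution expressing $\mathbf{O}_2$ as a measurable function of the remaining variables occurring in these equations, namely the exogenous terms together with those parents that lie outside $\mathbf{O}_2$. Some of these parents may lie in $\mathbf{O}_1$; for those I substitute the already-determined values $X = \xi_X$. Composing the two solutions then yields a unique assignment of the entire block $\mathbf{O}$ in terms of the variables external to the chosen subset, which is precisely the solvability requirement for $\M_{do(\mathbf{I})}$. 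Finally, I would note that $\M_{do(\mathbf{I})}$ retains the same exogenous distribution and jointly independent noises, so it is a well-formed SCM, and the above shows it is in fact simple.

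I expect the main obstacle to be careful bookkeeping rather than a deep difficulty: one must verify that the variables of $\mathbf{O}_1$, which appear as right-hand side parents in the $\mathbf{O}_2$-equations, are correctly reclassified as inputs after the intervention, and that substituting their fixed values $\xi_X$ preserves the uniqueness guaranteed by simplicity of $\M$. Establishing that this composition of uniquely solvable maps is itself uniquely solvable in the measure-theoretic sense of \cite{bongers2021foundations} is the step requiring the most care, although it should follow because the intervention only deletes the edges into $\mathbf{I}$ and therefore cannot create any new cyclic dependency that was not already resolvable in $\M$.
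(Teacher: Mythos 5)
The paper states this proposition as an imported result from \cite{bongers2021foundations} and provides no proof of its own, so there is nothing internal to compare against; judged on its own terms, your argument is correct and is essentially the standard closure-under-intervention argument used in that reference. The decomposition of an arbitrary subset $\mathbf{O}$ of equations into $\mathbf{O}\cap\mathbf{I}$ (whose equations $X=\xi_X$ are trivially uniquely solvable) and $\mathbf{O}\setminus\mathbf{I}$ (whose equations are verbatim a subsystem of $\M$, hence uniquely solvable by simplicity of $\M$) is exactly the right reduction, and the substitution step you flag as delicate is licensed because unique solvability in the sense of \cite{bongers2021foundations} quantifies over \emph{all} values of the endogenous inputs to the subsystem, so it is insensitive to the fact that the parents lying in $\mathbf{O}\cap\mathbf{I}$ now carry the values $\xi_X$ rather than their original ones.
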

        After intervening on $\mathbf{I}$, the variables in $\mathbf{I}$ are no longer functions of other variables in $\V$.
        Hence, the corresponding causal graph of $\M_{do(\mathbf{I})}$ can be obtained from graph $\G$ by removing the incoming edges of the variables in $\mathbf{I}$.
        We denote the resulting graph by $\G_{\overline{\mathbf{I}}}$.
        An \emph{experiment} on a target set $\mathbf{I}$ is the act of conducting a full-support hard intervention on $\mathbf{I}$ and obtaining the interventional distribution $P_{do(\mathbf{I})}$.

        \begin{definition}[$\I$-$r$-MEC]
            Suppose $\I$ is a collection of subsets of $\V$ (can include the empty set).
            Two DGs $\G$ and $\mathcal{H}$ are $\I$-$r$-Markov equivalent if $\IM{\G_{\overline{\mathbf{I}}}}{r} = \IM{\mathcal{H}_{\overline{\mathbf{I}}}}{r}$ for each $\mathbf{I} \in \I$. 
            We denote by $[\G]^r_{\I}$ the $\I$-$r$-Markov equivalent class of $\G$, i.e., the set of $\I$-$r$-Markov equivalent DGs of $\G$.
        \end{definition}
        This definition implies that it is impossible to distinguish two $\I$-$r$-Markov equivalent graphs by the $r$-separations of the graphs resulting from experiments on the elements of $\I$.

    \subsection{Problem Description}\label{sec:desc}
        Consider a simple SCM $\M= \langle \V,\U,\mathbf{F},P(\U) \rangle$  with observational distribution $P^{\M}(\V)$ and causal graph $\G$.
        We assume causal sufficiency, in which case $\G$ is a DG.\footnote{Note that DGs cannot represent the presence of hidden confounders.
        Without causal sufficiency, the causal graph can be represented by a directed mixed graph (DMG) that contains bidirected edges to indicate the presence of hidden confounders.}
        As discussed earlier, in simple SCMs, $\sigma$-Markov property always holds (even in non-linear systems with continuous variables), while $d$-Markov property holds in certain settings. 
        On the other hand, $\sigma$-faithfulness is a stronger assumption than $d$-faithfulness.
        In this paper, we consider the following two scenarios.
        \begin{itemize}[leftmargin=*]
            \item \textbf{Scenario 1:} $P^{\M}$ satisfies $d$-Markov property and $d$-faithfulness w.r.t. $\G$.
            In this case, CI relations are equivalent to $d$-separations. 
            That is, $\sep{X}{Y}{\Z}{\G}{d} \iff \sep{X}{Y}{\Z}{P}{}$.
            \item \textbf{Scenario 2:} $P^{\M}$ satisfies $\sigma$-faithfulness w.r.t. $\G$.
            In this case, CI relations are equivalent to $\sigma$-separations.
            That is, $\sep{X}{Y}{\Z}{\G}{\sigma} \iff \sep{X}{Y}{\Z}{P}{}$.
        \end{itemize}
        Note that if $\G$ is a DAG, the aforementioned scenarios are the same.
        However, if there are cycles in $\G$, the two scenarios are not necessarily equivalent.
        
        Our goal in this paper is to design a set of experiments for learning $\G$ under Scenario $1$ or Scenario $2$. 
        That is, to introduce a collection of subsets $\I$ such that $[\G]^r_{\I} = \{\G\}$.
        Additionally, as performing experiments can be costly, we aim to minimize the number of necessary experiments.
        Surprisingly, our approaches for both scenarios coincide, but the proof techniques differ.

\section{Challenges of Experiment Design in Presence of Cycles} \label{sec: challenges}
    In this section, we discuss some of the challenges pertaining to learning DGs in the presence of cycles.
    In Subsection \ref{subsec: skeleton}, we show that, unlike DAGs, we cannot learn the skeleton of a DG without performing experiments, i.e., from merely the observational distribution.
    In Subsection \ref{subsec: singleton}, we argue that even performing all size-one experiments (singleton experiments) does not suffice to learn $\G$ in some cases.
    
    \subsection{Skeleton of a DG is not Learnable from Observational Distribution} \label{subsec: skeleton}
        For any DAG $\G = (\V,\E)$, \cite{verma1990equivalence} showed that non-neighbor variables are $d$-separable.
        That is, for any distinct and non-neighbor variables $X$ and $Y$, there exists a subset of $\V \setminus \{X,Y\}$ that $d$-separates $X$ and $Y$.
        This implies that the observational distribution suffices to learn the skeleton of $\G$.
        In the following, we show that this assertion is not true in cyclic graphs for either of the two scenarios introduced in \ref{sec:desc}. 
        Let us begin by defining the skeleton of a graph that can be learned from the observational distribution.
        
        \begin{definition}[$\G_r^{obs}$] \label{def: G observed}
            Suppose $\G=(\V,\E)$ is a DG. 
            Let $\G_r^{obs}$ denote the undirected graph over $\V$ where there is an edge between $X$ and $Y$ if and only if $X$ and $Y$ are not $r$-separable in $\G$, i.e., for any $\mathbf{S}\subseteq \V \setminus \{X,Y\}$ we have $\nsep{X}{Y}{\mathbf{S}}{\G}{r}$.
        \end{definition}
        Note that $\G_r^{obs}$ includes the skeleton of $\G$ but can potentially have additional edges.  
        Next, we describe $\G_d^{obs}$ in Scenario 1 and $\G_{\sigma}^{obs}$ in Scenario 2.
                
        \subsubsection{Scenario 1}
        \begin{example}[Virtual edge] \label{example: virtual edge}
            Consider DG $\G$ in Figure \ref{fig: 2a}. 
            In this graph, $Y$ and $X_4$ are not $d$-separable.
            Thus, there can be an edge between $Y$ and $X_4$ in some of the DGs in $[\G]^d$, such as in DG $\G_1$ in Figure \ref{fig: 2b}.
        \end{example}
        In Example \ref{example: virtual edge}, a so-called \emph{virtual edge} exists between $Y$ and $X_4$ which we formally define in the following \citep{richardson1996polynomial, ghassami2020characterizing}.
        
        \begin{definition}[Virtual edge]
            There exists a virtual edge between two non-neighbor variables $X$ and $Z$ in a DG $\G = (\V,\E)$ if $X$ and $Z$ have a common child that is an ancestor of either $X$ or $Z$, i.e., $\Ch{X}{\G} \cap \Ch{Z}{\G}\cap \Anc{\{X,Z\}}{\G} \neq \varnothing$.
        \end{definition}
        The following result demonstrates the importance of virtual edges.
        
        \begin{proposition}[\citealt{richardson1996polynomial}] \label{prp: virtual edge}
            Two variables are $d$-separable in DG $\G$ if and only if an edge or virtual edge does not connect them.
            Accordingly, $\G_d^{obs}$ is obtained by adding the virtual edges of $\G$ to the skeleton of $\G$.
        \end{proposition}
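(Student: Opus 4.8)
Since the statement is an equivalence, the plan is to prove the two directions separately; the concluding ``accordingly'' claim is then immediate, because by Definition~\ref{def: G observed} an edge of $\G_d^{obs}$ is present precisely when its endpoints fail to be $d$-separable, so the edge set of $\G_d^{obs}$ is exactly the union of the skeleton and the virtual edges. For the direction that an edge or a virtual edge precludes $d$-separation, a direct edge between $X$ and $Z$ is a one-edge path with no internal vertex, hence can never be $d$-blocked, so no $\mathbf{S}$ separates them. For a virtual edge there is a common child $W\in\Ch{X}{\G}\cap\Ch{Z}{\G}$ with, say, $W\in\Anc{X}{\G}$; fixing an arbitrary $\mathbf{S}\subseteq\V\setminus\{X,Z\}$, I would exhibit an active path. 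Concatenating the edge $Z\to W$ with a shortest directed path $W\to\cdots\to X$ witnessing $W\in\Anc{X}{\G}$ yields a collider-free path from $Z$ to $X$; if none of its internal vertices lies in $\mathbf{S}$ it is active, and otherwise some internal vertex in $\mathbf{S}$ is a descendant of $W$, forcing $W\in\Anc{\mathbf{S}}{\G}$ and thereby activating the collider path $X\to W\gets Z$. Either way $X$ and $Z$ stay connected given $\mathbf{S}$, so $\nsep{X}{Z}{\mathbf{S}}{\G}{d}$ for every $\mathbf{S}$.

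For the converse, which is the crux, I would assume no edge and no virtual edge joins $X$ and $Z$ and show that $\mathbf{S}=\Anc{\{X,Z\}}{\G}\setminus\{X,Z\}$ $d$-separates them. Writing $\A=\Anc{\{X,Z\}}{\G}$, the reduction I rely on is that for every \emph{non-endpoint} vertex $W$ of an $X$--$Z$ path one has $W\in\Anc{\mathbf{S}}{\G}\iff W\in\A$; consequently a path is active given $\mathbf{S}$ exactly when every collider lies in $\A$ and every non-collider lies outside $\A$. Suppose such an active path existed. Its two end-edges must point away from the endpoints (if $Z_1\to X$, then $Z_1$ is a non-collider lying in $\Pa{X}{\G}\subseteq\A$, a contradiction), so the path has the alternating peak--valley shape in which directed chains run into each collider from both sides; writing the colliders (valleys) as $C_1,\dots,C_t$ and the sources (peaks) as $X=P_0,P_1,\dots,P_t=Z$, each peak adjacent to $C_j$ satisfies $P\in\Anc{C_j}{\G}$. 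Since $C_j\in\A=\Anc{X}{\G}\cup\Anc{Z}{\G}$, transitivity forces every peak into $\A$; for $t\ge2$ this puts an intermediate peak $P_1\notin\{X,Z\}$ into $\A$, contradicting that peaks are non-colliders outside $\A$, whereas $t=0$ is impossible because both ends point outward. Hence $t=1$, and the same ancestry argument applied to the first internal vertex of each chain forces both chains to have length one, leaving $X\to C_1\gets Z$ with $C_1\in\Ch{X}{\G}\cap\Ch{Z}{\G}\cap\A$ --- exactly a virtual edge, contradicting the hypothesis. Therefore no active path exists and $\sep{X}{Z}{\mathbf{S}}{\G}{d}$ holds.

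I expect the structural reduction in the converse to be the main obstacle. Cycles make the ancestor bookkeeping delicate, since a vertex can be simultaneously an ancestor and a descendant of $X$, so one must argue carefully that the directed chains feeding each collider genuinely force the forbidden ``$\in\A$'' membership onto the peaks and onto the chains' first vertices. The two technical points I would write out in full are the peak--valley decomposition of an arbitrary $X$--$Z$ path (justifying that between a source and an adjacent collider the edges form a directed chain) and the collapse $\Anc{\mathbf{S}}{\G}\cap(\V\setminus\{X,Z\})=\A\cap(\V\setminus\{X,Z\})$ that underlies the activeness characterization.
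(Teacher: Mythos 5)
The paper does not actually prove this proposition: it is imported verbatim from \cite{richardson1996polynomial} and used as a black box, so there is no in-paper argument to compare against. Your blind proof is, as far as I can check, a correct self-contained derivation. The forward direction is routine, and the converse --- the substantive half --- rests on the right ingredients: the separating set $\mathbf{S}=\Anc{\{X,Z\}}{\G}\setminus\{X,Z\}$, the collapse of ``$W\in\Anc{\mathbf{S}}{\G}$'' to ``$W\in\A$'' for internal vertices (valid because $\A=\Anc{\{X,Z\}}{\G}$ is ancestrally closed and every vertex is its own ancestor under the paper's convention), and the peak--valley decomposition, which correctly forces any putative active path down to the single configuration $X\to C\gets Z$ with $C\in\Ch{X}{\G}\cap\Ch{Z}{\G}\cap\A$, i.e.\ exactly a virtual edge. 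This also isolates where cyclicity enters: in a DAG that configuration is impossible, so your argument specializes to the classical adjacency criterion for DAGs.

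One small repair is needed in the forward direction for virtual edges. The concatenation of $Z\to W$ with a shortest directed path from $W$ to $X$ need not be a simple path, because in a cyclic graph that directed path may pass through $Z$ itself. Either invoke the standard equivalence between $d$-connection along walks and along simple paths, or note that in that case the suffix of the directed path starting at $Z$ is itself a collider-free path from $Z$ to $X$ whose internal vertices are all descendants of $W$, so your dichotomy (an open directed path, or else some internal vertex in $\mathbf{S}$ forces $W\in\Anc{\mathbf{S}}{\G}$ and opens the collider path $X\to W\gets Z$) still applies. With that one-line fix the proof is complete.
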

        For DG $\G$ in Figure \ref{fig: 2a} (Example \ref{example: virtual edge}), there exists a virtual edge between $Y$ and $X_4$ because $X_1 \in \Ch{Y}{\G} \cap \Ch{X_4}{\G}\cap \Anc{\{Y,X_4\}}{\G}$.
        Figure \ref{fig: 2c} depicts $\G_d^{obs}$.

        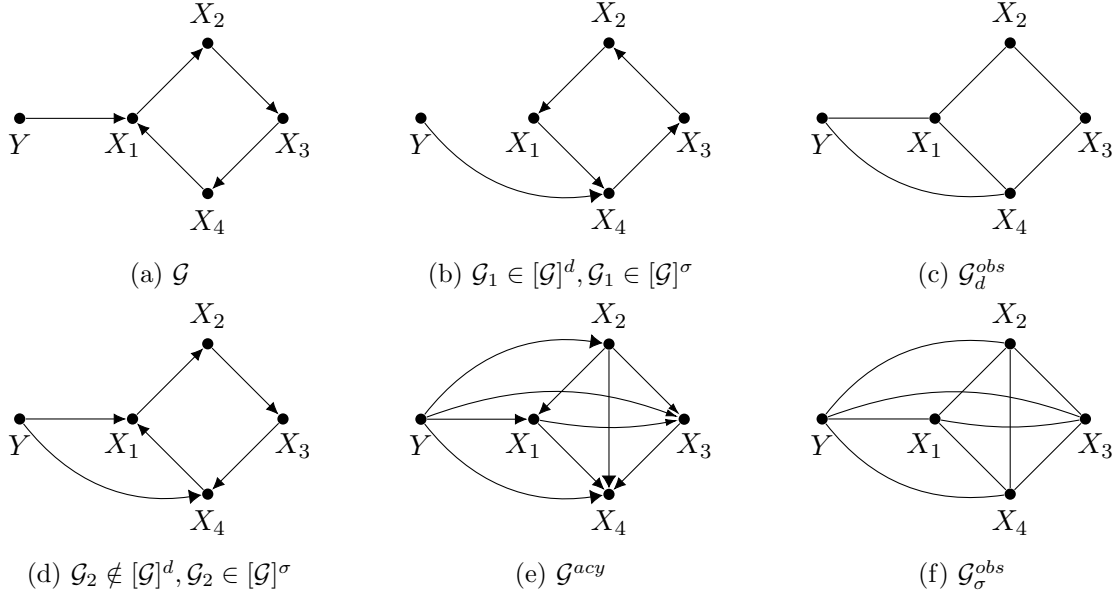
\begin{figure}[t] 
            \centering
            \tikzstyle{block} = [circle, inner sep=1.5pt, fill=black]
            \tikzstyle{input} = [coordinate]
            \tikzstyle{output} = [coordinate]
            \begin{subfigure}[b]{0.3\textwidth}
        	\centering
                \begin{tikzpicture}
                    \tikzset{edge/.style = {->,> = latex',-{Latex[width=1.5mm]}}}
                    % vertices
                    \node[block] (Y) at  (0,0) {};
                    \node[] ()[below=0 of Y]{$Y$};
                    \node[block] (X1) at  (1.5,0) {};
                    \node[] ()[below left =0 and -0.3 of X1]{$X_1$};
                    \node[block] (X2) at  (2.5,1) {};
                    \node[] ()[above=0 of X2]{$X_2$};
                    \node[block] (X3) at  (3.5,0) {};
                    \node[] ()[below right=0 and -0.3 of X3]{$X_3$};
                    \node[block] (X4) at  (2.5,-1) {};
                    \node[] ()[below=0 of X4]{$X_4$};
                    %edges
                    \draw[edge] (Y) to (X1);
                    \draw[edge] (X1) to (X2);
                    \draw[edge] (X2) to (X3);
                    \draw[edge] (X3) to (X4);
                    \draw[edge] (X4) to (X1);
                \end{tikzpicture}
                \caption{$\G$}
                \label{fig: 2a}
            \end{subfigure}
            \hfill
            \begin{subfigure}[b]{0.3\textwidth}
        		\centering
        		\begin{tikzpicture}
                    \tikzset{edge/.style = {->,> = latex',-{Latex[width=1.5mm]}}}
                    % vertices
                    \node[block] (Y) at  (0,0) {};
                    \node[] ()[below=0 of Y]{$Y$};
                    \node[block] (X1) at  (1.5,0) {};
                    \node[] ()[below left =0 and -0.3 of X1]{$X_1$};
                    \node[block] (X2) at  (2.5,1) {};
                    \node[] ()[above=0 of X2]{$X_2$};
                    \node[block] (X3) at  (3.5,0) {};
                    \node[] ()[below right=0 and -0.3 of X3]{$X_3$};
                    \node[block] (X4) at  (2.5,-1) {};
                    \node[] ()[below=0 of X4]{$X_4$};
                    %edges
                    \draw[edge] (X2) to (X1);
                    \draw[edge] (X3) to (X2);
                    \draw[edge] (X4) to (X3);
                    \draw[edge] (X1) to (X4);
                    \path[->] (Y) edge[style = {->,> = latex',-{Latex[width=2mm]}}, bend right=30](X4);
                \end{tikzpicture}
                \caption{$\G_1 \in [\G]^d, \G_1 \in [\G]^{\sigma}$}
                \label{fig: 2b}
            \end{subfigure}
            \hfill
            \begin{subfigure}[b]{0.3\textwidth}
        		\centering
        		\begin{tikzpicture}
                    % vertices
                    \node[block] (Y) at  (0,0) {};
                    \node[] ()[below=0 of Y]{$Y$};
                    \node[block] (X1) at  (1.5,0) {};
                    \node[] ()[below left =0 and -0.3 of X1]{$X_1$};
                    \node[block] (X2) at  (2.5,1) {};
                    \node[] ()[above=0 of X2]{$X_2$};
                    \node[block] (X3) at  (3.5,0) {};
                    \node[] ()[below right=0 and -0.3 of X3]{$X_3$};
                    \node[block] (X4) at  (2.5,-1) {};
                    \node[] ()[below=0 of X4]{$X_4$};
                    %edges
                    \draw[-] (Y) to (X1);
                    \draw[-] (X1) to (X2);
                    \draw[-] (X2) to (X3);
                    \draw[-] (X3) to (X4);
                    \draw[-] (X4) to (X1);
                    \path[-] (Y) edge[style = {-}, bend right=30](X4);
                \end{tikzpicture}
                \caption{$\G_d^{obs}$}
                \label{fig: 2c}
            \end{subfigure}
            
            \begin{subfigure}[b]{0.3\textwidth}
        		\centering
        		\begin{tikzpicture}
                    \tikzset{edge/.style = {->,> = latex',-{Latex[width=1.5mm]}}}
                    % vertices
                    \node[block] (Y) at  (0,0) {};
                    \node[] ()[below=0 of Y]{$Y$};
                    \node[block] (X1) at  (1.5,0) {};
                    \node[] ()[below left =0 and -0.3 of X1]{$X_1$};
                    \node[block] (X2) at  (2.5,1) {};
                    \node[] ()[above=0 of X2]{$X_2$};
                    \node[block] (X3) at  (3.5,0) {};
                    \node[] ()[below right=0 and -0.3 of X3]{$X_3$};
                    \node[block] (X4) at  (2.5,-1) {};
                    \node[] ()[below=0 of X4]{$X_4$};
                    %edges
                    \draw[edge] (Y) to (X1);
                    \draw[edge] (X1) to (X2);
                    \draw[edge] (X2) to (X3);
                    \draw[edge] (X3) to (X4);
                    \draw[edge] (X4) to (X1);
                    \path[->] (Y) edge[style = {->,> = latex',-{Latex[width=2mm]}}, bend right=30](X4);
                \end{tikzpicture}
                \caption{$\G_2 \notin [\G]^d, \G_2 \in [\G]^{\sigma}$}
                \label{fig: 2d}
            \end{subfigure}
            \hfill
            \begin{subfigure}[b]{0.3\textwidth}
        	\centering
                \begin{tikzpicture}
                    \tikzset{edge/.style = {->,> = latex',-{Latex[width=1.5mm]}}}
                    \tikzset{edge1/.style = {->,> = latex',-{Latex[width=1mm]}}}
                    % vertices
                    \node[block] (Y) at  (0,0) {};
                    \node[] ()[below=0 of Y]{$Y$};
                    \node[block] (X1) at  (1.5,0) {};
                    \node[] ()[below left =0 and -0.3 of X1]{$X_1$};
                    \node[block] (X2) at  (2.5,1) {};
                    \node[] ()[above=0 of X2]{$X_2$};
                    \node[block] (X3) at  (3.5,0) {};
                    \node[] ()[below right=0 and -0.3 of X3]{$X_3$};
                    \node[block] (X4) at  (2.5,-1) {};
                    \node[] ()[below=0 of X4]{$X_4$};
                    %edges
                    \draw[edge] (Y) to (X1);
                    \draw[edge] (X2) to (X1);
                    \draw[edge1] (X2) to (X3);
                    \draw[edge] (X3) to (X4);
                    \draw[edge] (X1) to (X4);
                    \path[->] (Y) edge[style = {->,> = latex',-{Latex[width=2mm]}}, bend left=30](X2);
                    \path[->] (Y) edge[style = {->,> = latex',-{Latex[width=2mm]}}, bend right=30](X4);
                    \path[->] (Y) edge[style = {->,> = latex',-{Latex[width=1mm]}}, bend left=20](X3);
                    \path[->] (X1) edge[style = {->,> = latex',-{Latex[width=1mm]}}, bend right=10](X3);
                    \path[->] (X2) edge[style = {->,> = latex',-{Latex[width=2mm]}}](X4);
                \end{tikzpicture}
                \caption{$\G^{acy}$}
                \label{fig: 2e}
            \end{subfigure}
            \hfill
            \begin{subfigure}[b]{0.3\textwidth}
        	\centering
                \begin{tikzpicture}
                    % vertices
                    \node[block] (Y) at  (0,0) {};
                    \node[] ()[below=0 of Y]{$Y$};
                    \node[block] (X1) at  (1.5,0) {};
                    \node[] ()[below left =0 and -0.3 of X1]{$X_1$};
                    \node[block] (X2) at  (2.5,1) {};
                    \node[] ()[above=0 of X2]{$X_2$};
                    \node[block] (X3) at  (3.5,0) {};
                    \node[] ()[below right=0 and -0.3 of X3]{$X_3$};
                    \node[block] (X4) at  (2.5,-1) {};
                    \node[] ()[below=0 of X4]{$X_4$};
                    %edges
                    \draw[-] (Y) to (X1);
                    \draw[-] (X2) to (X1);
                    \draw[-] (X2) to (X3);
                    \draw[-] (X3) to (X4);
                    \draw[-] (X1) to (X4);
                    \path[-] (Y) edge[style = {-}, bend left=30](X2);
                    \path[-] (Y) edge[style = {-}, bend right=30](X4);
                    \path[-] (Y) edge[style = {-}, bend left=20](X3);
                    \path[-] (X1) edge[style = {-}, bend right=10](X3);
                    \path[-] (X2) edge[style = {-}](X4);
                \end{tikzpicture}
                \caption{$\G_{\sigma}^{obs}$}
                \label{fig: 2f}
            \end{subfigure}
            \caption{
            Figures \ref{fig: 2a}, \ref{fig: 2b}, and \ref{fig: 2c} depict a cyclic DG $\G$, a cyclic DG in $[\G]^d$, and undirected graph $\G_d^{obs}$, respectively (Example \ref{example: virtual edge}). 
            The DG in Figure \ref{fig: 2d} belongs to $[\G]^{\sigma}$ but does not belong to $[\G]^d$.
            Figures \ref{fig: 2e} and \ref{fig: 2f} depict a $\sigma$-acyclification of $\G$ and $\G_{\sigma}^{obs}$, respectively (Example \ref{example: acyclification}).}
            \label{fig: 2}
        \end{figure}
        
        \subsubsection{Scenario 2}
            \cite{mooij2020constraint} introduced the notion of $\sigma$-acyclification as follows.
            
            \begin{definition}[$\sigma$-acyclification] \label{def: acyclification}
                Suppose $\G=(\V,\E)$ is a DG. 
                A $\sigma$-acyclification of $\G$ is a DAG $\G'=(\V,\E')$ that satisfies the followings.
                \begin{enumerate}
                    \item For any $X\in \V$ and $Y \in \V \setminus \SCC{X}{\G}$, $(X,Y) \in \E'$ if and only if there exists $Z \in \SCC{Y}{\G}$ such that $(X,Z)\in \E$.
                    \item For any $X\in \V$ and $Y \in \SCC{X}{\G} \setminus \{X\}$, either $(X,Y) \in \E'$ or $(Y,X)\in \E'$.
                \end{enumerate}
            \end{definition}
            Note that $\sigma$-acyclification is not unique since Definition \ref{def: acyclification} does not uniquely orient the edges between two variables in the same SCC.
            \begin{proposition}[\citealt{mooij2020constraint}] \label{prp: acyclification}
                There exists at least one $\sigma$-acyclification of any DG.
                Furthermore, if DAG $\G^{acy}$ is a $\sigma$-acyclification of a DG $\G$, then
                \begin{equation*}
                    \IM{\G}{\sigma} = \IM{\G^{acy}}{}.
                \end{equation*}
            \end{proposition}
            Suppose DAG $\G^{acy}$ is a $\sigma$-acyclification of a DG $\G$.
            Proposition \ref{prp: acyclification} implies that $\G_{\sigma}^{obs}$ is the skeleton of $\G^{acy}$.
            Furthermore, the following corollary pertaining to the skeleton of $\G^{acy}$ follows from the definition of $\sigma$-acyclification.
            \begin{corollary}
                There exists an edge between two distinct variables $X$ and $Y$ in $\G_{\sigma}^{obs}$ if and only if $Y \in \SCC{X}{\G}$ or there exists $Z \in \SCC{X}{\G}$ such that $Y\in \Pa{Z}{\G}$. 
            \end{corollary}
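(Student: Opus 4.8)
The plan is to reduce the statement to a direct reading of the definition of $\sigma$-acyclification. Recall that, as noted just below that definition, $\G_{\sigma}^{obs}$ equals the skeleton of any $\sigma$-acyclification $\G^{acy}=(\V,\E')$ of $\G$: since $\IM{\G}{\sigma}=\IM{\G^{acy}}{}$ and $\G^{acy}$ is a DAG (hence has no virtual edges, because a common child that is an ancestor of an endpoint would create a cycle), Proposition \ref{prp: virtual edge} gives that non-$\sigma$-separable pairs of $\G$ are exactly the adjacent pairs of $\G^{acy}$. In particular this skeleton is intrinsic to $\G$ and does not depend on the (non-unique) choice of acyclification. So it suffices to characterize, in terms of $\G$ alone, when $(X,Y)\in\E'$ or $(Y,X)\in\E'$. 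I would split on whether $X$ and $Y$ lie in the same SCC.

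\textbf{Same SCC.} If $Y\in\SCC{X}{\G}$ (equivalently $\SCC{X}{\G}=\SCC{Y}{\G}$), then condition 2 of the definition of $\sigma$-acyclification immediately yields $(X,Y)\in\E'$ or $(Y,X)\in\E'$, so $X$ and $Y$ are adjacent in $\G_{\sigma}^{obs}$. This is precisely the first disjunct of the claimed condition, so in this case both sides hold.

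\textbf{Distinct SCCs.} If $Y\notin\SCC{X}{\G}$, I apply condition 1 to each ordered pair: $(X,Y)\in\E'$ iff there exists $Z\in\SCC{Y}{\G}$ with $(X,Z)\in\E$, and $(Y,X)\in\E'$ iff there exists $W\in\SCC{X}{\G}$ with $(Y,W)\in\E$. Hence $X$ and $Y$ are adjacent in $\G_{\sigma}^{obs}$ iff $X$ has a directed edge into $\SCC{Y}{\G}$ or $Y$ has a directed edge into $\SCC{X}{\G}$ in $\G$. The remaining task is to reconcile this with the stated condition, namely the existence of $Z\in\SCC{X}{\G}$ that is a neighbor of $Y$ in $\G$.

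\textbf{The main obstacle.} The crux, and where I expect the real work to lie, is exactly this last reconciliation. The condition extracted from the acyclification is \emph{directed} and anchored at the specific endpoints $X$ and $Y$, whereas the statement is phrased through the symmetric \emph{neighbor} relation anchored at $\SCC{X}{\G}$. To bridge them I would exploit that each SCC is strongly connected, so that an edge touching one endpoint can be transported along directed paths inside the relevant component to exhibit a neighbor of the other endpoint; I would treat the two directions separately, the easier one starting from a neighbor $Z\in\SCC{X}{\G}$ of $Y$ and producing an acyclification edge, and the converse starting from an edge into one component and producing such a neighbor. The delicate point is the bookkeeping of edge \emph{orientation} and of \emph{which} component the witnessing vertex is drawn from, since this is precisely where a naive argument can conflate $\SCC{X}{\G}$ and $\SCC{Y}{\G}$; verifying that the routing survives in every orientation case is the step I would scrutinize most carefully before asserting the equivalence.
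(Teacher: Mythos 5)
Your reduction of the corollary to a reading of the $\sigma$-acyclification is sound as far as it goes: the identification of $\G_{\sigma}^{obs}$ with the skeleton of $\G^{acy}$ (via the absence of virtual edges in a DAG), the same-SCC case via condition 2, and the extraction from condition 1 of the characterization ``$X$ and $Y$ in distinct SCCs are adjacent in $\G_{\sigma}^{obs}$ iff $X \in \Pa{\SCC{Y}{\G}}{\G}$ or $Y \in \Pa{\SCC{X}{\G}}{\G}$'' are all correct, and this is exactly the route the paper implicitly takes (it offers no proof beyond asserting the corollary follows from the definition). The gap is the step you flag at the end and leave open: the reconciliation of that directed, endpoint-anchored condition with the corollary's symmetric neighbor condition. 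No transport-along-directed-paths argument can close it, because the two conditions are not equivalent. Take $\V=\{X,X',Y\}$ with $\E=\{(X,X'),(X',X),(X',Y)\}$, so $\SCC{X}{\G}=\{X,X'\}$ and $\SCC{Y}{\G}=\{Y\}$. The corollary's condition holds for the pair $(X,Y)$ via $Z=X'$, a neighbor of $Y$ lying in $\SCC{X}{\G}$. Yet $X$ has no edge into $\{Y\}$ and $Y$ has no edge into $\{X,X'\}$, so neither $(X,Y)$ nor $(Y,X)$ belongs to $\E'$ for any $\sigma$-acyclification; and one checks directly that $\sep{X}{Y}{\{X'\}}{\G}{\sigma}$, since the non-collider $X'$ lies in the conditioning set and the edge $X' \to Y$ leaves $\SCC{X'}{\G}$. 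Hence $X$ and $Y$ are \emph{not} adjacent in $\G_{\sigma}^{obs}$ although the stated condition is satisfied.

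The conclusion is that the characterization you derived from condition 1 is the correct statement, while the corollary as printed is not symmetric in $X$ and $Y$ (so it is not well-posed as a criterion for an undirected edge), and even its symmetrization fails in the ``if'' direction, as the example shows. Only the ``only if'' direction survives: an edge of $\E'$ between distinct SCCs does force one endpoint to be a literal neighbor in $\G$ of some vertex of the other endpoint's SCC. So the remaining work you anticipated is not delicate bookkeeping that a careful routing argument would settle; it is an actual failure of the claimed equivalence, and the proof should instead terminate at the condition ``$Y\in\SCC{X}{\G}$, or $X\in\Pa{\SCC{Y}{\G}}{\G}$, or $Y\in\Pa{\SCC{X}{\G}}{\G}$.''
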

            This corollary describes how $\G_{\sigma}^{obs}$ is obtained from $\G$.
            Note that the skeleton of any DG in $[\G]^{\sigma}$ is a subgraph of $\G_{\sigma}^{obs}$.

            \begin{example} \label{example: acyclification}
                Consider again the example in Figure \ref{fig: 2}.
                It can be shown that $\G$, $\G_1$, and $\G_2$ (Figure \ref{fig: 2d}) do not induce any $\sigma$-separation, i.e., $\IM{\G}{\sigma} = \IM{\G_1}{\sigma}= \IM{\G_2}{\sigma}= \varnothing$ and therefore, $\G_1,\G_2 \in [\G]^{\sigma}$.
                Note that $\G_2 \notin [\G]^d$ because $\sep{Y}{X_2}{X_1,X_4}{\G}{d}$ but $\nsep{Y}{X_2}{X_1,X_4}{\G_2}{d}$.
                Surprisingly, we can construct many other DGs (more than 1000 DGs) that are in $[\G]^\sigma$ but are not in $[\G]^d$.
                Figures \ref{fig: 2e} and \ref{fig: 2f} depict a $\sigma$-acyclification of $\G$ and $\G_{\sigma}^{obs}$, respectively.
            \end{example}  
            To sum up this section, observational distribution does not suffice to distinguish between (i)  actual edges and virtual edges in Scenario 1 and (ii) actual edges, virtual edges, and the additional edges of $\G_{\sigma}^{obs}$ in Scenario 2. 
            Furthermore, $[\G]^{d}$ or $[\G]^{\sigma}$ can contain a large number of graphs with various skeletons, and it is necessary to perform experiments in order to learn the skeleton of $\G$.
        
        \subsection{Singleton Experiments are not Sufficient} \label{subsec: singleton}
                        
            A \emph{singleton experiment} refers to an experiment in which the target set is comprised of a single variable. 
            In  DAGs, the children of a variable could be identified by performing a singleton experiment on it. 
            Hence, the whole graph can be learned by performing singleton experiments on all the variables.
            Herein, we show that this does not hold for cyclic DGs.

            \begin{example} \label{example: singleton}
                Consider the DGs in Figure \ref{fig: 3} and the set of singleton experiments (including the empty set) $\I =\{\varnothing, \{X\}, \{Y\}, \{Z\}\}$.
                For any DG $\G$ in this figure and any experiment $\mathbf{I} \in \I$, $\IM{\G_{\overline{\mathbf{I}}}}{r} = \varnothing$.
                Hence, all of the DGs in Figure \ref{fig: 3} are $\I$-$r$-Markov equivalent.
                It is noteworthy that removing \emph{any} edge in the left DG in Figure \ref{fig: 3} results in a DG in the same $\I$-$r$-Markov equivalent class.
            \end{example}
            This example illustrates that a DG cannot always be learned through singleton experiments, even if they are performed on all variables.

            \begin{figure}[t] 
        	\centering
        	\tikzstyle{block} = [circle, inner sep=1.5pt, fill=black]
        	\tikzstyle{input} = [coordinate]
        	\tikzstyle{output} = [coordinate]
                \begin{subfigure}[b]{0.24\textwidth}
            	\centering
                    \begin{tikzpicture}
                        \tikzset{edge/.style = {->,> = latex',-{Latex[width=1.5mm]}}}
                        % vertices
                        \node[block] (X) at  (0,0) {};
                        \node[] ()[below=0 of X]{$X$};
                        \node[block] (Y) at  (2,0) {};
                        \node[] ()[below=0 of Y]{$Y$};
                        \node[block] (Z) at  (1,1.73) {};
                        \node[] ()[above=0 of Z]{$Z$};
                        %edges
                        \path[->] (X) edge[style = {->,> = latex',-{Latex[width=1.5mm]}}, bend right=20](Z);
                        \path[->] (Z) edge[style = {->,> = latex',-{Latex[width=1.5mm]}}, bend right=20](X);
                        \path[->] (X) edge[style = {->,> = latex',-{Latex[width=1.5mm]}}, bend right=20](Y);
                        \path[->] (Y) edge[style = {->,> = latex',-{Latex[width=1.5mm]}}, bend right=20](X);
                        \path[->] (Z) edge[style = {->,> = latex',-{Latex[width=1.5mm]}}, bend right=20](Y);
                        \path[->] (Y) edge[style = {->,> = latex',-{Latex[width=1.5mm]}}, bend right=20](Z);
                    \end{tikzpicture}
                \end{subfigure}
                \hfill
                \begin{subfigure}[b]{0.24\textwidth}
                    \centering
                    \begin{tikzpicture}
                        \tikzset{edge/.style = {->,> = latex',-{Latex[width=1.5mm]}}}
                        % vertices
                        \node[block] (X) at  (0,0) {};
                        \node[] ()[below=0 of X]{$X$};
                        \node[block] (Y) at  (2,0) {};
                        \node[] ()[below=0 of Y]{$Y$};
                        \node[block] (Z) at  (1,1.73) {};
                        \node[] ()[above=0 of Z]{$Z$};
                        %edges
                        \path[->] (X) edge[style = {->,> = latex',-{Latex[width=1.5mm]}}, bend right=20](Z);
                        \path[->] (Z) edge[style = {->,> = latex',-{Latex[width=1.5mm]}}, bend right=20](X);
                        \draw[edge] (X) to (Y);
                        \path[->] (Z) edge[style = {->,> = latex',-{Latex[width=1.5mm]}}, bend right=20](Y);
                        \path[->] (Y) edge[style = {->,> = latex',-{Latex[width=1.5mm]}}, bend right=20](Z);
                    \end{tikzpicture}
                \end{subfigure}
                \hfill
                \begin{subfigure}[b]{0.24\textwidth}
                    \centering
                    \begin{tikzpicture}
                        \tikzset{edge/.style = {->,> = latex',-{Latex[width=1.5mm]}}}
                        % vertices
                        \node[block] (X) at  (0,0) {};
                        \node[] ()[below=0 of X]{$X$};
                        \node[block] (Y) at  (2,0) {};
                        \node[] ()[below=0 of Y]{$Y$};
                        \node[block] (Z) at  (1,1.73) {};
                        \node[] ()[above=0 of Z]{$Z$};
                        %edges
                        \path[->] (X) edge[style = {->,> = latex',-{Latex[width=1.5mm]}}, bend right=20](Z);
                        \path[->] (Z) edge[style = {->,> = latex',-{Latex[width=1.5mm]}}, bend right=20](X);
                        \draw[edge] (Y) to (X);
                        \path[->] (Z) edge[style = {->,> = latex',-{Latex[width=1.5mm]}}, bend right=20](Y);
                        \path[->] (Y) edge[style = {->,> = latex',-{Latex[width=1.5mm]}}, bend right=20](Z);
                    \end{tikzpicture}
                \end{subfigure}
                \hfill
                \begin{subfigure}[b]{0.24\textwidth}
            	\centering
                    \begin{tikzpicture}
                        \tikzset{edge/.style = {->,> = latex',-{Latex[width=1.5mm]}}}
                        % vertices
                        \node[block] (X) at  (0,0) {};
                        \node[] ()[below=0 of X]{$X$};
                        \node[block] (Y) at  (2,0) {};
                        \node[] ()[below=0 of Y]{$Y$};
                        \node[block] (Z) at  (1,1.73) {};
                        \node[] ()[above=0 of Z]{$Z$};
                        %edges
                        \path[->] (X) edge[style = {->,> = latex',-{Latex[width=1.5mm]}}, bend right=20](Z);
                        \path[->] (Z) edge[style = {->,> = latex',-{Latex[width=1.5mm]}}, bend right=20](X);
                        \path[->] (X) edge[style = {->,> = latex',-{Latex[width=1.5mm]}}, bend right=20](Y);
                        \path[->] (Y) edge[style = {->,> = latex',-{Latex[width=1.5mm]}}, bend right=20](X);
                        \draw[edge] (Y) to (Z);
                    \end{tikzpicture}
                \end{subfigure}
                \caption{Four $\I$-$r$-Markov equivalent DGs, where $\I=\{\varnothing, \{X\}, \{Y\}, \{Z\}\}$ (Example \ref{example: singleton}).}
                \label{fig: 3}
            \end{figure}

\section{Lower Bounds on Number and Size of Experiment Sets} \label{sec: lower bound}
    As we discussed, performing singleton experiments does not suffice for learning a DG in some cases.
    In this section, we provide lower bounds on both the number and size of experiments required to learn a DG in the worst case.
    For any constant $c<n$, we show that among the DGs with maximum SCC size of $c$, there exists a DG $\G$ that is not uniquely identifiable by performing experiments with size less than $\smax{\G}-1$ or conducting less than $\smax{\G}$ experiments, where $\smax{\G}$ denotes the size of the largest SCC of $\G$.
    To this end, we first provide the following lemma.

    \begin{lemma} \label{lemma: fundamental}
        Consider a set of $n$ vertices denoted by $\V$ and a constant $1< c \leq n$.
        For an arbitrary subset $\V_c \subseteq \V$ with $|\V_c|=c$, let $\G = (\V,\E)$, where $\E = \{(X,Y) \mid X,Y \in \V_c, X\neq Y\}$.
        For two arbitrary and distinct variables $X^*$ and $Y^*$ in $\V_c$, let DG $\G'=(\V, \E \setminus \{(X^*,Y^*)\})$.
        For a set of experiments $\I$ on $\V$, $\G' \notin [\G]^r_{\I}$ if and only if 
        \begin{equation} \label{eq: fundamental}
            \exists \mathbf{I} \in \I: \quad \mathbf{I}\cap \V_c = \V_c \setminus \{Y^*\}.
        \end{equation}
    \end{lemma}
    \begin{proof}
        \textbf{Sufficiency:}
        If there exists a subset $\mathbf{I} \in \I$ such that $\mathbf{I}\cap \V_c = \V_c \setminus \{Y^*\}$, after intervening on $\mathbf{I}$, $X^*$ and $Y^*$ are $r$-separable in $\G'$.
        Hence, $\G' \notin [\G]^r_{\I}$.

        \textbf{Necessity:}
        Suppose Equation \eqref{eq: fundamental} does not hold.
        We need to show that $\G' \in [\G]^r_{\I}$.
        That is, for any $\mathbf{I} \in \I$ ($\mathbf{I}$ can be the empty set), we need to show that $\IM{\G_{\overline{\mathbf{I}}}}{r} = \IM{\G'_{\overline{\mathbf{I}}}}{r}$.
        Note that $\G'_{\overline{\mathbf{I}}} \subseteq \G_{\overline{\mathbf{I}}}$ since $\G' \subseteq \G$.
        Hence, $\IM{\G_{\overline{\mathbf{I}}}}{r} \subseteq \IM{\G'_{\overline{\mathbf{I}}}}{r}$.
        Let $(X,Y,\mathbf{S}) \in \IM{\G'_{\overline{\mathbf{I}}}}{r}$. 
        To complete the proof, we will show that $(X,Y,\mathbf{S}) \in \IM{\G_{\overline{\mathbf{I}}}}{r}$.
        
        If $X\in \V \setminus \V_c$ or $Y \in \V \setminus \V_c$, then $(X,Y,\mathbf{S})\in \IM{\G_{\overline{\mathbf{I}}}}{r}$ because the variables in $\V \setminus \V_c$ do not have any neighbors.
        Now, suppose $X,Y \in \V_c$.
        Since the variables in $\V_c \setminus \mathbf{I}$ are neighbors in $\G'$ (note that $X^*$ and $Y^*$ are neighbors because $(Y^*,X^*) \in \E$), at least one of $X$ or $Y$ is in $\mathbf{I}$.
        Without loss of generality, let us assume that $X \in \mathbf{I}$.
        
        Next, we show that $Y$ is also in $\mathbf{I}$.
        Assume by contradiction that $Y \in \V_c \setminus \mathbf{I}$.
        Since $\mathbf{S}$ $r$-separates $X$ and $Y$ in $\G'_{\overline{\mathbf{I}}}$, $(X,Y) \notin \G'$ which implies that $X= X^*$ and $Y=Y^*$.
        In this case, Equation \eqref{eq: fundamental} implies that $\V_c \setminus (\mathbf{I} \cup \{Y\})$ is non-empty.
        Let $Z$ be a variable in $\V_c \setminus (\mathbf{I} \cup \{Y\})$.
        This implies that $Z \in \Ch{X}{\G'_{\overline{\mathbf{I}}}} \cap \Ch{Y}{\G'_{\overline{\mathbf{I}}}} \cap \Pa{Y}{\G'_{\overline{\mathbf{I}}}}$.
        Hence, there is a virtual edge between $X$ and $Y$ in $\G'_{\overline{\mathbf{I}}}$ and therefore, they are not $r$-separable which is a contradiction.
        This implies that $Y$ is in $\mathbf{I}$.
        
        So far we have shown that $X,Y \in \V_c \cap \mathbf{I}$.
        Due to the structure of $\G$, $(X,Y,\mathbf{S})\in \IM{\G_{\overline{\mathbf{I}}}}{r}$ if and only if $\mathbf{S} \cap (\V_c \setminus \mathbf{I}) = \varnothing$.
        Accordingly, to complete the proof, it suffices to show that $\mathbf{S} \cap (\V_c \setminus \mathbf{I}) = \varnothing$.
        Assume by contradiction that there exists a variable $Z_1$ in $ \mathbf{S} \cap (\V_c \setminus \mathbf{I})$.
        In this case, $Z_1=Y^*$ and $X^* \in \{X,Y\}$ because otherwise, $\mathbf{S}$ does not $r$-block path $X \to Z_1 \gets Y$ in $\G'_{\overline{\mathbf{I}}}$. 
        Without loss of generality suppose $X^* = X$.
        Again, Equation \eqref{eq: fundamental} implies that there exists a variable $Z_2$ in $\V_c \setminus (\mathbf{I} \cup \{Z_1\})$.
        Since $\mathbf{S}$ $r$-blocks path $X \to Z_2 \gets Y$ in $\G'_{\overline{\mathbf{I}}}$, then $Z_2 \notin \mathbf{S}$.
        In this case, $\mathbf{S}$ does not $r$-block path $X \to Z_2 \to Z_1 \gets Y$ in $\G'_{\overline{\mathbf{I}}}$ which is a contradiction.
        This shows that $\mathbf{S} \cap (\V_c \setminus \mathbf{I}) = \varnothing$ and therefore, $(X,Y,\mathbf{S})\in \IM{\G_{\overline{\mathbf{I}}}}{r}$, which completes the proof.
    \end{proof}

    We now present two important consequences of Lemma \ref{lemma: fundamental}.
    
    \begin{theorem} \label{thm: lower bound - size}
        Consider a set of $n$ vertices denoted by $\V$ and a constant $1< c \leq n$.
        There exists a DG $\G$ over $\V$ with $\smax{\G}=c$ such that for any set of experiments $\I$ on $\V$, if
        \begin{equation} \label{eq: lower bound - size}
            |\mathbf{I}|<\smax{\G}-1, \quad \forall \mathbf{I} \in \I, 
        \end{equation}
        then $|[\G]^d_{\I}|>1$ and $|[\G]^{\sigma}_{\I}|>1$.
    \end{theorem}
    \begin{proof}
        Consider the constructed DGs $\G$ and $\G^*$ in Lemma \ref{lemma: fundamental}.
        Note that $\smax{\G} = c$.
        If Equation \eqref{eq: lower bound - size} holds for a set of experiments $\I$, then Equation \eqref{eq: fundamental} does not hold.
        Therefore, Lemma \ref{lemma: fundamental} implies that $\G' \in [\G]^r_{\I}$, which completes the proof.
    \end{proof}
    \begin{corollary}
        In the worst case, DG $\G$ cannot be learned by any algorithm (adaptive or non-adaptive) that performs experiments with size less than $\smax{\G}-1$ for both scenarios described in Section \ref{sec:desc}.
    \end{corollary}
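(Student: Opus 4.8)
The plan is to obtain the corollary directly from Theorem \ref{thm: lower bound - size} by choosing the right collection of experiments and then handling adaptivity through a coupling argument. First I would fix the DG $\G$ with $\smax{\G}=c$ furnished by the theorem and apply the theorem to the collection $\I^{*} = \{\mathbf{I}\subseteq \V : |\mathbf{I}| < \smax{\G}-1\}$ of \emph{all} admissible experiments. Since every element of $\I^{*}$ has size below $\smax{\G}-1$, the theorem gives $|[\G]^{d}_{\I^{*}}| > 1$ and $|[\G]^{\sigma}_{\I^{*}}| > 1$, so there is a single DG $\G'\neq\G$ with $\IM{\G_{\overline{\mathbf{I}}}}{r} = \IM{\G'_{\overline{\mathbf{I}}}}{r}$ for every target set $\mathbf{I}$ of size less than $\smax{\G}-1$ and for $r\in\{d,\sigma\}$. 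Everything then reduces to showing that no procedure restricted to such experiments can tell $\G$ and $\G'$ apart.

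The non-adaptive case is immediate. A non-adaptive algorithm commits to some $\I\subseteq\I^{*}$ in advance. Under Scenario 1 (resp.\ Scenario 2), faithfulness together with the $d$- (resp.\ $\sigma$-) Markov property makes the conditional independencies read off from $P_{do(\mathbf{I})}$ coincide exactly with the $r$-separations of $\G_{\overline{\mathbf{I}}}$; because these agree with those of $\G'_{\overline{\mathbf{I}}}$ for every $\mathbf{I}\in\I$, the algorithm receives identical CI information whether the ground truth is $\G$ or $\G'$, and so must return the same answer in both worlds, failing on at least one.

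For the adaptive case I would run the algorithm in parallel on the two hypotheses $\G$ and $\G'$ and prove by induction on the number of rounds that the two runs stay coupled, i.e.\ perform the same experiment and observe the same CI structure at every step. The base case is the (possibly observational) first experiment $\mathbf{I}_1$, which has size $<\smax{\G}-1$ by assumption; since $\G$ and $\G'$ are $\{\mathbf{I}_1\}$-$r$-Markov equivalent, the observed CI pattern is identical. For the inductive step, identical observations through round $t$ force the adaptive rule to select the same $\mathbf{I}_{t+1}$, which again has admissible size and hence again yields matching CI structure. Consequently the algorithm executes the same sequence of experiments and produces the same output on $\G$ and on $\G'$, so it cannot correctly identify both; this establishes the claim for adaptive algorithms and for both scenarios simultaneously.

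I expect the main obstacle to be making the coupling rigorous at the level of \emph{distributions} rather than \emph{independence models}. The theorem speaks about $r$-independence models, whereas an algorithm formally observes the full interventional distributions $P_{do(\mathbf{I})}$, and two graphs with the same $r$-separations need not induce literally equal distributions. Closing this gap requires invoking the scenario assumptions to argue that the only learning-relevant signal is the CI pattern --- which by faithfulness plus the Markov property equals the $r$-separation relation --- or, for a fully distributional statement, exhibiting explicit simple SCMs realizing $\G$ and $\G'$ whose interventional distributions agree on every conditional independence test over the relevant conditioning sets. Once the observable CI structure is identified with $r$-separation, the inductive coupling closes and the corollary follows.
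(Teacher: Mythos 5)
Your proposal is correct and takes essentially the same route as the paper, which states this corollary as an immediate consequence of Theorem \ref{thm: lower bound - size}: instantiating the theorem with the collection of \emph{all} admissible target sets to get a single $\G'\neq\G$ indistinguishable from $\G$ under every small experiment, and then coupling the adaptive algorithm's runs on $\G$ and $\G'$, is exactly the standard way to make that implication rigorous. The distributional gap you flag at the end is already absorbed by the paper's problem formulation --- identifiability is defined via $[\G]^r_{\I}=\{\G\}$, and the scenario assumptions (Markov property plus faithfulness) equate the observable CI pattern of $P_{do(\mathbf{I})}$ with the $r$-separations of $\G_{\overline{\mathbf{I}}}$ --- so no explicit construction of matching SCMs is needed.
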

    \begin{theorem} \label{thm: lower bound - number}
        Consider a set of $n$ vertices denoted by $\V$ and a constant $1< \!c \leq n$. 
        There exists a DG $\G$ over $\V$ with $\smax{\G}=c$ such that for any set of experiments $\I$ on $\V$, if $|\I|< \smax{\G}$, then, $|[\G]^d_{\I}|>1$ and $|[\G]^{\sigma}_{\I}|>1$.
    \end{theorem}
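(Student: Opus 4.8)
The plan is to reuse the hard instance from the proof of Theorem~\ref{thm: lower bound - size}: fix $\V_c \subseteq \V$ with $|\V_c| = c$, let $\G = (\V, \E)$ be the complete bidirected clique on $\V_c$ (so $\E = \{(X,Y) : X,Y \in \V_c,\, X \neq Y\}$) with every vertex of $\V \setminus \V_c$ isolated, whence $\smax{\G} = c$. The candidate alternatives are again the single-edge deletions $\G' = (\V, \E \setminus \{(X^*,Y^*)\})$ with $X^*, Y^* \in \V_c$. To each experiment $\mathbf{I} \in \I$ I attach its \emph{missing set} $M(\mathbf{I}) := \V_c \setminus \mathbf{I}$, and the whole proof turns on the observation that an experiment can expose the deletion of an edge pointing into $Y^*$ only when $M(\mathbf{I}) = \{Y^*\}$. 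Granting this, a counting argument closes the proof: an experiment with $|M(\mathbf{I})| \neq 1$ exposes no deletion, while one with $M(\mathbf{I}) = \{Y^*\}$ can serve only the single vertex $Y^*$. Since $|\I| < c = |\V_c|$, by pigeonhole there is a vertex $Y^* \in \V_c$ with $M(\mathbf{I}) \neq \{Y^*\}$ for \emph{every} $\mathbf{I} \in \I$; I call such a $Y^*$ \emph{unserved}.

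Fix an unserved $Y^*$ and an arbitrary $X^* \in \V_c \setminus \{Y^*\}$, and set $\G' = (\V, \E \setminus \{(X^*, Y^*)\})$. The core claim to establish is that $\IM{\G_{\overline{\mathbf{I}}}}{r} = \IM{\G'_{\overline{\mathbf{I}}}}{r}$ for every $\mathbf{I} \in \I$, which gives $\G' \in [\G]^r_\I$ and hence $|[\G]^r_\I| > 1$. Because $\G' \subseteq \G$ yields $\IM{\G_{\overline{\mathbf{I}}}}{r} \subseteq \IM{\G'_{\overline{\mathbf{I}}}}{r}$ automatically, only the reverse inclusion needs work, and I would split on $M(\mathbf{I})$. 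If $Y^* \in \mathbf{I}$, then the deleted edge $(X^*, Y^*)$ is an incoming edge of $Y^* \in \mathbf{I}$, so it is already removed by the intervention in $\G_{\overline{\mathbf{I}}}$; hence $\G_{\overline{\mathbf{I}}} = \G'_{\overline{\mathbf{I}}}$ as graphs and their $r$-independence models coincide. Otherwise $Y^* \notin \mathbf{I}$, i.e. $Y^* \in M(\mathbf{I})$, and since $Y^*$ is unserved we have $M(\mathbf{I}) \neq \{Y^*\}$, forcing $|M(\mathbf{I})| \geq 2$.

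In this second case I would transfer the argument of Theorem~\ref{thm: lower bound - size} essentially verbatim, the one substitution being that its repeated appeal to the size bound $|\mathbf{I}| < c-1$ (which there guarantees $|M(\mathbf{I})| \geq 2$ for every experiment) is now replaced by the condition $|M(\mathbf{I})| \geq 2$ coming from unservedness. Concretely, taking any $(A,B,\mathbf{S}) \in \IM{\G'_{\overline{\mathbf{I}}}}{r}$, the same reasoning shows $A,B$ must lie in $\V_c \cap \mathbf{I}$ with $\mathbf{S} \cap (\V_c \setminus \mathbf{I}) = \varnothing$, after which the triple is also an $r$-separation in $\G_{\overline{\mathbf{I}}}$. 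The pivotal step is the virtual-edge obstruction: picking a witness $W \in M(\mathbf{I}) \setminus \{Y^*\}$ (nonempty since $|M(\mathbf{I})| \geq 2$, and $W \neq X^*$ whenever $X^* \in \mathbf{I}$), the edges $(X^*,W)$, $(Y^*,W)$ and $(W,Y^*)$ all survive in $\G'_{\overline{\mathbf{I}}}$ because $W, Y^* \notin \mathbf{I}$, so $W \in \Ch{X^*}{\G'_{\overline{\mathbf{I}}}} \cap \Ch{Y^*}{\G'_{\overline{\mathbf{I}}}} \cap \Anc{\{X^*,Y^*\}}{\G'_{\overline{\mathbf{I}}}}$, i.e. a virtual edge connects $X^*$ and $Y^*$. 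By Proposition~\ref{prp: virtual edge} they are not $d$-separable, and since $\sigma$-separation implies $d$-separation (Remark~\ref{rem: d-sigma}), they are not $\sigma$-separable either; thus the deletion creates no new separation in either scenario. The same witness $W$ also supplies the auxiliary relay vertex $Z_2$ used to kill any $Z_1 \in \mathbf{S} \cap (\V_c \setminus \mathbf{I})$ via the path $A \to Z_2 \to Z_1 \gets B$.

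I expect the main obstacle to be bookkeeping rather than a new idea: one must verify that removing a single edge into an unserved $Y^*$ produces \emph{no} new $r$-separation for \emph{any} triple $(A,B,\mathbf{S})$ — not merely for the pair $(X^*, Y^*)$ — and that in every subcase the bound $|M(\mathbf{I})| \geq 2$ furnishes each auxiliary vertex demanded by the Theorem~\ref{thm: lower bound - size} argument, simultaneously for $r = d$ and $r = \sigma$. Once the dictionary ``size bound $|\mathbf{I}| < c-1$ $\longleftrightarrow$ at least two missing vertices in $M(\mathbf{I})$'' is in place, the separation-by-separation verification carries over unchanged, and the pigeonhole step then delivers both $|[\G]^d_\I| > 1$ and $|[\G]^\sigma_\I| > 1$.
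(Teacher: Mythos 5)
Your proposal is correct and follows essentially the same route as the paper's proof: the same complete-clique construction on $\V_c$, the same pigeonhole argument (your ``unserved vertex $Y^*$ with $M(\mathbf{I})\neq\{Y^*\}$ for all $\mathbf{I}$'' is exactly the paper's index $i^*$ with $\mathbf{I}_{i^*}\neq\mathbf{I}\cap\V_c$), the same single-edge deletion into that vertex, and the same virtual-edge and relay-vertex arguments with the size bound of Theorem~\ref{thm: lower bound - size} replaced by $|M(\mathbf{I})|\geq 2$. Your explicit easy case $Y^*\in\mathbf{I}$ (where $\G_{\overline{\mathbf{I}}}=\G'_{\overline{\mathbf{I}}}$) and the explicit appeal to Remark~\ref{rem: d-sigma} for the $\sigma$-case are minor presentational refinements of the same argument.
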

    \begin{proof}
        Consider the constructed DG $\G$ in Lemma \ref{lemma: fundamental}.
        Note that $\smax{\G} = c$.
        Since $|\I|< \smax{\G}$, there exists $Y^* \in \V_c$ such that Equation \eqref{eq: fundamental} does not hold.
        Let $X^*$ be an arbitrary variable in $\V_c \setminus \{Y^*\}$, and let $\G'$ denote the DG obtained by removing the edge $(X^*, Y^*)$ from $\G$.
        In this case, Lemma \ref{lemma: fundamental} implies that $\G' \in [\G]^r_{\I}$, which concludes the proof.
    \end{proof}

    \begin{corollary}
        At least $\smax{\G}$ experiments are required to learn $\G$ in the worst case.
    \end{corollary}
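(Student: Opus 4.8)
The plan is to obtain the corollary as a direct consequence of Theorem \ref{thm: lower bound - number}, the only additional work being to translate the statement $|[\G]^r_{\I}|>1$ into the language of identification and to argue that the bound is robust to adaptivity. I would fix $c=\smax{\G}$ and take $\G$ to be the worst-case graph guaranteed by Theorem \ref{thm: lower bound - number}, namely the DG whose largest SCC is a complete digraph on $c$ vertices with the remaining vertices isolated. The theorem then guarantees that for every experiment collection $\I$ with $|\I|<\smax{\G}$ there is a second DG $\G'\neq\G$ with $\G'\in[\G]^r_{\I}$, and that the \emph{same} $\G'$ works for $r=d$ (Scenario 1) and $r=\sigma$ (Scenario 2).

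Next I would spell out why $\G'\in[\G]^r_{\I}$ precludes unique identification. By the definition of $\I$-$r$-Markov equivalence, $\IM{\G_{\overline{\mathbf{I}}}}{r}=\IM{\G'_{\overline{\mathbf{I}}}}{r}$ for every $\mathbf{I}\in\I$, so $\G$ and $\G'$ induce exactly the same $r$-separation statements under each performed experiment. Under the faithfulness assumptions of Scenario 1 or Scenario 2 these $r$-separations coincide with the conditional independencies of the corresponding interventional distributions, so the evidence available to any procedure operating on $\I$ is identical for $\G$ and $\G'$. Since $\G'\neq\G$, the graph is therefore not uniquely identified. For a \emph{non-adaptive} design, where $\I$ is fixed in advance, this already shows that fewer than $\smax{\G}$ experiments are insufficient in the worst case.

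The step I expect to require the most care is the \emph{adaptive} case, where targets are chosen sequentially in response to the observed independence structure, so $\I$ is determined by the run rather than fixed a priori. I would argue by coupling two trajectories. Run any adaptive algorithm performing at most $\smax{\G}-1$ experiments on the true graph $\G$, and let $\I=\{\mathbf{I}_1,\dots,\mathbf{I}_m\}$ with $m<\smax{\G}$ be the realized targets, with output $\hat{\G}$. Applying Theorem \ref{thm: lower bound - number} to this particular $\I$ yields $\G'\in[\G]^r_{\I}$ with $\G'\neq\G$, so $\IM{\G_{\overline{\mathbf{I}_t}}}{r}=\IM{\G'_{\overline{\mathbf{I}_t}}}{r}$ for all $t\leq m$. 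Running the \emph{same} algorithm on $\G'$ produces the identical first target $\mathbf{I}_1$ (no data has yet been seen), the identical observation $\IM{\G'_{\overline{\mathbf{I}_1}}}{r}=\IM{\G_{\overline{\mathbf{I}_1}}}{r}$, and hence, by induction on $t$, the identical targets and identical final output $\hat{\G}$. Since $\hat{\G}$ cannot equal both $\G$ and $\G'$, the algorithm errs on at least one of them, proving that no adaptive scheme using fewer than $\smax{\G}$ experiments identifies $\G$ in the worst case. Because the alternative $\G'$ supplied by Theorem \ref{thm: lower bound - number} lies in both $[\G]^d_{\I}$ and $[\G]^{\sigma}_{\I}$, the same argument covers Scenario 1 and Scenario 2 verbatim.
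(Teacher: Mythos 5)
Your proposal is correct and follows the paper's (implicit) route exactly: the corollary is stated without proof as an immediate consequence of Theorem \ref{thm: lower bound - number}, which already fixes the worst-case graph before quantifying over $\I$, so your reading-off of non-identifiability from $|[\G]^r_{\I}|>1$ is precisely what is intended. Your additional coupling argument for the adaptive case is a correct and standard elaboration of a point the paper leaves tacit, and it works here because the theorem fixes $\G$ first and only then quantifies over the (realized) experiment collection $\I$.
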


\section{Unbounded-size Experiment Design} \label{sec: unbounded size}
    In this section, we propose a two-stage experiment design algorithm for learning a DG $\G$ (potentially cyclic) when there is no constraint on the size of the designed experiments.
    In the first stage, we design a set of experiments for learning the descendant sets of the variables and the strongly connected components (SCC) of $\G$.
    In the second stage and based on the findings of the first stage, we design further experiments to exactly recover $\G$.
    
    \subsection{Stage 1: Colored Separating System} \label{sec: stage 1}
        In this section, we introduce the first stage of our approach for learning the descendant sets $\{\De{X}{\G}\}_{X\in \V}$ and the set of SCCs $\mathcal{S} = \{\mathbf{S}_1,\dots,\mathbf{S}_k \}$ of $\G$.
        This stage is based on performing experiments on certain subsets of $\V$ that form a \emph{colored separating system}.
        
        \begin{definition}[Colored separating system]\label{sep} \label{def: colored sep sys}
            Suppose $\V = \{X_1,\cdots, X_n\}$ and let $\mathcal{C} = \{C_1,\cdots, C_n\}$ be an arbitrary coloring for $\V$.
            A colored separating system $\I$ on $(\V,\mathcal{C})$ is a collection of subsets of $\V$ such that for every distinct ordered pair of variables $(X_i,X_j)$ in $\V$, if $C_i \neq C_j$, then there exists $\mathbf{I} \in \I$ such that $X_i\in \mathbf{I}$ and $X_j \notin \mathbf{I}$.
        \end{definition} 
        We note that similar definitions have been proposed in the literature.
        For instance, \cite{katona1966separating} introduced \emph{separating systems}, a special case of colored separating system, where $\mathcal{C}$ must contain $n$ different colors.
        
        In the following, we provide an achievable bound on the cardinality of a colored separating system.
        
        \begin{proposition} \label{prp: colored separating system}
            There exists a colored separating system on $(\V,\mathcal{C})$ with at most $2\lceil \log_2(\chi)\rceil$ elements, where $\chi$ is the number of colors in $\mathcal{C}$.
        \end{proposition}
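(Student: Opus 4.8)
The plan is to build an explicit colored separating system by encoding the colors as binary strings and taking, for each bit position, the two subsets of $\V$ that bit induces. Concretely, set $L = \lceil \log_2 \chi \rceil$. Since $\mathcal{C}$ uses exactly $\chi$ distinct colors and $2^L \geq \chi$, I can fix an injection from the set of colors into $\{0,1\}^L$, assigning to each color $c$ a codeword $s(c) = (s_1(c),\dots,s_L(c))$ so that distinct colors receive distinct codewords. For each bit position $b \in \{1,\dots,L\}$ and each value $v \in \{0,1\}$, I define
\[
\mathbf{I}_b^v = \{X_i \in \V : s_b(C_i) = v\},
\]
and take $\I = \{\mathbf{I}_b^v : 1 \le b \le L,\ v \in \{0,1\}\}$, a collection of at most $2L = 2\lceil \log_2 \chi\rceil$ subsets of $\V$.

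Next I would verify the defining separating property. Fix a distinct ordered pair $(X_i, X_j)$ with $C_i \neq C_j$. Because the encoding is injective, the codewords $s(C_i)$ and $s(C_j)$ differ in at least one coordinate $b$. Letting $v = s_b(C_i)$, we then have $s_b(C_j) = 1-v$, so by construction $X_i \in \mathbf{I}_b^{v}$ while $X_j \notin \mathbf{I}_b^{v}$. Hence $\mathbf{I}_b^{v} \in \I$ witnesses the separation demanded by Definition \ref{def: colored sep sys}, and therefore $\I$ is a colored separating system on $(\V,\mathcal{C})$ of the claimed cardinality.

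The single point that genuinely forces the factor of two — and the step I would flag as the crux — is that the condition is imposed on \emph{ordered} pairs: we must produce a set that contains $X_i$ \emph{and} excludes $X_j$, not merely a set separating the two in some unspecified direction. Keeping only the ``bit equal to $1$'' sets (the $\lceil \log_2 \chi\rceil$ sets that already suffice for an ordinary, unordered separating system) would fail, for instance, whenever $s(C_i)$ is the all-zeros codeword, since then no such set contains $X_i$ at all. Including both $\mathbf{I}_b^{0}$ and $\mathbf{I}_b^{1}$ for every bit is precisely what repairs this, at the cost of doubling the count, and the verification above shows it is sufficient. Finally I would remark that the bound $|\I| \le 2\lceil \log_2 \chi \rceil$ is what is asserted, with only further savings possible when some $\mathbf{I}_b^{v}$ coincide or are empty, which can only decrease the size.
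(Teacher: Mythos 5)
Your construction is correct and is essentially the same as the paper's: both encode the colors in binary with $\lceil\log_2\chi\rceil$ bits and, for each bit position, include both the set of vertices whose color has that bit equal to $1$ and its complement-by-color, then separate an ordered pair by locating a differing bit. Your added remark about why the ordered-pair requirement forces the factor of two is a nice clarification but does not change the argument.
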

        \begin{proof}
            Suppose $\V = \{X_1,\cdots, X_n\}$ and let $l = \lceil \log_2(\chi)\rceil$. 
            Suppose  $\mathcal{C} = \{C_1,\cdots, C_n\}$, where $C_i \in \{1, \cdots, \chi \}$.
            For $1\leq i \leq l$, let $\N_i$ be the subset of numbers in $\{1,2,\cdots,\chi \}$ whose $i$-th bit in binary representation equals to 1.
            We now construct subsets $\mathbf{I}^1_i,\mathbf{I}^2_i \subseteq \V$ for each $1\leq i \leq l$ as follows:
            
            \begin{equation} \label{eq: colored sep sys}
                \mathbf{I}^1_i = \{X_j \in \V \mid C_j \in \N_i\}, \hspace{1cm}
                \mathbf{I}^2_i = \{X_j \in \V \mid C_j \notin \N_i\}. 
            \end{equation}
            Let $(X_a,X_b)$ be an ordered pair of distinct variables in $\V$ such that $C_a \neq C_b$.
            In this case, there exists $1\leq i \leq l$ such that the $i$-th  bit of $C_a$ and $C_b$ are different in binary representation.
            There are two cases:
            \begin{itemize}
                \item The $i$-bit of $C_a$ in binary representation is 1: In this case $X_a \in \mathbf{I}^1_i$ and $X_b \notin \mathbf{I}^1_i$. 
                \item The $i$-bit of $C_a$ in binary representation is 0: In this case $X_a \in \mathbf{I}^2_i$ and $X_b \notin \mathbf{I}^2_i$. 
            \end{itemize}
            This shows that $\I = \{\mathbf{I}^1_i\}_{i=1}^l \cup \{\mathbf{I}^2_i\}_{i=1}^l$ is a colored separating set on $(\V, \mathcal{C})$. 
            Note that $|\I| = 2l =2\lceil \log_2(\chi)\rceil$. 
        \end{proof}
        \begin{remark}
            The proof of Proposition \ref{prp: colored separating system} is constructive. 
            That is, with Equation \eqref{eq: colored sep sys} we can obtain a colored separating system on $(\V,\mathcal{C})$ with at most $2\lceil \log_2(\chi)\rceil$ elements.
        \end{remark}

        \begin{algorithm}[t]
            \caption{Learning descendant sets and strongly connected components}
            \label{algo: SCC}
            \begin{algorithmic}[1]
                \STATE Learn $\G_r^{obs}$ using observational data
                \STATE $\mathcal{C} \gets $ A vertex coloring for $\G_r^{obs}$
                \STATE $\I \gets $ Construct a colored separating system on $(\V,\mathcal{C})$
                \FOR{$X \in \V$}
                    \STATE $\I_X \gets \{\mathbf{I} \in \I\!:\: X \in \mathbf{I} \}$
                    \STATE Initialize $\D_X$ with an empty set
                    \FOR{$\mathbf{I} \in \I_X$}
                        \STATE Add the elements of $\{Y \in \Ne{X}{\G_r^{obs}} \!:\:\nmarginsep{X}{Y}{P_{do(\mathbf{I})}}{}\}$ to $\D_X$
                    \ENDFOR
                \ENDFOR
                \STATE Construct DG $\mathcal{H}$ by adding directed edges from $X$ to $\D_X$ for each $X \in \V$
                \STATE $\{\De{X}{\G}\}_{X\in \V}, \mathcal{S} = \{\mathbf{S}_1,\dots,\mathbf{S}_k\} \gets $ Compute descendant sets and SCCs of $\mathcal{H}$
                \STATE \textbf{Return} $\{\De{X}{\G}\}_{X\in \V}, \mathcal{S} = \{\mathbf{S}_1,\dots,\mathbf{S}_k\} $
            \end{algorithmic}
        \end{algorithm}
        
        Equipped with Proposition \ref{prp: colored separating system}, we present Algorithm \ref{algo: SCC} for finding the descendant sets and the set of SCCs in $\G$.
        At first, the algorithm learns $\G_r^{obs}$ from observational data using existing methods such as the one proposed by \cite{ghassami2020characterizing}.
        For more information on this step, please see Section \ref{subsec: implementation}.
        In lines 2 and 3, it learns a coloring of $\G_r^{obs}$ and subsequently, it constructs a colored separating system on $(\V,\mathcal{C})$ (using Proposition \ref{prp: colored separating system}).
        One way to color  $\G^{obs}_r$ is by using the \emph{trail-path} algorithm described in \cite{bandyopadhyay2020graph}.

       \begin{example}[Colored separating system] \label{example: colored sep sys}
            Consider DG $\G$ in Figure \ref{fig: 4a} over the set of variables $\V = \{X_1,X_2,X_3,X_4\} \cup \{Y_1,Y_2,Y_3,Y_4\} \cup \{Z_1,Z_2,Z_3,Z_4\}$.
            DG $\G$ is cyclic with three SCCs $\mathbf{S}_1 = \{X_1,X_2,X_3,X_4\}$, $\mathbf{S}_2 = \{Y_1,Y_2,Y_3,Y_4\}$, and $\mathbf{S}_3 = \{Z_1,Z_2,Z_3,Z_4\}$.
            In Scenario 1, i.e., when CIs are equivalent to $d$-separations, Algorithm \ref{algo: SCC} learns the undirected graph $\G_d^{obs}$ from observational data, which is depicted in Figure \ref{fig: 4b}.
            Recall that $\G_d^{obs}$ includes the virtual edges (red edges) and the edges of the skeleton of $\G$ (black edges).
            A coloring for $\G_d^{obs}$ with four colors is shown in Figure \ref{fig: 4b}.
            Specifically, $\{X_2, X_4, Z_2, Z_4\}$, $\{X_1,X_3,Z_1,Z_3\}$, $\{Y_1,Y_3\}$, and $\{Y_2,Y_4\}$ comprise the set of variables with the same color.
            Using this coloring, Proposition \ref{prp: colored separating system} constructs the following colored separating system of size $2\lceil \log_2(4)\rceil = 4$:
            \begin{equation*}
            \begin{split}
                \I = \{
                &\{X_1,X_3,Y_1,Y_3,Z_1,Z_3\},
                \{X_2,X_4,Y_2,Y_4,Z_2,Z_4\},\\
                &\{X_1,X_3,Y_2,Y_4,Z_1,Z_3\},
                \{X_2,X_4,Y_1,Y_3,Z_2,Z_4\}
                \}.
            \end{split}
            \end{equation*}
        \end{example}
        
        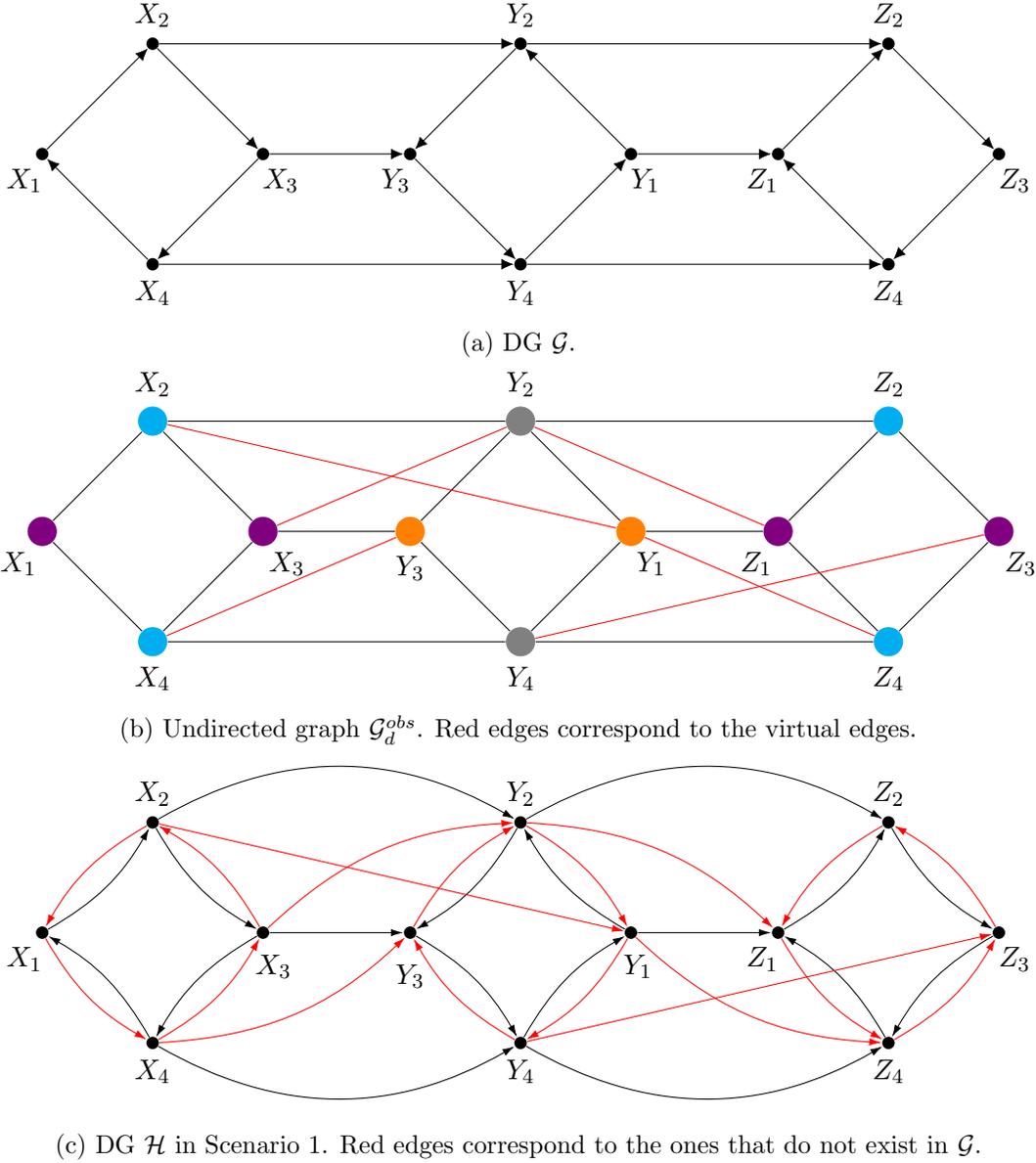
\begin{figure}[t] 
            \centering
            \tikzstyle{block} = [circle, inner sep=1.7pt, fill=black]
            \tikzstyle{c1} = [circle, inner sep=4pt, fill=violet]
            \tikzstyle{c2} = [circle, inner sep=4pt, fill=cyan]
            \tikzstyle{c3} = [circle, inner sep=4pt, fill=orange]
            \tikzstyle{c4} = [circle, inner sep=4pt, fill=gray]
        
            \begin{subfigure}[b]{\textwidth}
                \centering
                \begin{tikzpicture}
                    \tikzset{edge/.style = {->,> = latex',-{Latex[width=1.5mm]}}}
                    % vertices
                    \node[block] (X1) at  (0,0) {};
                    \node[] ()[below left=0 and -.2 of X1]{$X_1$};
                    \node[block] (X2) at  (1.5,1.5) {};
                    \node[] ()[above=0 of X2]{$X_2$};
                    \node[block] (X3) at  (3,0) {};
                    \node[] ()[below right=0 and -.2  of X3]{$X_3$};
                    \node[block] (X4) at  (1.5,-1.5) {};
                    \node[] ()[below=0 of X4]{$X_4$};
        
                    \node[block] (Y1) at  (8,0) {};
                    \node[] ()[below right=0 and -.2 of Y1]{$Y_1$};
                    \node[block] (Y2) at  (6.5,1.5) {};
                    \node[] ()[above=0 of Y2]{$Y_2$};
                    \node[block] (Y3) at  (5,0) {};
                    \node[] ()[below left=0 and -.2 of Y3]{$Y_3$};
                    \node[block] (Y4) at  (6.5,-1.5) {};
                    \node[] ()[below=0 of Y4]{$Y_4$};
        
                    \node[block] (Z1) at  (10,0) {};
                    \node[] ()[below left=0 and -.2 of Z1]{$Z_1$};
                    \node[block] (Z2) at  (11.5,1.5) {};
                    \node[] ()[above=0 of Z2]{$Z_2$};
                    \node[block] (Z3) at  (13,0) {};
                    \node[] ()[below right=0 and -.2  of Z3]{$Z_3$};
                    \node[block] (Z4) at  (11.5,-1.5) {};
                    \node[] ()[below=0 of Z4]{$Z_4$};
                    %edges
                    \draw[edge] (X1) to (X2);
                    \draw[edge] (X2) to (X3);
                    \draw[edge] (X3) to (X4);
                    \draw[edge] (X4) to (X1);
        
                    \draw[edge] (Y1) to (Y2);
                    \draw[edge] (Y2) to (Y3);
                    \draw[edge] (Y3) to (Y4);
                    \draw[edge] (Y4) to (Y1);
        
                    \draw[edge] (Z1) to (Z2);
                    \draw[edge] (Z2) to (Z3);
                    \draw[edge] (Z3) to (Z4);
                    \draw[edge] (Z4) to (Z1);
        
                    \draw[edge] (X2) to (Y2);
                    \draw[edge] (X3) to (Y3);
                    \draw[edge] (X4) to (Y4);

                    \draw[edge] (Y1) to (Z1);
                    \draw[edge] (Y2) to (Z2);
                    \draw[edge] (Y4) to (Z4);
                \end{tikzpicture}
                \caption{DG $\G$.}
                \label{fig: 4a}
            \end{subfigure}
        
            \begin{subfigure}[b]{\textwidth}
                \centering
                \begin{tikzpicture}
                    % vertices
                    \node[c1] (X1) at  (0,0) {};
                    \node[] ()[below left=0 and -.2 of X1]{$X_1$};
                    \node[c2] (X2) at  (1.5,1.5) {};
                    \node[] ()[above=0 of X2]{$X_2$};
                    \node[c1] (X3) at  (3,0) {};
                    \node[] ()[below right=0 and -.2  of X3]{$X_3$};
                    \node[c2] (X4) at  (1.5,-1.5) {};
                    \node[] ()[below=0 of X4]{$X_4$};
        
                    \node[c3] (Y1) at  (8,0) {};
                    \node[] ()[below right=0 and -.2 of Y1]{$Y_1$};
                    \node[c4] (Y2) at  (6.5,1.5) {};
                    \node[] ()[above=0 of Y2]{$Y_2$};
                    \node[c3] (Y3) at  (5,0) {};
                    \node[] ()[below =0 of Y3]{$Y_3$};
                    \node[c4] (Y4) at  (6.5,-1.5) {};
                    \node[] ()[below=0 of Y4]{$Y_4$};
        
                    \node[c1] (Z1) at  (10,0) {};
                    \node[] ()[below left=0 and -.2 of Z1]{$Z_1$};
                    \node[c2] (Z2) at  (11.5,1.5) {};
                    \node[] ()[above=0 of Z2]{$Z_2$};
                    \node[c1] (Z3) at  (13,0) {};
                    \node[] ()[below right=0 and -.2  of Z3]{$Z_3$};
                    \node[c2] (Z4) at  (11.5,-1.5) {};
                    \node[] ()[below=0 of Z4]{$Z_4$};
                    %edges
                    \draw[-] (X1) to (X2);
                    \draw[-] (X2) to (X3);
                    \draw[-] (X3) to (X4);
                    \draw[-] (X4) to (X1);
        
                    \draw[-] (Y1) to (Y2);
                    \draw[-] (Y2) to (Y3);
                    \draw[-] (Y3) to (Y4);
                    \draw[-] (Y4) to (Y1);
        
                    \draw[-] (Z1) to (Z2);
                    \draw[-] (Z2) to (Z3);
                    \draw[-] (Z3) to (Z4);
                    \draw[-] (Z4) to (Z1);
        
                    \draw[-] (X2) to (Y2);
                    \draw[-] (X3) to (Y3);
                    \draw[-] (X4) to (Y4);

                    \draw[-] (Y1) to (Z1);
                    \draw[-] (Y2) to (Z2);
                    \draw[-] (Y4) to (Z4);
        
                    \draw[-, red] (X3) to (Y2);
                    \draw[-, red] (X2) to (Y1);
                    \draw[-, red] (X4) to (Y3);
        
                    \draw[-, red] (Y1) to (Z4);
                    \draw[-, red] (Y2) to (Z1);
                    \draw[-, red] (Y4) to (Z3);
                \end{tikzpicture}
                \caption{Undirected graph $\G_d^{obs}$. Red edges correspond to the virtual edges.}
                \label{fig: 4b}
            \end{subfigure}
        
            \begin{subfigure}[b]{\textwidth}
                \centering
                \begin{tikzpicture}
                    \tikzset{edge/.style = {->,> = latex',-{Latex[width=1mm]}}}
                    % vertices
                    \node[block] (X1) at  (0,0) {};
                    \node[] ()[below left=0 and -.2 of X1]{$X_1$};
                    \node[block] (X2) at  (1.5,1.5) {};
                    \node[] ()[above=0 of X2]{$X_2$};
                    \node[block] (X3) at  (3,0) {};
                    \node[] ()[below right=0.1 and -.3  of X3]{$X_3$};
                    \node[block] (X4) at  (1.5,-1.5) {};
                    \node[] ()[below=0 of X4]{$X_4$};
        
                    \node[block] (Y1) at  (8,0) {};
                    \node[] ()[below right=0.1 and -.3 of Y1]{$Y_1$};
                    \node[block] (Y2) at  (6.5,1.5) {};
                    \node[] ()[above=0 of Y2]{$Y_2$};
                    \node[block] (Y3) at  (5,0) {};
                    \node[] ()[below =0.2 of Y3]{$Y_3$};
                    \node[block] (Y4) at  (6.5,-1.5) {};
                    \node[] ()[below=0 of Y4]{$Y_4$};
        
                    \node[block] (Z1) at  (10,0) {};
                    \node[] ()[below left=0 and -.2 of Z1]{$Z_1$};
                    \node[block] (Z2) at  (11.5,1.5) {};
                    \node[] ()[above=0 of Z2]{$Z_2$};
                    \node[block] (Z3) at  (13,0) {};
                    \node[] ()[below right=0 and -.2  of Z3]{$Z_3$};
                    \node[block] (Z4) at  (11.5,-1.5) {};
                    \node[] ()[below=0 of Z4]{$Z_4$};
                    
                    %edges             
                    \path[->, red] (X2) edge[style = {->,> = latex',-{Latex[width=1mm]}}, bend right=15](X1);
                    \path[->, red] (X3) edge[style = {->,> = latex',-{Latex[width=1mm]}}, bend right=15](X2);
                    \path[->, red] (X4) edge[style = {->,> = latex',-{Latex[width=1mm]}}, bend right=15](X3);
                    \path[->, red] (X1) edge[style = {->,> = latex',-{Latex[width=1mm]}}, bend right=15](X4);
                    \path[->] (X1) edge[style = {->,> = latex',-{Latex[width=1mm]}}, bend right=15](X2);
                    \path[->] (X2) edge[style = {->,> = latex',-{Latex[width=1mm]}}, bend right=15](X3);
                    \path[->] (X3) edge[style = {->,> = latex',-{Latex[width=1mm]}}, bend right=15](X4);
                    \path[->] (X4) edge[style = {->,> = latex',-{Latex[width=1mm]}}, bend right=15](X1);
        
                    \path[->, red] (Y2) edge[style = {->,> = latex',-{Latex[width=1mm]}}, bend left=15](Y1);
                    \path[->, red] (Y3) edge[style = {->,> = latex',-{Latex[width=1mm]}}, bend left=15](Y2);
                    \path[->, red] (Y4) edge[style = {->,> = latex',-{Latex[width=1mm]}}, bend left=15](Y3);
                    \path[->, red] (Y1) edge[style = {->,> = latex',-{Latex[width=1mm]}}, bend left=15](Y4);
                    \path[->] (Y1) edge[style = {->,> = latex',-{Latex[width=1mm]}}, bend left=15](Y2);
                    \path[->] (Y2) edge[style = {->,> = latex',-{Latex[width=1mm]}}, bend left=15](Y3);
                    \path[->] (Y3) edge[style = {->,> = latex',-{Latex[width=1mm]}}, bend left=15](Y4);
                    \path[->] (Y4) edge[style = {->,> = latex',-{Latex[width=1mm]}}, bend left=15](Y1);
                    
                    \path[->, red] (Z2) edge[style = {->,> = latex',-{Latex[width=1mm]}}, bend right=15](Z1);
                    \path[->, red] (Z3) edge[style = {->,> = latex',-{Latex[width=1mm]}}, bend right=15](Z2);
                    \path[->, red] (Z4) edge[style = {->,> = latex',-{Latex[width=1mm]}}, bend right=15](Z3);
                    \path[->, red] (Z1) edge[style = {->,> = latex',-{Latex[width=1mm]}}, bend right=15](Z4);
                    \path[->] (Z1) edge[style = {->,> = latex',-{Latex[width=1mm]}}, bend right=15](Z2);
                    \path[->] (Z2) edge[style = {->,> = latex',-{Latex[width=1mm]}}, bend right=15](Z3);
                    \path[->] (Z3) edge[style = {->,> = latex',-{Latex[width=1mm]}}, bend right=15](Z4);
                    \path[->] (Z4) edge[style = {->,> = latex',-{Latex[width=1mm]}}, bend right=15](Z1);
        
                    \path[->] (X2) edge[style = {->,> = latex',-{Latex[width=1mm]}}, bend left=30](Y2);
                    \path[->, red] (X3) edge[style = {->,> = latex',-{Latex[width=1mm]}}, bend left=20](Y2);
                    \draw[edge] (X3) to (Y3);
                    \path[->] (X4) edge[style = {->,> = latex',-{Latex[width=1mm]}}, bend right=30](Y4);
                    \path[->, red] (X4) edge[style = {->,> = latex',-{Latex[width=1mm]}}, bend right=20](Y3);
                    \path[->, red] (X2) edge[style = {->,> = latex',-{Latex[width=1mm]}}, bend right=0](Y1);

                    \path[->] (Y2) edge[style = {->,> = latex',-{Latex[width=1mm]}}, bend left=30](Z2);
                    \path[->, red] (Y2) edge[style = {->,> = latex',-{Latex[width=1mm]}}, bend left=20](Z1);
                    \draw[edge] (Y1) to (Z1);
                    \path[->] (Y4) edge[style = {->,> = latex',-{Latex[width=1mm]}}, bend right=30](Z4);
                    \path[->, red] (Y1) edge[style = {->,> = latex',-{Latex[width=1mm]}}, bend right=20](Z4);
                    \path[->, red] (Y4) edge[style = {->,> = latex',-{Latex[width=1mm]}}, bend right=0](Z3);
                \end{tikzpicture}
                \caption{DG $\mathcal{H}$ in Scenario 1. Red edges correspond to the ones that do not exist in $\G$.}
                \label{fig: 4c}
            \end{subfigure}
        
            \caption{A running example for our proposed approach (Examples \ref{example: colored sep sys}, \ref{example: H}, and \ref{example: lift sep sys}).}
            \label{fig: 4}
        \end{figure}

        After constructing a colored separating system, Algorithm \ref{algo: SCC} constructs a set $\D_X$ for each $X \in \V$ in lines 4-8 as follows.
        In line 5, $\I_X = \{\mathbf{I}\in \I\!:\: X \in \mathbf{I}\}$ is defined and in line 6, $\D_X$ is initialized with an empty set.
        Based on the following lemma, for any set $\mathbf{I}\subseteq \V$ and each $X \in \mathbf{I}$,  $\De{X}{\G_{\overline{\mathbf{I}}}}$ is learned by performing an experiment on $\mathbf{I}$.
        
        \begin{lemma} \label{lem: learn des}
            For each $X \in \mathbf{I} \subseteq \V$,  $\De{X}{\G_{\overline{\mathbf{I}}}} = \{Y\in \V \!:\:\nmarginsep{X}{Y}{P_{do(\mathbf{I})}}{}\}$.
        \end{lemma}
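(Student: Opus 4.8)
The plan is to reduce the claim to a purely graphical statement about \emph{collider-free paths} and then exploit that $X$ is a source vertex of $\G_{\overline{\mathbf{I}}}$. First, since $X \in \mathbf{I}$, the experiment removes every incoming edge of $X$, so $X$ has no parents in $\G_{\overline{\mathbf{I}}}$. By the proposition of \cite{bongers2021foundations}, $\M_{do(\mathbf{I})}$ is again a simple SCM with causal graph $\G_{\overline{\mathbf{I}}}$; carrying the operative scenario assumption over to the interventional regime, a CI $\marginsep{X}{Y}{P_{do(\mathbf{I})}}{}$ holds if and only if the empty set $r$-separates $X$ and $Y$ in $\G_{\overline{\mathbf{I}}}$. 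Hence it suffices to establish the graphical equivalence: $X$ and $Y$ are \emph{not} $r$-separated by $\varnothing$ in $\G_{\overline{\mathbf{I}}}$ if and only if $Y \in \De{X}{\G_{\overline{\mathbf{I}}}}$.

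Next I would unify the two scenarios via the empty conditioning set. Since $\Anc{\varnothing}{\G_{\overline{\mathbf{I}}}} = \varnothing$, in both the $d$- and $\sigma$-blocking definitions the collider clause ($Z_i$ a collider with $Z_i \notin \Anc{\varnothing}{\G_{\overline{\mathbf{I}}}}$) is met by \emph{every} collider, whereas the non-collider clause ($Z_i \in \varnothing$) can never be met. Therefore, for both $r=d$ and $r=\sigma$, a path is $r$-blocked by $\varnothing$ exactly when it contains a collider, and the two notions coincide marginally. Consequently, $X$ and $Y$ are marginally $r$-connected in $\G_{\overline{\mathbf{I}}}$ if and only if there is a \emph{collider-free} path between them.

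The final step characterizes collider-free paths and invokes the source property. Reading a collider-free path $X=V_0, V_1, \dots, V_m = Y$ by its edge orientations, the absence of a collider means that no forward-oriented edge (away from $X$) is immediately followed by a backward-oriented edge (toward $X$); thus all edges pointing back toward $X$ precede all edges pointing on toward $Y$, and the path has a unique topmost vertex $W$ admitting directed subpaths $W \to \dots \to X$ and $W \to \dots \to Y$. Because $X$ has no incoming edge in $\G_{\overline{\mathbf{I}}}$, the edge incident to $X$ must point away from it, which forces $W=X$ and hence makes the whole path a directed path $X \to \dots \to Y$, giving $Y \in \De{X}{\G_{\overline{\mathbf{I}}}}$. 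Conversely, any directed path witnessing $Y \in \De{X}{\G_{\overline{\mathbf{I}}}}$ is collider-free, so $X$ and $Y$ are marginally $r$-connected. This yields the desired equivalence.

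The step I expect to require the most care is transferring the scenario's equivalence between CI and $r$-separation from the observational distribution to the interventional distribution $P_{do(\mathbf{I})}$ and its graph $\G_{\overline{\mathbf{I}}}$. The Markov direction is safe: $\M_{do(\mathbf{I})}$ is simple, so $\sigma$-Markov holds automatically, and hard interventions preserve the structural features (e.g.\ linearity or discreteness) underpinning $d$-Markov in Scenario 1. The faithfulness direction must be assumed to hold in the interventional regime as well; once granted, the graphical equivalence above completes the argument. The remaining combinatorial steps---the empty-set reduction to colliders and the orientation analysis of collider-free paths---are routine.
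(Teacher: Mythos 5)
Your proposal is correct and follows essentially the same route as the paper's proof: it reduces the claim to the graphical identity $\De{X}{\G_{\overline{\mathbf{I}}}} = \{Y \in \V : \nmarginsep{X}{Y}{\G_{\overline{\mathbf{I}}}}{r}\}$ via the CI--$r$-separation equivalence for the post-intervention simple SCM, and then argues that a marginally $r$-connected pair must be joined by a collider-free path, which is forced to be a directed path out of $X$ because $X \in \mathbf{I}$ has no parents in $\G_{\overline{\mathbf{I}}}$. Your additional elaborations (the empty-conditioning-set reduction of both $d$- and $\sigma$-blocking to the mere presence of a collider, and the orientation analysis of collider-free paths) simply make explicit steps the paper states tersely.
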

        \begin{proof}
            We first show that $\De{X}{\G_{\overline{\mathbf{I}}}} = \{Y \in \V \!:\:\nmarginsep{X}{Y}{\G_{\overline{\mathbf{I}}}}{r}\}$.
            \begin{itemize}[leftmargin=*]
                \item Suppose $Y \in \De{X}{\G_{\overline{\mathbf{I}}}}$.
                In this case, there exists a directed path from $X$ to $Y$ in $\G_{\overline{\mathbf{I}}}$ and therefore, $\nmarginsep{X}{Y}{\G_{\overline{\mathbf{I}}}}{r}$.
                \item Suppose $\nmarginsep{X}{Y}{\G_{\overline{\mathbf{I}}}}{r}$.
                This implies that there exists a path $\mathcal{P}$ between $X$ and $Y$ in $\G_{\overline{\mathbf{I}}}$ that does not contain any colliders.
                Note that $X$ does not have any parents in $\G_{\overline{\mathbf{I}}}$ because $X \in \mathbf{I}$.
                Thus, $\mathcal{P}$ must be a directed path from $X$ to $Y$, which implies that $Y \in \De{X}{\G_{\overline{\mathbf{I}}}}$.
            \end{itemize}
            Since $do(\mathbf{I})$ is a full-support hard intervention, the CI assertions in $P_{do(\mathbf{I})}$ are equivalent to $d$-separations or $\sigma$-separations of $\G_{\overline{\mathbf{I}}}$ for Scenario 1 or Scenario 2, respectively.
            Hence, set $\{Y \in \V \!: \nmarginsep{X}{Y}{P_{do(\mathbf{I})}}{}\}$ is equal to $\{Y \in \V \!:\:\nmarginsep{X}{Y}{\G_{\overline{\mathbf{I}}}}{d}\}$ in Scenario 1, and is equal to $\{Y \in \V \!: \nmarginsep{X}{Y}{\G_{\overline{\mathbf{I}}}}{\sigma}\}$ in Scenario 2.
            Therefore, under both  Scenarios 1 or 2,  $\De{X}{\G_{\overline{\mathbf{I}}}} = \{Y \in \V \!:\: \nmarginsep{X}{Y}{P_{do(\mathbf{I})}}{}\}$.
        \end{proof}
        Applying Lemma \ref{lem: learn des}, Algorithm \ref{algo: SCC} adds $\De{X}{\G_{\overline{\mathbf{I}}}} \cap \Ne{X}{\G_r^{obs}}$ to $\D_X$ for each $\mathbf{I} \in \I_X$ in line 8.
        Therefore, at the end of the for loop (lines 7-8), we have
        \begin{equation} \label{eq: Dx}
            \D_X = \left(\bigcup_{\mathbf{I} \in \I_X} \De{X}{\G_{\overline{\mathbf{I}}}}\right) \cap \Ne{X}{\G_r^{obs}}.
        \end{equation}
        Next, we show that $\D_X$ contains $\Ch{X}{\G}$, and it is also a subset of $\De{X}{\G}$.
        
        \begin{lemma} \label{lem: Dx}
            For each $X \in \V$, $\Ch{X}{\G} \subseteq \D_X \subseteq \De{X}{\G}$, where $\D_X$ is defined in \eqref{eq: Dx}.
        \end{lemma}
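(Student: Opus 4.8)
The plan is to prove the two inclusions separately. The inclusion $\D_X \subseteq \De{X}{\G}$ will follow from a routine subgraph argument, while the inclusion $\Ch{X}{\G} \subseteq \D_X$ is the substantive direction and hinges on the defining property of the colored separating system together with the fact that $\mathcal{C}$ is a valid coloring of $\G_r^{obs}$.

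For the easier inclusion, I would first note that for every $\mathbf{I} \in \I_X$ the graph $\G_{\overline{\mathbf{I}}}$ is obtained from $\G$ by \emph{deleting} edges only (the incoming edges of the variables in $\mathbf{I}$), so $\G_{\overline{\mathbf{I}}}$ is a subgraph of $\G$. Consequently every directed path in $\G_{\overline{\mathbf{I}}}$ is also a directed path in $\G$, which gives $\De{X}{\G_{\overline{\mathbf{I}}}} \subseteq \De{X}{\G}$ for each such $\mathbf{I}$. Taking the union over $\I_X$ preserves this bound, and intersecting with $\Ne{X}{\G_r^{obs}}$ only shrinks the set, so by the definition of $\D_X$ in \eqref{eq: Dx} we conclude $\D_X \subseteq \De{X}{\G}$.

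For the main inclusion, take $Y \in \Ch{X}{\G}$. Since $Y$ is a child of $X$, the two are neighbors in the skeleton of $\G$, and because $\G_r^{obs}$ contains the skeleton (Definition \ref{def: G observed}), we have $Y \in \Ne{X}{\G_r^{obs}}$; this handles the intersecting factor of $\D_X$. It then remains to find some $\mathbf{I} \in \I_X$ with $Y \in \De{X}{\G_{\overline{\mathbf{I}}}}$. The key observation is that the oriented edge $X \to Y$ survives in $\G_{\overline{\mathbf{I}}}$ exactly when $Y \notin \mathbf{I}$, since removing the incoming edges of $\mathbf{I}$ deletes $X \to Y$ only if $Y \in \mathbf{I}$. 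Thus it suffices to produce an $\mathbf{I} \in \I$ with $X \in \mathbf{I}$ and $Y \notin \mathbf{I}$, for which $Y \in \Ch{X}{\G_{\overline{\mathbf{I}}}} \subseteq \De{X}{\G_{\overline{\mathbf{I}}}}$. To produce such an $\mathbf{I}$, I would invoke the coloring: as $X$ and $Y$ are adjacent in $\G_r^{obs}$ and $\mathcal{C}$ is a proper vertex coloring of $\G_r^{obs}$, they receive distinct colors, $C_X \neq C_Y$. Applying Definition \ref{def: colored sep sys} to the ordered pair $(X, Y)$ then yields an $\mathbf{I} \in \I$ with $X \in \mathbf{I}$ and $Y \notin \mathbf{I}$, i.e., $\mathbf{I} \in \I_X$, completing the argument.

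I do not expect a genuine obstacle here; the single point that warrants care is the orientation bookkeeping in the forward direction, namely that the \emph{ordered}-pair guarantee of the colored separating system $(X \in \mathbf{I},\, Y \notin \mathbf{I})$ is precisely the condition that keeps the directed edge $X \to Y$ alive in $\G_{\overline{\mathbf{I}}}$. Everything else reduces to direct substitution into \eqref{eq: Dx} and the monotonicity of descendant sets under edge deletion.
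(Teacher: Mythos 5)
Your proposal is correct and follows essentially the same route as the paper: the containment $\D_X \subseteq \De{X}{\G}$ via monotonicity of descendant sets under edge deletion, and $\Ch{X}{\G} \subseteq \D_X$ by noting that $X$ and $Y$ are adjacent in $\G_r^{obs}$, hence differently colored, so the colored separating system supplies an $\mathbf{I} \in \I_X$ with $Y \notin \mathbf{I}$ that preserves the edge $X \to Y$ in $\G_{\overline{\mathbf{I}}}$. Your extra remark about the ordered-pair guarantee being exactly what keeps the directed edge alive is a correct and slightly more explicit version of the paper's one-line justification.
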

        \begin{proof}
            For any subset $\mathbf{I} \subseteq \V$, $\De{X}{\G_{\overline{\mathbf{I}}}}$ is a subset of $\De{X}{\G}$. 
            Hence, $\D_X \subseteq \De{X}{\G}$. 
            
            Suppose $Y \in \Ch{X}{\G}$.
            We need to show that $Y \in \D_X$.
            Note that $X$ and $Y$ are neighbors in $\G_r^{obs}$ and, therefore, have different colors in $\mathcal{C}$. 
            Since $\I$ is a colored separating system on $(\V,\mathcal{C})$, there exists $\mathbf{I} \in \I$ such that $X \in \mathbf{I}$ and $Y \notin \mathbf{I}$.
            In this case, $\mathbf{I} \in \I_X$ since $X \in \mathbf{I}$. 
            Furthermore, $Y \in  \De{X}{\G_{\overline{\mathbf{I}}}}$ because $Y$ is a child of $X$ in $\G_{\overline{\mathbf{I}}}$. 
            This implies that $Y \in \D_X$ and therefore, $\Ch{X}{\G} \subseteq \D_X$.
        \end{proof}
        \begin{remark}
            For the algorithm to successfully learn the SCCs, it is crucial that $\D_X$ contains all the children of $X$. 
            As proven in Lemma \ref{lem: Dx}, this is the case because $\I$ is a colored separating system on $(\V, \mathcal{C})$.
            Note that if $\I$ were not a colored separating system on $(\V, \mathcal{C})$, $\D_X$ would still be a subset of $\De{X}{\G}$, but it would not have necessarily contained all the variables in $\Ch{X}{\G}$.
        \end{remark}
        After learning $\D_X$ for all $X\in \V$, a DG $\mathcal{H}$ is constructed over $\V$ by adding directed edges from $X$ to the variables in $\D_X$ for each $X\in \V$ (line 9).
        
        \begin{example}[DG $\mathcal{H}$] \label{example: H}
            Following Example \ref{example: colored sep sys}, consider the graphs in Figure \ref{fig: 4}.
            DG $\mathcal{H}$, which is constructed by adding directed edges from $X$ to $\D_X$ for each $X \in \V$, is depicted in Figure \ref{fig: 4c}.
            For instance, $\D_{X_2} = \{X_1, X_3, Y_1, Y_2\}$.
            In this figure, black edges are the edges that appear in DG $\G$, while red edges do not exist in $\G$. 
        \end{example}
        Observe that DG $\mathcal{H}$ is a super graph of $\G$, where the extra edges in $\mathcal{H}$ appear only from the variables to some of their descendants in $\G$.
        In fact, the following corollary of Lemma \ref{lem: Dx} holds.
        
        \begin{corollary} \label{cor: learn des}
            In Algorithm \ref{algo: SCC}, DG $\G$ and DG $\mathcal{H}$ (the constructed DG in line 9) have the same descendant sets, i.e., for each $X \in \V$, $\De{X}{\mathcal{H}} = \De{X}{\G}$. 
            Accordingly, $\G$ and $\mathcal{H}$ have the same SCCs.
        \end{corollary}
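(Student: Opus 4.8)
The plan is to prove the two inclusions $\De{X}{\G} \subseteq \De{X}{\mathcal{H}}$ and $\De{X}{\mathcal{H}} \subseteq \De{X}{\G}$ separately for every $X \in \V$, relying entirely on the sandwich $\Ch{X}{\G} \subseteq \D_X \subseteq \De{X}{\G}$ furnished by Lemma \ref{lem: Dx}, and then deduce the claim about SCCs as an easy consequence of the equality of descendant sets.

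For the first inclusion, I would observe that $\mathcal{H}$ is a supergraph of $\G$. Every edge of $\G$ has the form $(X,Y)$ with $Y \in \Ch{X}{\G}$, and since $\Ch{X}{\G} \subseteq \D_X$, this edge is among those added to $\mathcal{H}$ in line 9 by construction. Hence every directed path of $\G$ is also a directed path of $\mathcal{H}$, which immediately yields $\De{X}{\G} \subseteq \De{X}{\mathcal{H}}$ for each $X$.

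For the reverse inclusion, I would trace directed paths of $\mathcal{H}$ back into $\G$. Take any $W \in \De{X}{\mathcal{H}}$ together with a directed path $X = V_0 \to V_1 \to \cdots \to V_m = W$ in $\mathcal{H}$. Each edge $(V_i, V_{i+1})$ of $\mathcal{H}$ certifies $V_{i+1} \in \D_{V_i}$, and Lemma \ref{lem: Dx} gives $\D_{V_i} \subseteq \De{V_i}{\G}$, so $V_{i+1}$ is a descendant of $V_i$ in $\G$. Concatenating the corresponding directed paths of $\G$ and invoking transitivity of the descendant relation (a concatenation of directed paths is a directed path) yields $W \in \De{X}{\G}$, hence $\De{X}{\mathcal{H}} \subseteq \De{X}{\G}$. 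Combining the two inclusions gives $\De{X}{\mathcal{H}} = \De{X}{\G}$ for every $X \in \V$. This path-tracing/transitivity step is the only place requiring a genuine argument; it is the (modest) crux of the proof, while the first inclusion is immediate from $\G \subseteq \mathcal{H}$.

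Finally, for the SCCs I would argue that the strongly connected relation is determined entirely by the descendant sets. In any DG, $Y \in \SCC{X}{\G}$ if and only if $Y \in \De{X}{\G}$ and $Y \in \Anc{X}{\G}$, and since $Y \in \Anc{X}{\G} \iff X \in \De{Y}{\G}$, the strongly connected relation is a function of the descendant sets alone. As the descendant sets of $\G$ and $\mathcal{H}$ coincide for every vertex, the induced partitions into strongly connected components agree, so $\G$ and $\mathcal{H}$ have the same SCCs.
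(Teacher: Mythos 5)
Your proof is correct and follows exactly the route the paper intends: it presents the corollary as an immediate consequence of Lemma \ref{lem: Dx} (the sandwich $\Ch{X}{\G} \subseteq \D_X \subseteq \De{X}{\G}$ gives both inclusions between descendant sets, and the SCC claim follows because strong connectivity is determined by descendant sets), and you have simply written out the details the paper leaves implicit. The only cosmetic quibble is that a concatenation of directed paths is in general a directed walk rather than a path, but transitivity of the descendant relation is all you need and your argument stands.
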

        Note that the second part of Corollary \ref{cor: learn des} is due to the fact that by definition, two variables $X$ and $Y$ are in the same SCC in $\G$ if and only if $X\in \De{Y}{\G}$ and $Y \in \De{X}{\G}$.
        
        Given a DG with $n$ vertices, there exist efficient depth-first search (DFS)-based algorithms, such as \emph{Kosaraju}, for obtaining the descendant sets and the SCCs with the computational complexity of $\mathcal{O}(n^2)$ \citep{sharir1981strong}.
        Applying any of these algorithms to $\mathcal{H}$, Algorithm \ref{algo: SCC} can obtain $\{\De{X}{\G}\}_{X\in \V}$ and the set of SCCs $\mathcal{S} = \{\mathbf{S}_1,\dots,\mathbf{S}_k\}$ of $\G$ in line 10.
        
        \begin{remark} \label{remark: Gobs}
            For the soundness of Algorithm \ref{algo: SCC}, it suffices for $\G_r^{obs}$ to be a super graph of the skeleton of $\G$.
            For instance, if we do not have access to the observational data, Algorithm \ref{algo: SCC} can set $\G_r^{obs}$ to be a complete graph in line 1.
        \end{remark}

        \begin{remark} \label{remark: stop if DAG}
            If $\G$ is a DAG, then $\mathcal{H} = \G$, and it suffices for the algorithm to return $\mathcal{H}$ at the end of this stage. To test for this, the algorithm must check whether $\mathcal{H}$ is a DAG after line 9 in Algorithm \ref{algo: SCC}.
        \end{remark}   
    
    \subsection{Stage 2: Lifted Separating System} \label{sec: algo}
        As we discussed in the previous section, the descendant sets and the set of SCCs $\mathcal{S} = \{\mathbf{S}_1,\dots,\mathbf{S}_k\}$ of $\G$ can be learned by performing $2\lceil\log_2(\chi(\G_r^{obs}))\rceil$ experiments.
        Herein, as the second stage of our approach, we design $\smax{\G}\coloneqq \max(|\mathbf{S}_1|, \cdots, |\mathbf{S}_k|)$ new experiments to learn $\G$.
        In this stage, we perform experiments on certain subsets of $\V$ that form a \emph{lifted separating system}, formally defined in the following.
        
        \begin{definition}[Lifted separating system] \label{def: lifted sep sys}
            Suppose $\mathcal{S} = \{\mathbf{S}_1,\dots,\mathbf{S}_k\}$ is the set of SCCs of a DG $\G$ with the set of vertices $\V$.
            A lifted separating system $\I$ on $(\V, \mathcal{S})$ is a collection of subsets of $\V$  such that for each $i \in \{1,\cdots,k\}$ and $X\in \mathbf{S}_i$, there exists $\mathbf{I}\in \I$ such that $\mathbf{S}_i \setminus \{X\} \subseteq \mathbf{I}$ and $X \notin \mathbf{I}$.
        \end{definition}
        We note that, as far as we know, no similar definition exists in the literature.
        In the following, we provide a method for constructing a lifted separating system with at most $\smax{\G}$ elements.
        
        \begin{proposition} \label{prp: lifted sep sys}
            Suppose $\mathcal{S}= \{\mathbf{S}_1,\dots,\mathbf{S}_k\}$ is the set of SCCs of a DG $\G = (\V,\E)$.
            There exists a lifted separating system on $(\V, \mathcal{S})$ with at most $\smax{\G}$ elements.
        \end{proposition}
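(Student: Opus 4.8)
The plan is to give an explicit construction. The key structural observation is that a single element $\mathbf{I}$ of $\I$ can serve the requirement of \emph{at most one} vertex per SCC: to handle $X \in \mathbf{S}_i$, the set $\mathbf{I}$ must contain all of $\mathbf{S}_i \setminus \{X\}$, so among the vertices of $\mathbf{S}_i$ it excludes only $X$. This suggests organizing the $\smax{\G}$ sets so that each one removes exactly one (carefully chosen) vertex from each SCC, and that over the whole collection every vertex of every SCC gets removed by exactly one set.

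Concretely, first I would fix, for each SCC $\mathbf{S}_i$, an arbitrary bijective labeling $\pi_i \colon \mathbf{S}_i \to \{1,\dots,|\mathbf{S}_i|\}$. Writing $i(X)$ for the index of the SCC containing $X$, I would then define, for each $t \in \{1,\dots,\smax{\G}\}$,
$$\mathbf{I}_t = \V \setminus \{X \in \V : \pi_{i(X)}(X) = t\},$$
i.e.\ $\mathbf{I}_t$ deletes from $\V$ the unique $t$-labeled vertex of every SCC that is large enough to possess one (SCCs with fewer than $t$ vertices contribute nothing to the deleted set). Setting $\I = \{\mathbf{I}_1,\dots,\mathbf{I}_{\smax{\G}}\}$ gives a collection with at most $\smax{\G}$ elements, as required.

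The verification is the heart of the argument. Given any SCC $\mathbf{S}_i$ and any $X \in \mathbf{S}_i$, let $t = \pi_i(X)$; since $t \le |\mathbf{S}_i| \le \smax{\G}$, the set $\mathbf{I}_t$ indeed belongs to $\I$. Then $X \notin \mathbf{I}_t$ because $X$ carries label $t$ in its own SCC, and $\mathbf{S}_i \setminus \{X\} \subseteq \mathbf{I}_t$ because $\pi_i$ is a bijection, so no \emph{other} vertex of $\mathbf{S}_i$ is labeled $t$ and hence none of them is deleted when forming $\mathbf{I}_t$. This is exactly the defining property of a lifted separating system, which completes the proof.

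I expect the only genuinely delicate point to be the fact that $\mathbf{I}_t$ simultaneously removes a vertex from \emph{every} sufficiently large SCC, not just from $\mathbf{S}_i$; one must check this does not spoil the condition $\mathbf{S}_i \setminus \{X\} \subseteq \mathbf{I}_t$. This is harmless precisely because the defining requirement for the pair $(i,X)$ constrains only the vertices \emph{inside} $\mathbf{S}_i$, and within $\mathbf{S}_i$ the label $t$ is unique; deletions made in other SCCs never touch $\mathbf{S}_i$. Thus the cross-SCC removals are independent, and the within-SCC injectivity of each $\pi_i$ is the single ingredient that makes the bound $\smax{\G}$ achievable.
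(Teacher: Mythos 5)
Your construction is correct and is essentially the paper's own: the labelings $\pi_i$ play the role of the paper's enumeration $\mathbf{S}_j=\{X^j_1,\dots,X^j_{l_j}\}$, and your $\mathbf{I}_t$ is the complement-based version of the paper's $\mathbf{I}_i=\bigcup_{j:\,i\le l_j}(\mathbf{S}_j\setminus\{X^j_i\})$. The only (immaterial) difference is that for SCCs smaller than $t$ your set keeps all their vertices while the paper's omits them entirely; the defining property only constrains vertices inside the relevant SCC, so both work.
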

        \begin{proof}
            For each $1\leq j \leq k$, suppose $\mathbf{S}_j = \{X^j_1, \cdots X^j_{l_j}\}$, where $l_j = |\mathbf{S}_j|$.
            Also, let $l_{\max} = \max(l_1,\cdots,l_k) = \smax{\G}$. 
            For each $1 \leq i \leq l_{\max}$, we construct subset $\mathbf{I}_i \subseteq \V$ as follows.
            For each $1 \leq j \leq k$ such that $i \leq l_j$, we add $\mathbf{S}_j \setminus \{X^j_i\}$ to $\mathbf{I}_i$. 
            That is,
            
            \begin{equation} \label{eq: lifted sep sys}
                \mathbf{I}_i 
                = \bigcup_{\substack{1 \leq j \leq k \\ \text{ s.t. } i \leq l_j}} (\mathbf{S}_j \setminus \{X^j_i\}).
            \end{equation}
            Next, we show that $\I = \{\mathbf{I}_1,\cdots,\mathbf{I}_{l_{\max}}\}$ is a lifted separating system on $(\V, \mathcal{S})$.
            Note that $|\I| = \smax{\G}$.
            Suppose $j\in \{1,\cdots,k\}$ and $X^j_i \in \mathbf{S}_j$, where $ 1\leq i \leq l_j$.
            We need to show that there exists $\mathbf{I}\in \I$ such that $\mathbf{S}_j \setminus \{X^j_i\} \subseteq \mathbf{I}$ and $X^j_i \notin \mathbf{I}$.
           $\mathbf{I} = \mathbf{I}_i$ satisfies this property because $\mathbf{I}_i \cap \mathbf{S}_j = \mathbf{S}_j \setminus \{X^j_i\}$.
            Hence, $\I$ is a lifted separating system on $(\V, \mathcal{S})$ with size $\smax{\G}$.
        \end{proof}
        \begin{remark}
            The proof of Proposition \ref{prp: lifted sep sys} is constructive.
            Given the set of SCCs, Equation \eqref{eq: lifted sep sys} provides a lifted separating system on $(\V, \mathcal{S})$ with at most $\smax{\G}$ elements.
        \end{remark}
        
        \begin{example}[Lifted separating system] \label{example: lift sep sys}
            Consider DG $\G$ in Figure \ref{fig: 4} with three SCCs $\mathbf{S}_1 = \{X_1,X_2,X_3,X_4\}$, $\mathbf{S}_2 = \{Y_1,Y_2,Y_3,Y_4\}$, and $\mathbf{S}_3 = \{Z_1,Z_2,Z_3,Z_4\}$.
            Using Equation \eqref{eq: lifted sep sys} in the proof of Proposition \ref{prp: lifted sep sys}, we can construct the following lifted separating system of size $\smax{\G}= \max(|\mathbf{S}_1|, |\mathbf{S}_2|, |\mathbf{S}_3|) =4$.
            \begin{equation*}
                \begin{split}
                    \I = \{ &
                    \{X_2,X_3,X_4, Y_2,Y_3,Y_4, Z_2,Z_3,Z_4\},
                    \{X_1,X_3,X_4, Y_1,Y_3,Y_4, Z_1,Z_3,Z_4\},\\&
                    \{X_1,X_2,X_4, Y_1,Y_2,Y_4, Z_1,Z_2,Z_4\},
                    \{X_1,X_2,X_3, Y_1,Y_2,Y_3, Z_1,Z_2,Z_3\}
                    \}.
                \end{split}
            \end{equation*}
        \end{example}
        We present Algorithm \ref{algo: main} for learning DG $\G$ that takes the ancestor sets\footnote{Algorithm \ref{algo: SCC} returns the descendant sets which can be used to obtain the ancestor sets.} $\{\Anc{X}{\G}\}_{X\in \V}$ and the set of SCCs $\mathcal{S}= \{\mathbf{S}_1,\dots,\mathbf{S}_k\}$ of $\G$ as inputs.
        The algorithm constructs a lifted separating system $\I$ in line 2 and initializes a DG $\hat{\G}$ on $\V$ with no edges in line 3.

        \begin{algorithm}[t]
            \caption{Learning a DG $\G$}
            \label{algo: main}
            \begin{algorithmic}[1]
                \STATE \textbf{Input: } $\{\Anc{X}{\G}\}_{X\in \V}, \mathcal{S}= \{\mathbf{S}_1,\dots,\mathbf{S}_k\}$
                \STATE $\I \gets $ Construct a lifted separating system on $(\V, \mathcal{S})$
                \STATE Initialization: $\hat{\G} \gets (\V,\hat{\E}=\varnothing)$ 
                \FOR{$i$ from $1$ to $k$}
                    \FOR{$X \in \mathbf{S}_i$}
                        \STATE $\mathbf{I} \gets$ An element of $\I$ that contains $\mathbf{S}_i \setminus \{X\}$ but does not contain $X$
                        \FOR{$Y \in \mathbf{S}_i\setminus \{X\}$}
                            \STATE Add $(Y,X)$ to $\hat{\E}$ if $\nmarginsep{X}{Y}{P_{do(\mathbf{I})}}{}$
                        \ENDFOR
                        \FOR{$Y \in \Anc{X}{\G} \setminus \mathbf{S}_i$}
                            \STATE Add $(Y,X)$ to $\hat{\E}$ if $\nsep{X}{Y}{\Anc{X}{\G} \setminus ( \mathbf{S}_i \cup \{Y\})}{P_{do(\mathbf{I})}}{}$
                        \ENDFOR
                    \ENDFOR
                \ENDFOR
                \STATE \textbf{Return} $\hat{\G}$
            \end{algorithmic}
        \end{algorithm}
        
        Suppose $X$ is an arbitrary variable in $\mathbf{S}_i$, where $1 \leq i \leq k$ (the for loops in lines 4 and 5).
        Since $\I$ is a lifted separating system on $(\V, \mathcal{S})$, there exists $\mathbf{I} \in \I$ that contains $\mathbf{S}_i \setminus \{X\}$ but not $X$ (line 6).
        By performing an experiment on $\mathbf{I}$ and using the following two lemmas, the algorithm finds the parents of $X$ in lines 7-10.
        
        \begin{lemma} \label{lem: Pa(X) part 1}
            Suppose $Y\in \mathbf{S}_i\setminus \{X\}$ and $\mathbf{I}\subseteq \V \setminus \{X\}$ such that $\mathbf{S}_i \setminus \{X\} \subseteq \mathbf{I}$.
            Then, $Y \in \Pa{X}{\G}$ if and only if $\nmarginsep{X}{Y}{P_{do(\mathbf{I})}}{}$. 
        \end{lemma}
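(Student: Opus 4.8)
The plan is to translate the probabilistic statement into a purely graphical one and then analyze the structure of active paths in $\G_{\overline{\mathbf{I}}}$. First I would invoke the same correspondence used in the proof of Lemma~\ref{lem: learn des}: since $do(\mathbf{I})$ is a full-support hard intervention and $\M_{do(\mathbf{I})}$ is again a simple SCM with causal graph $\G_{\overline{\mathbf{I}}}$, marginal dependence in $P_{do(\mathbf{I})}$ is equivalent to non-$r$-separability by the empty set in $\G_{\overline{\mathbf{I}}}$, under both Scenario~1 and Scenario~2. Hence it suffices to prove $Y \in \Pa{X}{\G} \iff \nmarginsep{X}{Y}{\G_{\overline{\mathbf{I}}}}{r}$. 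A useful preliminary observation is that conditioning on the empty set makes a path active (neither $d$- nor $\sigma$-blocked) if and only if it contains no collider: with $\mathbf{S}=\varnothing$ the non-collider clause can never trigger, while every collider blocks since no vertex is an ancestor of $\varnothing$.

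The forward direction is immediate: if $Y \in \Pa{X}{\G}$, the edge $Y \to X$ is an incoming edge of $X$, and since $X \notin \mathbf{I}$ it is preserved in $\G_{\overline{\mathbf{I}}}$. An adjacent pair cannot be $r$-separated by any set (a single-edge path has no internal vertex to block), so $\nmarginsep{X}{Y}{\G_{\overline{\mathbf{I}}}}{r}$ follows.

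For the reverse direction, assume $X$ and $Y$ are not $r$-separated by the empty set, so there is a collider-free path $\mathcal{P}$ between them in $\G_{\overline{\mathbf{I}}}$. Here I would exploit that $Y \in \mathbf{S}_i \setminus \{X\} \subseteq \mathbf{I}$, so $Y$ has no incoming edges in $\G_{\overline{\mathbf{I}}}$; thus the edge of $\mathcal{P}$ incident to $Y$ must point away from $Y$. Walking along $\mathcal{P}$ starting at $Y$, the absence of colliders forces every subsequent edge to point forward, so $\mathcal{P}$ is in fact a directed path $Y \to \cdots \to X$. It then remains only to rule out internal vertices.

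The crux---the step I expect to be the main obstacle---is the following SCC argument. Every internal vertex of this directed path has an incoming edge in $\G_{\overline{\mathbf{I}}}$, hence does not lie in $\mathbf{I}$; in particular it is not in $\mathbf{S}_i \setminus \{X\}$, and being internal it is not $X$ either, so it lies outside $\mathbf{S}_i$. On the other hand, $X$ and $Y$ belong to the same SCC $\mathbf{S}_i$, so $\G$ contains a directed path from $X$ back to $Y$; concatenating it with $Y \to \cdots \to X$ produces a directed cycle through all the internal vertices, which forces them into $\mathbf{S}_i$---a contradiction. Therefore $\mathcal{P}$ has no internal vertex and equals the single edge $Y \to X$, giving $Y \in \Pa{X}{\G}$. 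I would be careful that this concatenation is carried out in $\G$ rather than $\G_{\overline{\mathbf{I}}}$, which is legitimate since $\G_{\overline{\mathbf{I}}} \subseteq \G$ and the return path defining strong connectivity of $\mathbf{S}_i$ lives in $\G$.
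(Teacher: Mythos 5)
Your proposal is correct and follows essentially the same route as the paper: it reduces the probabilistic statement to the graphical one, observes that with an empty conditioning set an active path from the intervened vertex $Y$ must be a directed path $Y \to \cdots \to X$, and then rules out internal vertices by showing they would be forced into $\SCC{X}{\G} = \mathbf{S}_i$ while simultaneously lying outside $\mathbf{I}$ (hence outside $\mathbf{S}_i$). The paper reaches the same contradiction by arguing directly that the first internal vertex $Z_1$ is both an ancestor and a descendant of $X$, whereas you phrase it via a directed cycle through all internal vertices; these are the same argument.
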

        \begin{proof}
            Recall that $\mathbf{S}_i = \SCC{X}{\G}$.
            Since $Y \in \mathbf{I}$, Lemma \ref{lem: learn des} implies that $\nmarginsep{X}{Y}{P_{do(\mathbf{I})}}{}$ if and only if $X \in \De{Y}{\G_{\overline{\mathbf{I}}}}$.
            
            \textit{Sufficient part: }
            If $Y \in \Pa{X}{\G}$, then $Y \in \Pa{X}{\G_{\overline{\mathbf{I}}}}$ since $X \notin \mathbf{I}$.
            Thus, $X \in \De{Y}{\G_{\overline{\mathbf{I}}}}$.
            
            \textit{Necessary part}: 
            If $X \in \De{Y}{\G_{\overline{\mathbf{I}}}}$, then there exists a directed path from $Y$ to $X$ in $\G_{\overline{\mathbf{I}}}$.
            We now show that there exists no directed path from $Y$ to $X$ in $\G_{\overline{\mathbf{I}}}$ with length larger than 1.
            Suppose not and let $\mathcal{P} = (Y, Z_1,\cdots, Z_t, X)$ be a directed path from $Y$ to $X$ in $\G_{\overline{\mathbf{I}}}$, where $t \geq 1$.
            In this case, $\Pa{Z_1}{\G_{\overline{\mathbf{I}}}}$ is non-empty since $Y$ is in it.
            Hence, $Z_1 \notin \mathbf{I}$ and $Z_1 \notin \mathbf{S}_i$ because $\mathbf{S}_i \setminus \{X\} \subseteq \mathbf{I}$.
            This implies that
            \begin{itemize}
                \item $Z_1 \in \Anc{X}{\G}$ because of the directed path $(Z_1,\cdots, Z_t,X)$, and
                \item $Z_1 \in \De{X}{\G}$ because $Y \in \De{X}{\G}$ and $Z_1 \in \Ch{Y}{\G}$.
            \end{itemize}
            This shows that $Z_1 \in \SCC{X}{\G} = \mathbf{S}_i$, which is a contradiction.
            Hence, there exists no directed path from $Y$ to $X$ in $\G_{\overline{\mathbf{I}}}$ with length larger than 1.
            Therefore, $Y \in \Pa{X}{\G}$.
        \end{proof}
        Applying Lemma \ref{lem: Pa(X) part 1}, Algorithm \ref{algo: main} finds the parents of $X$ which belong to $\mathbf{S}_i\setminus \{X\}$ in lines 7-8.
        
        Since $\Pa{X}{\G} \subseteq \Anc{X}{\G}$, the parents of $X$ are either in $\mathbf{S}_i \setminus \{X\}$ or $\Anc{X}{\G} \setminus \mathbf{S}_i$.
        The following lemma shows how the algorithm finds the parents of $X$, which belong to $\Anc{X}{\G} \setminus \mathbf{S}_i$.  
        
        \begin{lemma}\label{lem: Pa(X) part 2}
            Suppose $Y \in \Anc{X}{\G} \setminus \mathbf{S}_i$ and $\mathbf{I}\subseteq \V \setminus \{X\}$ such that $\mathbf{S}_i \setminus \{X\} \subseteq \mathbf{I}$. 
            In this case, $Y \in \Pa{X}{\G}$ if and only if $\nsep{X}{Y}{\Anc{X}{\G} \setminus (\mathbf{S}_i \cup \{Y\})}{P_{do(\mathbf{I})}}{}$. 
        \end{lemma}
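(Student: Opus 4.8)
The plan is to first reduce the probabilistic statement to a purely graphical one. Since $do(\mathbf{I})$ is a full-support hard intervention, the argument already used in the proof of Lemma \ref{lem: learn des} shows that the CI assertions in $P_{do(\mathbf{I})}$ coincide with the $r$-separations of $\G_{\overline{\mathbf{I}}}$ (with $r=d$ in Scenario 1 and $r=\sigma$ in Scenario 2). Writing $\W \coloneqq \Anc{X}{\G}\setminus(\mathbf{S}_i \cup \{Y\})$, it therefore suffices to prove that $Y \in \Pa{X}{\G}$ if and only if $\nsep{X}{Y}{\W}{\G_{\overline{\mathbf{I}}}}{r}$, and to do this for both $r=d$ and $r=\sigma$. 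The forward (sufficient) direction is immediate: if $Y\in \Pa{X}{\G}$, then because $X \notin \mathbf{I}$ the edge $Y \to X$ is not removed and survives in $\G_{\overline{\mathbf{I}}}$; this single edge is a path with no intermediate vertex, so it can be neither $d$-blocked nor $\sigma$-blocked by any set, and since $X,Y \notin \W$ we get $\nsep{X}{Y}{\W}{\G_{\overline{\mathbf{I}}}}{r}$ for both notions.

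For the reverse (necessary) direction I would argue the contrapositive: assuming $Y \notin \Pa{X}{\G}$, I would establish $\sep{X}{Y}{\W}{\G_{\overline{\mathbf{I}}}}{\sigma}$. Proving the $\sigma$-separation is enough, since by Remark \ref{rem: d-sigma} $\sigma$-separation implies $d$-separation, so both scenarios follow at once. Before the case analysis I would record three structural facts about $\G_{\overline{\mathbf{I}}}$: (i) every variable of $\mathbf{S}_i \setminus \{X\}$ is a root, because $\mathbf{S}_i \setminus \{X\} \subseteq \mathbf{I}$ removes all its incoming edges; (ii) $\SCC{X}{\G_{\overline{\mathbf{I}}}} = \{X\}$, since otherwise some $C \in \mathbf{S}_i\setminus\{X\}$ would be strongly connected to $X$ in $\G_{\overline{\mathbf{I}}}$ and hence reachable from $X$, contradicting (i); and (iii) $X \notin \Anc{Y}{\G}$, because $Y \in \Anc{X}{\G}\setminus\mathbf{S}_i$ and $X$ reaching $Y$ would place $X$ and $Y$ in a common SCC, i.e.\ $Y \in \mathbf{S}_i$.

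I would then take an arbitrary simple path $\mathcal{P}$ between $X$ and $Y$ in $\G_{\overline{\mathbf{I}}}$ and show it is $\sigma$-blocked by $\W$, splitting on the edge of $\mathcal{P}$ incident to $X$. If this edge is $Z_1 \to X$ with $Z_1 \in \Pa{X}{\G}$, then $Z_1 \neq Y$ (otherwise $\mathcal{P}$ is the edge $Y\to X$ and $Y\in\Pa{X}{\G}$), and when $Z_1 \notin \mathbf{S}_i$ the vertex $Z_1$ lies in $\W$, is a non-collider whose arrow points into $X$, and satisfies $X \notin \SCC{Z_1}{\G_{\overline{\mathbf{I}}}}$ by (ii); hence $\mathcal{P}$ is $\sigma$-blocked at $Z_1$. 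If instead the incident edge is $X \to Z_1$, then by (i)--(ii) $Z_1 \notin \mathbf{S}_i$ and $Z_1 \in \De{X}{\G}\setminus\Anc{X}{\G}$, so $Z_1 \notin \W$; since $Y$ is an ancestor of $X$ while $Z_1$ is not, I would show that reaching $Y$ forces $\mathcal{P}$ to pass through a collider whose activation requires a descendant in $\W \subseteq \Anc{X}{\G}$, and that the resulting directed route, composed with $X\to Z_1$, contradicts (iii). The genuinely delicate case is the path that enters through a parent $Z_1 \in \mathbf{S}_i\setminus\{X\}$: being a root, $Z_1$ is a fork $X \gets Z_1 \to Z_2$ with $Z_1 \notin \W$, so blocking must be located further along $\mathcal{P}$.

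I expect this necessary direction to be the main obstacle, for two reasons. First, $\sigma$-blocking at a non-collider $Z \in \W$ additionally demands that the neighbor on the relevant side leaves $\SCC{Z}{\G_{\overline{\mathbf{I}}}}$, so chains internal to an SCC of $\G_{\overline{\mathbf{I}}}$ are not automatically blocked; one must therefore control the SCC structure of the \emph{intervened} graph $\G_{\overline{\mathbf{I}}}$, not merely that of $\G$. Second, $\W$ is defined through $\Anc{X}{\G}$ rather than $\Anc{X}{\G_{\overline{\mathbf{I}}}}$, so checking the collider condition (a collider is active only if it has a descendant in $\W$) requires carefully relating ancestry in $\G_{\overline{\mathbf{I}}}$ to ancestry in $\G$. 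My strategy would be to organize the bookkeeping so that any putative $\sigma$-active path is driven to yield either the edge $Y \to X$, contradicting $Y \notin \Pa{X}{\G}$, or a directed path exhibiting $Y \in \De{X}{\G}$, contradicting fact (iii); ruling out the $\mathbf{S}_i$-root and collider subcases is the technical heart of the argument.
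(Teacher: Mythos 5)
Your proposal follows essentially the same route as the paper's proof: reduce the CI statement to $r$-separation in $\G_{\overline{\mathbf{I}}}$, get sufficiency from the surviving edge $Y\to X$, and for necessity split on the edge of the path incident to $X$, blocking either at a non-collider parent $Z_1\in \Pa{X}{\G}\setminus \mathbf{S}_i$ (which lies in the conditioning set and satisfies the $\sigma$-condition since $\SCC{X}{\G_{\overline{\mathbf{I}}}}=\{X\}$) or at the first collider, which is forced to be a descendant of $X$ outside $\mathbf{S}_i$ and hence not an ancestor of the conditioning set. The subcases you flag as the technical heart are resolved in the paper exactly along the lines you sketch; the only cosmetic difference is that the final contradiction lands on placing the collider in $\De{X}{\G}\cap\Anc{X}{\G}=\mathbf{S}_i$ rather than on your fact (iii).
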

        \begin{proof}
            Let $\Z = \Anc{X}{\G} \setminus (\mathbf{S}_i \cup \{Y\})$. 
            We note that $\nsep{X}{Y}{\Z}{P_{do(\mathbf{I})}}{}$ if and only if $\nsep{X}{Y}{\Z}{\G_{\overline{\mathbf{I}}}}{r}$.
            
            \textit{Sufficient part: }
            If $Y \in \Pa{X}{\G}$, then $Y \in \Pa{X}{\G_{\overline{\mathbf{I}}}}$ since $X \notin \mathbf{I}$.
            Thus, $\nsep{X}{Y}{\Z}{\G_{\overline{\mathbf{I}}}}{r}$.
            
            \textit{Necessary part}: 
            Suppose $Y \notin \Pa{X}{\G}$. 
            In this case, $Y \notin \Ch{X}{\G}$ because $Y \in \Anc{X}{\G} \setminus \mathbf{S}_i$.
            We need to show that $\sep{X}{Y}{\Z}{\G_{\overline{\mathbf{I}}}}{r}$.
            Let $\mathcal{P}=(X, Z_1,\cdots,Z_t,Y)$ be a path in $\G_{\overline{\mathbf{I}}}$ between $X$ and $Y$. 
            Note that $t\geq 1$ because $Y \notin \Pa{X}{\G} \cup \Ch{X}{\G}$.  
            We have the following cases:
            \begin{itemize}
                \item $X \gets Z_1$ and $Z_1 \notin \mathbf{S}_i$: 
                Then, $Z_1$ $r$-blocks $\mathcal{P}$ because $Z_1 \in \Pa{X}{\G} \setminus \mathbf{S}_i \subseteq \Z$ and $X \notin \SCC{Z_1}{\G_{\overline{\mathbf{I}}}}$.
                \item $X \gets Z_1$ and $Z_1 \in \mathbf{S}_i$: 
                Then, $Z_1 \in \mathbf{I}$ and  $\Pa{Z_1}{\G_{\overline{\mathbf{I}}}} = \varnothing$. 
                Hence, $t \geq 2$ and $Z_1 \to Z_2$. 
                Note that $Y \in \Anc{Z_1}{\G}$ since $Y \in \Anc{X}{\G}$ and $Z_1 \in \mathbf{S}_i$. 
                Moreover, $Y \notin \mathbf{S}_i = \Anc{Z_1}{\G} \cap \De{Z_1}{\G}$. 
                Therefore, $Y \notin \De{Z_1}{\G}$ and $\mathcal{P}$ contains a collider. 
                Let $Z_j$ be the first collider on $\mathcal{P}$.
                Note that $j \geq 2$ and $Z_j \notin \mathbf{S}_i$ because the variables in $\mathbf{S}_i \setminus \{X\}$ do not have any parents in $\G_{\overline{\mathbf{I}}}$. 
                Furthermore, $Z_j \in \De{Z_1}{\G} = \De{X}{\G}$.
                Hence, $Z_j \notin \Anc{\Z}{\G_{\overline{\mathbf{I}}}}$ and therefore,  $Z_j$ $r$-blocks $\mathcal{P}$.
                \item $X \to Z_1$:
                This case is similar to the previous case. 
                $Y \notin \De{X}{\G}$ because $Y \notin \mathbf{S}_i$. Hence, $\mathcal{P}$ contains a collider.
                Let $Z_j$ be the first collider on $\mathcal{P}$.
                $Z_j \notin \mathbf{S}_i$ because the variables in $\mathbf{S}_i \setminus \{X\}$ do not have any parents in $\G_{\overline{\mathbf{I}}}$. 
                Furthermore, $Z_j \in \De{X}{\G}$.
                Hence, $Z_j \notin \Anc{\Z}{\G_{\overline{\mathbf{I}}}}$ and therefore,  $Z_j$ $r$-blocks $\mathcal{P}$.
            \end{itemize}
            In all of the aforementioned cases, $\mathcal{P}$ is $r$-blocked which shows that $\sep{X}{Y}{\Z}{\G_{\overline{\mathbf{I}}}}{r}$.
        \end{proof}
        Applying Lemma \ref{lem: Pa(X) part 2}, Algorithm \ref{algo: main} finds the rest of the parents of $X$ in lines 9-10.
        Hence, by the time the algorithm terminates, all the parents of $X$ are added to $\hat{\G}$, and $\hat{\G}$ will equal $\G$.
        
        In Section \ref{sec: stage 1}, we showed that the descendant sets and SCCs of a DG $\G$ can be learned by performing experiments on the elements of a colored separating system. 
        Herein, we showed that using the information about the descendant sets and SCCs,  $\G$ can be recovered by performing experiments on the elements of a lifted separating system.
        Moreover, we provided Propositions \ref{prp: colored separating system} and \ref{prp: lifted sep sys} for constructing separating systems and lifted separating systems, respectively, which imply the following.
        \begin{corollary}\label{cor: upper bound}
            Algorithms \ref{algo: SCC} and \ref{algo: main} together can learn a DG $\G$ with $n$ vertices with at most
            \begin{equation} \label{eq: upper bound 1}
                2\lceil\log_2(\chi(\G_r^{obs}))\rceil + \smax{\G}
            \end{equation}
            experiments.
            Comparing this with the lower bound in Theorem \ref{thm: lower bound - number}, the proposed approach is order-optimal in terms of the number of experiments up to an additive logarithmic term.
        \end{corollary}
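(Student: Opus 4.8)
The plan is to assemble the corollary directly from the correctness guarantees and the cardinality bounds already established for the two stages, so the argument is primarily bookkeeping rather than new analysis. First I would verify identifiability end-to-end. By Corollary \ref{cor: learn des}, running Algorithm \ref{algo: SCC} produces the exact descendant sets $\{\De{X}{\G}\}_{X\in\V}$ and the exact set of SCCs $\mathcal{S}=\{\mathbf{S}_1,\dots,\mathbf{S}_k\}$ of $\G$; the ancestor sets required as input to the second stage are obtained simply by transposing the descendant relation, since $Y\in\Anc{X}{\G}$ iff $X\in\De{Y}{\G}$. Feeding these exact outputs into Algorithm \ref{algo: main}, Lemmas \ref{lem: Pa(X) part 1} and \ref{lem: Pa(X) part 2} show that for every $X$ the algorithm recovers exactly $\Pa{X}{\G}$: Lemma \ref{lem: Pa(X) part 1} handles the candidate parents inside $\mathbf{S}_i=\SCC{X}{\G}$ (lines 7--8), and Lemma \ref{lem: Pa(X) part 2} handles those in $\Anc{X}{\G}\setminus\mathbf{S}_i$ (lines 9--10). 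Because $\Pa{X}{\G}\subseteq\Anc{X}{\G}$, these two cases are exhaustive, so $\hat{\G}=\G$ and hence $[\G]^r_{\I}=\{\G\}$ for the union $\I$ of the two designed experiment sets.

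Next I would count the experiments. In Stage 1 the performed experiments are precisely the elements of a colored separating system on $(\V,\mathcal{C})$, where $\mathcal{C}$ is a vertex coloring of $\G_r^{obs}$; choosing $\mathcal{C}$ to use $\chi(\G_r^{obs})$ colors, Proposition \ref{prp: colored separating system} yields such a system with at most $2\lceil\log_2(\chi(\G_r^{obs}))\rceil$ elements. Although Algorithm \ref{algo: SCC} iterates over $X\in\V$, the experiments it invokes are drawn from this single system and each interventional distribution is computed once and reused, so the count is $|\I|$, not more. In Stage 2 the experiments are the elements of a lifted separating system on $(\V,\mathcal{S})$, which by Proposition \ref{prp: lifted sep sys} has at most $\smax{\G}$ elements. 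Summing the two cardinalities gives the bound in Equation \eqref{eq: upper bound 1}. (The observational distribution used in line 1 of Algorithm \ref{algo: SCC} to learn $\G_r^{obs}$ is not an intervention and is therefore not counted as an experiment.)

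Finally, for order-optimality I would invoke Theorem \ref{thm: lower bound - number}, which guarantees that at least $\smax{\G}$ experiments are necessary in the worst case. Since $\chi(\G_r^{obs})\leq n$, the overhead satisfies $2\lceil\log_2(\chi(\G_r^{obs}))\rceil\leq 2\lceil\log_2 n\rceil$, so the achievable upper bound exceeds the lower bound by only an additive logarithmic term, which is exactly the claimed order-optimality.

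There is no genuine obstacle once the lemmas are in hand; the only point requiring care is that the logarithmic term be governed by $\chi(\G_r^{obs})$, which presupposes an \emph{optimal} (or near-optimal) coloring of $\G_r^{obs}$ rather than an arbitrary proper one. It is worth flagging that computing $\chi(\G_r^{obs})$ exactly is NP-hard in general, so the statement should be read as an achievability (existence) bound: any proper coloring still produces a valid, if possibly larger, design, and correctness of identification is unaffected by the particular coloring used.
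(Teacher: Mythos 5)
Your proposal is correct and follows essentially the same route as the paper, which derives this corollary directly by combining the correctness guarantees (Corollary \ref{cor: learn des} and Lemmas \ref{lem: Pa(X) part 1}--\ref{lem: Pa(X) part 2}) with the cardinality bounds of Propositions \ref{prp: colored separating system} and \ref{prp: lifted sep sys}. Your added remarks on reusing each interventional distribution, on not counting the observational data as an experiment, and on the coloring being near-optimal are sensible clarifications but do not change the argument.
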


        \begin{remark} \label{remark: upper bound DAG}
            When $\G$ is a DAG, the first stage of the algorithm uniquely learns $\G$ (Remark \ref{remark: stop if DAG}).
            Hence, the algorithm performs $2\lceil\log_2(\chi(\G_r^{obs}))\rceil$ experiments.
        \end{remark}

\section{Bounded-size Experiment Design} \label{sec: bounded size}
    In the previous sections, we did not impose any constraint on the size of experiments, and our algorithm was allowed to perform experiments with arbitrary sizes.
    In practice, performing \emph{large-sized} experiments may not be possible or too costly.
    In this section, we study the experiment design problem with a constraint on the size of the experiments.
    Formally, our goal is to design a collection of subsets, denoted by $\I$, such that $[\G]^r_{\I} = \{\G\}$ (i.e., $\G$ can be learned by performing experiments on the elements of $\I$), where the size of each $\mathbf{I} \in \I$ is upper bounded by a constant number $M <n$ (i.e., $|\mathbf{I}|\leq M$).
    It is noteworthy that this problem was previously studied for acyclic causal graphs \citep{shanmugam2015learning,lindgren2018experimental}.

    \begin{remark}
        As proved in Theorem \ref{thm: lower bound - size}, it is necessary to perform some experiments with size at least $\smax{\G}-1$ to learn a DG $\G$ in the worst case.
        Hence, the upper bound $M$ cannot be smaller than $\smax{\G}-1$.
    \end{remark}
    We will modify the two stages of our proposed method (introduced in Sections \ref{sec: stage 1} and \ref{sec: algo}) in order to accommodate the new constraint that the size of the experiments is bounded by a constant $M\geq \smax{\G}-1$.

    \subsection{Stage 1: \texorpdfstring{$(n,M)$}{}-separating System}
        In the first stage, instead of learning $\G^{obs}_r$ and constructing a colored separating system, we construct an $(n,M)$-separating system, formally defined by \cite{shanmugam2015learning} as follows.
        
        \begin{definition}[$(n,M)$-separating system]
            An $(n,M)$-separating system $\I$ on $\V$ is a collection of subsets of $\V$ such that $|\mathbf{I}| \leq M$ for each $\mathbf{I} \in \I$, and for every ordered pair of distinct variables $(X,Y)$ in $\V$ there exists $\mathbf{I} \in \I$ such that $X\in \mathbf{I}$ and $Y \notin \mathbf{I}$.
        \end{definition}
        \cite{shanmugam2015learning} also provided an achievable bound on the cardinality of an $(n,M)$-separating system.
    
        \begin{proposition}[\citealt{shanmugam2015learning}]
            There exists an $(n,M)$-separating system on $\V$ with at most $\lceil\frac{n}{M}\rceil \lceil\log_{\lceil\frac{n}{M}\rceil} n\rceil$ elements.
        \end{proposition}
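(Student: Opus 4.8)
The plan is to realize each variable as a distinct ``address'' over a fixed alphabet and to take as the sets of the separating system the level sets of each address coordinate. Concretely, write $n = |\V|$, and set $a := \lceil n/M\rceil$ and $\ell := \lceil \log_a n\rceil$. Since $M<n$ we have $a\ge 2$, so the base is legitimate, and by the choice of $\ell$ we have $a^\ell \ge n$, so there are at least $n$ distinct vectors in $\{0,1,\dots,a-1\}^\ell$ available as labels. Two bookkeeping facts must come out right: each coordinate will contribute $a$ sets and there are $\ell$ coordinates, giving $a\ell = \lceil n/M\rceil\,\lceil \log_{\lceil n/M\rceil} n\rceil$ sets in total, exactly the claimed bound; and, because $a \ge n/M$, one has $\lceil n/a\rceil \le M$, which is the slack I will need for the size constraint.

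First I would fix an injective labeling $X \mapsto (\lambda_1(X),\dots,\lambda_\ell(X))$ of the $n$ variables into $\{0,\dots,a-1\}^\ell$ and define, for each coordinate $i\in\{1,\dots,\ell\}$ and symbol $v\in\{0,\dots,a-1\}$, the set $\mathbf{I}_{i,v} := \{X\in\V : \lambda_i(X)=v\}$; the proposed system is $\I := \{\mathbf{I}_{i,v}\}_{i,v}$, of cardinality $a\ell$. The separating property is then immediate: given an ordered pair of distinct variables $(X,Y)$, their labels differ in some coordinate $i$, so with $v := \lambda_i(X)\neq \lambda_i(Y)$ we get $X\in\mathbf{I}_{i,v}$ and $Y\notin\mathbf{I}_{i,v}$, as required. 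What is not automatic is the size bound $|\mathbf{I}_{i,v}|\le M$, and this is exactly where the labeling cannot be chosen arbitrarily.

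The hard part will be choosing the labeling so that it is \emph{balanced} in every coordinate, meaning that for each $i$ and $v$ at most $\lceil n/a\rceil$ variables carry symbol $v$ in coordinate $i$; combined with $\lceil n/a\rceil\le M$ this yields $|\mathbf{I}_{i,v}|\le M$ for all sets simultaneously. It is worth stressing that balance is genuinely necessary: the naive labeling by the base-$a$ expansions of $0,1,\dots,n-1$ is \emph{not} balanced in the high-order coordinates (when $n$ is far from $a^\ell$ the most significant digit is almost constant), and a small case such as $n=5$, $M=3$ already produces a set of size $4>M$. I would therefore build the labels by a recursive even partition: split $\V$ into $a$ blocks of sizes $\lfloor n/a\rfloor$ or $\lceil n/a\rceil$, let $\lambda_1$ record the block index, and recurse on $\lambda_2,\dots,\lambda_\ell$ within each block (each block has size at most $\lceil n/a\rceil \le a^{\ell-1}$, so enough coordinates remain). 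The delicate point, which I expect to demand the most care, is that a plain recursion lets the leftover items of under-full blocks pile onto the same symbol at the next level and destroys balance; the remedy is to rotate which of the $n \bmod a$ oversized blocks receive the extra element across symbols from one level to the next, so that the roundings cancel globally and every coordinate load stays within $\{\lfloor n/a\rfloor,\lceil n/a\rceil\}$. Verifying this invariant — that the coordinated remainder rotation keeps each coordinate perfectly balanced — is the crux of the argument.

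Once the balanced labeling is in hand, the three requirements close out at once: every $\mathbf{I}_{i,v}$ has size at most $\lceil n/a\rceil\le M$; the system separates every ordered pair by the coordinate argument above; and $|\I| = a\ell = \lceil n/M\rceil\,\lceil\log_{\lceil n/M\rceil} n\rceil$. I would present the write-up so that the balancing claim is isolated as a standalone combinatorial lemma, which keeps the separating-system verification short and transparent.
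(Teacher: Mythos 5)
The paper does not actually prove this proposition --- it imports it from \cite{shanmugam2015learning} and only remarks that the proof there is constructive --- so the comparison here is against that reference, whose argument is the same balanced-labeling scheme you outline: encode the $n$ variables as distinct strings of length $\ell=\lceil\log_a n\rceil$ over an alphabet of size $a=\lceil n/M\rceil$, take the coordinate level sets as the separating system, and arrange the labeling so that every letter occurs at most $\lceil n/a\rceil\le M$ times in every coordinate. Your bookkeeping ($a\ge 2$ since $M<n$, $a^\ell\ge n$, $\lceil n/a\rceil\le M$, $|\I|=a\ell$ matching the claimed bound) and the separation argument for ordered pairs are all correct, and you rightly observe that the naive base-$a$ expansion is unbalanced in the high-order digits.

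The gap is in the one step you yourself flag as the crux: the existence of a balanced injective labeling is asserted rather than proved, and the remedy you sketch would not work as described. Rotating which of the $n\bmod a$ oversized blocks receive the extra element only corrects a rounding imbalance of order $a$; the real failure mode appears once the recursion produces blocks of size smaller than $a$. Take $n=10$, $M=4$, so $a=3$, $\ell=3$, $\lceil n/a\rceil=4$: the level-one blocks have sizes $4,3,3$ and the level-two blocks have sizes $2,1,1,1,1,1,1,1,1$. In the last coordinate each singleton block assigns its single element some symbol, and if these choices are not coordinated \emph{across} blocks one symbol can be used up to $9>4=M$ times; no reshuffling of the oversized blocks touches this. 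What is actually needed is a globally coordinated (round-robin across block boundaries) assignment of the coordinate-$i$ symbol, together with a verification that this global assignment still leaves every resulting sub-block of size at most $a^{\ell-i}$ so that the remaining coordinates can separate its members --- or else an explicit labeling formula, which is precisely the content of the lemma proved in \cite{shanmugam2015learning}. Until that balancing lemma is established, the bound $|\mathbf{I}_{i,v}|\le M$, which is the entire point of the bounded-size construction, is not justified.
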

        \begin{remark}
            \cite{shanmugam2015learning} provided a constructive proof for this proposition which allows us to obtain an $(n,M)$-separating system on $\V$ with at most $\lceil\frac{n}{M}\rceil \lceil\log_{\lceil\frac{n}{M}\rceil} n\rceil$ elements.
        \end{remark}
        It suffices to modify lines 1-3 of Algorithm \ref{algo: SCC} by setting $\I$ to be an $(n,M)$-separating system on $\V$ and leaving the rest of the algorithm unchanged.
        It is straightforward to verify that the modified algorithm obtains $\{\De{X}{\G}\}_{X\in \V}$ and the set of SCCs $\mathcal{S} = \{\mathbf{S}_1,\dots,\mathbf{S}_k\}$ of $\G$ by performing experiments on the elements of $\I$.

    \subsection{Stage 2: Bounded Lifted Separating System}
        Algorithm \ref{algo: main} remains unchanged for this stage except that in line 2 of Algorithm \ref{algo: main}, we need to construct a lifted separating system on $(\V,\mathcal{S})$ such that the size of the elements of $\I$ does not exceed $M$.
        
        \begin{theorem} \label{thm: upper bound - bounded size}
            Suppose $\smax{\G}-1 \leq M$.
            There exists a lifted separating system $\I$ on $(\V,\mathcal{S})$ such that for each $\mathbf{I} \in \I$, $|\mathbf{I}| \leq M$, and $|\I| \leq \smax{\G}(1+ \lfloor \frac{n-\smax{\G}-1}{M-\smax{\G}+2}\rfloor)$.
        \end{theorem}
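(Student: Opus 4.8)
The plan is to take the unbounded construction of Proposition~\ref{prp: lifted sep sys} and ``cut'' each of its $\smax{\G}$ experiments into pieces of size at most $M$, thereby reducing the task to a bin-packing problem that I solve greedily. Write $m:=\smax{\G}$ and recall from \eqref{eq: lifted sep sys} that the unbounded lifted separating system consists of $m$ experiments $\mathbf{I}_1,\dots,\mathbf{I}_m$, where $\mathbf{I}_i$ is the \emph{disjoint} union of the blocks $\mathbf{S}_j\setminus\{X^j_i\}$ taken over those SCCs with $|\mathbf{S}_j|\geq i$. For the defining property of Definition~\ref{def: lifted sep sys} it is essential that each block $\mathbf{S}_j\setminus\{X^j_i\}$ stay inside a single experiment, since splitting a block would leave no experiment containing all of $\mathbf{S}_j\setminus\{X^j_i\}$. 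Hence the only freedom is to partition, for each fixed ``round'' $i$, the blocks comprising $\mathbf{I}_i$ into groups whose total size is at most $M$; each group becomes one bounded experiment, and the union over rounds is again a lifted separating system whose every element has size $\le M$. It then remains to bound, for each round $i$, the number $b_i$ of groups by $1+\lfloor (n-m-1)/(M-m+2)\rfloor$.

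The heart of the matter is this per-round bin-packing bound, and two observations drive it. First, every block has size at most $m-1\le M$, so each block fits in a bin; this is exactly where the hypothesis $\smax{\G}-1\le M$ is used. Second, I would fix one SCC of maximum size, call its block $B^\star$ (of size $m-1$), and note that the remaining blocks of round $i$ have total size $C_i':=\sum_{j:\,|\mathbf{S}_j|\geq i}(|\mathbf{S}_j|-1)-(m-1)$; since the non-anchor SCCs together span at most $n-m$ vertices and each such SCC reaching round $i$ costs a $-1$, we get $C_i'\le n-m-1$ (with $C_i'=0$ if none reaches round $i$). I would then pack the non-anchor blocks by next-fit into bins of capacity $M$: whenever a bin is closed, the block that failed to fit has size $\le m-1$, so the closed bin already holds at least $M-(m-1)+1=M-m+2$ units. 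Let $b'$ be the number of bins so produced, and case-split on $B^\star$. If $B^\star$ fits into one of these $b'$ bins, then $b_i=b'$, and since all but the last of the $b'$ bins hold $\ge M-m+2$ we have $(b'-1)(M-m+2)\le C_i'$, whence $b_i\le 1+\lfloor C_i'/(M-m+2)\rfloor$. If $B^\star$ fits into none of them, then every one of the $b'$ bins has free space $<m-1$, i.e.\ content $\ge M-m+2$, so $b'(M-m+2)\le C_i'$ and $b_i=b'+1\le 1+\lfloor C_i'/(M-m+2)\rfloor$. In both cases $b_i\le 1+\lfloor C_i'/(M-m+2)\rfloor\le 1+\lfloor (n-m-1)/(M-m+2)\rfloor$.

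Summing over the $m$ rounds gives $|\I|\le m\bigl(1+\lfloor (n-m-1)/(M-m+2)\rfloor\bigr)$, which is the claim. To finish I would verify that the assembled collection really is a lifted separating system: for every SCC $\mathbf{S}_j$ and vertex $X^j_i$ (with $i\le|\mathbf{S}_j|\le m$, so round $i$ exists), the group containing $\mathbf{S}_j\setminus\{X^j_i\}$ holds this whole block and excludes $X^j_i$---the exclusion because the other blocks placed alongside it come from SCCs disjoint from $\mathbf{S}_j$---matching Definition~\ref{def: lifted sep sys}; singleton SCCs contribute empty blocks and are excluded from every experiment, so their requirement is met automatically. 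I expect the main obstacle to be precisely the sharpening that yields the numerator $n-m-1$ instead of the naive $\approx n$: a plain next-fit count that charges the large anchor block $B^\star$ against the fill budget loses an additive $m$, and it is the case analysis above---isolating $B^\star$ and observing that it forces a \emph{full} extra bin only when all other bins are already nearly full---that recovers the stated constant. A minor point I would treat separately is the degenerate case $n=\smax{\G}$ (a single SCC spanning $\V$), where there are no non-anchor blocks and each round simply needs one experiment.
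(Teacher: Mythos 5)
Your proposal is correct and follows essentially the same route as the paper: both reduce each of the $\smax{\G}$ rounds of the unbounded construction from Proposition~\ref{prp: lifted sep sys} to a bin-packing problem on the disjoint blocks $\mathbf{S}_j\setminus\{X^j_i\}$, pack greedily, and obtain the per-round bound from the two facts that every block has size at most $\smax{\G}-1\leq M$ (so a bin that rejects a block already holds at least $M-\smax{\G}+2$ elements) and that the relevant total mass is at most $n-\smax{\G}-1$. The only differences are cosmetic: the paper fixes $t+1$ bins in advance and shows first-fit never fails via a contradiction, whereas you run next-fit and handle the largest (anchor) block by a short case analysis; both yield the same count.
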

        \begin{proof}
            For each $1\leq j \leq k$, suppose $\mathbf{S}_j = \{X^j_1, \cdots X^j_{l_j}\}$, where $l_j = |\mathbf{S}_j|$.
            Also, let $l_{\max} = \max(l_1,\cdots,l_k) = \smax{\G}$ and $t = \lfloor \frac{n-l_{\max}-1}{M-l_{\max} +2}\rfloor$.
            
            Let us fix an $1 \leq i \leq l_{\max}$. 
            Consider set $\A = \{j \mid 1 \leq j \leq k, i \leq l_j\}$ which is the set of $j$s that variables $X_i^j$s are defined.
            Furthermore, for each $j \in \A$, we define $\B_j = \mathbf{S}_j \setminus \{X_i^j \}$ and $b_j = |\B_j| = l_j-1$.
            Note that $\B_j$s are disjoint and $b_j \leq l_{\max}-1 \leq M$.
            Next, we will introduce $t+1$ subsets (we call them bins) $\mathbf{I}_1,\cdots, \mathbf{I}_{t+1}$ of $\V$, each with size at most $M$, such that for each $j \in \A$, there exists $\mathbf{I} \in \{\mathbf{I}_1,\cdots, \mathbf{I}_{t+1}\}$ such that $\B_j \subseteq \mathbf{I}$ but $X_i^j \notin \mathbf{I}$.
            It is noteworthy that this problem is a special case of \emph{bin-packing problem}.
            For simplicity, suppose $\A = \{j_1, \cdots, j_a\}$, where $a=|\A|$.
            We initialize the bins with empty sets.
            Then, we add $\B_j$s to them in a greedy manner such that the size of bins remains less than $M$.
            That is, we first add the variables in $\B_{j_1}$ to $\mathbf{I}_1$.
            Note that this is feasible since $|\B_{j_1}| \leq M$.
            Then, we add $\B_{j_2}$ to the first feasible bin, i.e., the first bin, such that its size remains less than $M$ after adding the elements of $\B_{j_2}$ to it.
            We subsequently add the elements of $\B_j$s to the first feasible bin.
            It is left to show that there always exists a feasible bin during this process.
            Suppose $\B_{j_1}, \cdots, \B_{j_x}$ are already placed in the bins, where $1 \leq x < a$, and we want to find a feasible bin for $\B_{j_{x+1}}$.
            Assume by contradiction that there is no feasible bin for $\B_{j_{x+1}}$. 
            This shows that adding $\B_{j_{x+1}}$ to any bin results in a bin with at least $M+1$ elements.
            Hence,
            \begin{equation} \label{eq: proof 1}
                (t+1)(M - b_{j_{x+1}} +1) \leq b_{j_1}+\cdots +b_{j_x}.
            \end{equation}
            On the other hand, $\B_{j_1} \cup \cdots \cup \B_{j_x}$ does not intersect with $\B_{j_{x+1}}$ and does not include any of the variables in $\{X_i^{j_1}, \cdots , X_i^{j_{x+1}}\}$. 
            Hence, 
            \begin{equation} \label{eq: proof 2}
                b_{j_1}+\cdots +b_{j_x} \leq n-(b_{j_{x+1}} + x+1).
            \end{equation}
            Note that $b_{j_{x+1}} \leq l_{\max}-1$ and $x\geq 1$.
            Hence, Equations \eqref{eq: proof 1} and \eqref{eq: proof 2} imply that
            \begin{equation*}
                 \lfloor \frac{n-l_{\max}-1}{M-l_{\max} +2}\rfloor +1 = t+1 \leq \frac{n-(b_{j_{x+1}} + x+1)}{M - b_{j_{x+1}} +1} \leq \frac{n-l_{\max}-1}{M - l_{\max} +2}, 
            \end{equation*}
            which is a contradiction. 
            This shows that it is feasible to add all the $\B_j$s to the bins in a greedy manner, and therefore, the constructed $t+1$ subsets satisfy our claim.
            
            Finally, if we repeat the whole process for each $1 \leq i \leq l_{\max}$, the constructed subsets will form a lifted separating system.
            Note that the total number of subsets will equal $l_{\max}(1+t)$, which is our desired bound.
        \end{proof}
        Equipped with Theorem \ref{thm: upper bound - bounded size}, we can obtain a lifted separating system such that the size of its elements is bounded by $M$.
        Moreover, by setting $M= \smax{\G}-1$ in Theorem \ref{thm: upper bound - bounded size}, we get the following notable corollary.
        
        \begin{corollary} \label{cor: Thm 1 is tight}
            DG $\G$ can be learned by performing experiments with size at most $\smax{\G}-1$.
            Hence, the lower bound in Theorem \ref{thm: lower bound - size} is tight.
        \end{corollary}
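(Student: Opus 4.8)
The plan is to derive this corollary as the boundary instance $M = \smax{\G}-1$ of Theorem \ref{thm: upper bound - bounded size}, together with the observation that the bounded-size Stage~1 also stays within the size budget. First I would check that the hypothesis of Theorem \ref{thm: upper bound - bounded size}, namely $\smax{\G}-1 \leq M$, holds with equality when $M = \smax{\G}-1$, so the theorem applies verbatim and produces a lifted separating system $\I$ on $(\V,\mathcal{S})$ in which every element has size at most $\smax{\G}-1$. This is the Stage~2 budget we want.

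Then I would confirm that the choice $M = \smax{\G}-1$ does not degenerate the construction used in the proof of Theorem \ref{thm: upper bound - bounded size}. The critical denominator there becomes $M - \smax{\G} + 2 = 1 > 0$, so $t = \lfloor (n-\smax{\G}-1)/1\rfloor$ remains well defined, and each block $\B_j = \mathbf{S}_j \setminus \{X_i^j\}$ has size $b_j = l_j - 1 \leq \smax{\G}-1 = M$, so it still fits inside a single bin. Hence the greedy bin-packing argument carries over unchanged, certifying that experiments of size at most $\smax{\G}-1$ suffice for Stage~2; by the correctness of Algorithm \ref{algo: main} (Lemmas \ref{lem: Pa(X) part 1} and \ref{lem: Pa(X) part 2}), all parent sets, and therefore $\G$, are recovered.

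For Stage~1 I would note that an $(n,M)$-separating system consists by definition of subsets of size at most $M$, so taking $M = \smax{\G}-1$ there as well keeps every Stage~1 experiment within the same size budget while still letting the modified Algorithm \ref{algo: SCC} recover the descendant sets and the SCCs. Here the regime of interest is $\smax{\G}\geq 2$ (the setting of the lower bound in Theorem \ref{thm: lower bound - size}, stated for $c>1$), so $M = \smax{\G}-1 \geq 1$ and such a separating system indeed exists. Combining the two stages gives $[\G]^r_{\I} = \{\G\}$ using only experiments of size at most $\smax{\G}-1$.

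Finally, the tightness claim follows by pairing this achievability with Theorem \ref{thm: lower bound - size}, which exhibits, for each $c = \smax{\G}$, a DG that cannot be identified by any family of experiments all of size strictly less than $\smax{\G}-1$. Since size exactly $\smax{\G}-1$ is achievable for every such $\G$, the threshold $\smax{\G}-1$ is optimal and the lower bound is tight. The only step that genuinely requires care is the degenerate-denominator check in the second paragraph; everything else is a direct instantiation of results already established.
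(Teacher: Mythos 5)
Your proposal is correct and follows essentially the same route as the paper, which obtains the corollary precisely by setting $M=\smax{\G}-1$ in Theorem \ref{thm: upper bound - bounded size} (with Stage~1 handled by an $(n,M)$-separating system of the same size budget). Your extra checks---that the denominator $M-\smax{\G}+2$ equals $1$ rather than degenerating, and that each block still fits in a bin---are sensible verifications of details the paper leaves implicit, but they do not change the argument.
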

        To sum up this section, our algorithms can learn a DG $\G$ with $n$ vertices by performing at most
        \begin{equation} \label{eq: upper bound 2}
            \lceil\frac{n}{M}\rceil \lceil\log_{\lceil\frac{n}{M}\rceil} n\rceil+ \smax{\G}(1+ \lfloor \frac{n-\smax{\G}-1}{M-\smax{\G}+2}\rfloor)
        \end{equation}
        experiments with size at most $M$, where $\smax{\G}-1 \leq M < n$.

        \begin{remark}
            Similar to Remark \ref{remark: upper bound DAG}, when $\G$ is a DAG, the first stage of the algorithm uniquely learns $\G$.
            Hence, the algorithm only performs $\lceil\frac{n}{M}\rceil \lceil\log_{\lceil\frac{n}{M}\rceil} n\rceil$ experiments.
        \end{remark}

\section{Simulation Results} \label{sec: simulation}
    In this section, we evaluate the performance of the proposed method over random graphs generated from a variant of stochastic block models (SBMs).\footnote{Our codes are available at \url{https://github.com/Ehsan-Mokhtarian/cyclic_experiment_design}.}
    
    \subsection{Graph Generation}
        In an SBM($n,p,b$), a graph $\G$ with $n$ vertices is generated as follows:
        the variables are randomly partitioned into $\lceil n/b\rceil$ blocks: $\B_1,\cdots,\B_{\lceil n/b\rceil}$, where $|\B_i|=b$ for $1\leq i\leq \lceil n/b\rceil-1$.
        For two variables in the same block, there can exist an edge in both directions, each with probability $p$.
        For two variables in different blocks, there can be an edge between them with probability $p$ only in one direction.
        That is, directed edge $(X,Y)$ exists with probability $p$ when $X \in \B_i$ and $Y \in \B_j$, where $1 \leq i \!<\! j\! \leq \! \lceil n/b\rceil$.
        This means that the variables in each SCC belong to the same block, and $b$ is a surrogate for $\smax{\G}$.

    \subsection{Data Generation}
        For each graph, synthetic data sets from observational and interventional distributions were generated with a finite number of samples and fed to our proposed algorithm.
        The observational samples were generated using a linear SCM where each variable $X$ is a linear combination of its parents plus an exogenous noise variable $\epsilon_X$; the coefficients were chosen uniformly at random from $[-1.5,-1] \cup [1,1.5]$, and $\epsilon_X$ was generated at random according to $\mathcal{N}(0,\sigma_X^2)$, where $\sigma_X$ is selected uniformly at random from $[\sqrt{0.5}, \sqrt{1.5}]$.
        To generate interventional samples for an experiment on a subset $\mathbf{I} \subseteq \V$, the equation of each variable in $\V \setminus \mathbf{I}$ remained unchanged, and the equation of each variable $X \in \mathbf{I}$ was replaced by $X = \epsilon_X$, where $\epsilon_X$ had the same distribution as in the original SCM.
        
    \subsection{Implementation Details} \label{subsec: implementation}
        For the simulations of this section, we used the structure learning algorithm in \cite{mokhtarian2021learning} to learn $\G^{obs}_r$, as it is scalable to large graphs.
        We note that due to Remark \ref{remark: Gobs}, the algorithm does not need to be \emph{complete} (even for DAGs), as we just need $\G^{obs}_r$ to be a supergraph of the skeleton of $\G$.
        Hence, we can exploit any constraint-based causal discovery from observational data method that is \emph{sound} (but not necessarily \emph{complete}) for DAGs to learn $\G^{obs}_r$.
        
        To color $\G^{obs}_r$, we applied \emph{trail-path} algorithm in \cite{bandyopadhyay2020graph}.
        To find the descendant sets and the strongly connected components of $\mathcal{H}$ in line 9 of Algorithm \ref{algo: SCC}, we used the predefined function \textit{conncomp} in MATLAB.
        Finally, we used Fisher Z-transformation with a significance level of $0.01$ to perform conditional independence tests.
        
        % All of the experiments were run in MATLAB on a MacBook Pro laptop equipped with a 1.7 GHz Quad-Core Intel Core i7 processor and 16GB, 2133 MHz, LPDDR3 RAM.

    \subsection{Results}
        In Figure \ref{fig: simulations}, we report the number of experiments performed by our proposed method and the accuracy of the learned graphs when the underlying true graphs are generated randomly from SBM($n,p,b$).
        Each point on the plots is reported as the average of 50 runs with a $90\%$ confidence interval.
        We measured the accuracy of the recovered DGs by normalized structural hamming distance (SHD/$n$) and F1-scores, which we formally define in Subsection \ref{subsec: metrics}.
        
        \begin{figure}[t] 
            \centering
            \captionsetup{justification=centering}
            \begin{subfigure}{.35\textwidth}
                \centering
                \includegraphics[width=\textwidth]{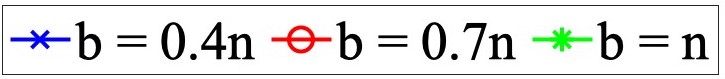}
            \end{subfigure}
            
            \begin{subfigure}{\textwidth}
                \centering
                \includegraphics[width=0.32\textwidth]{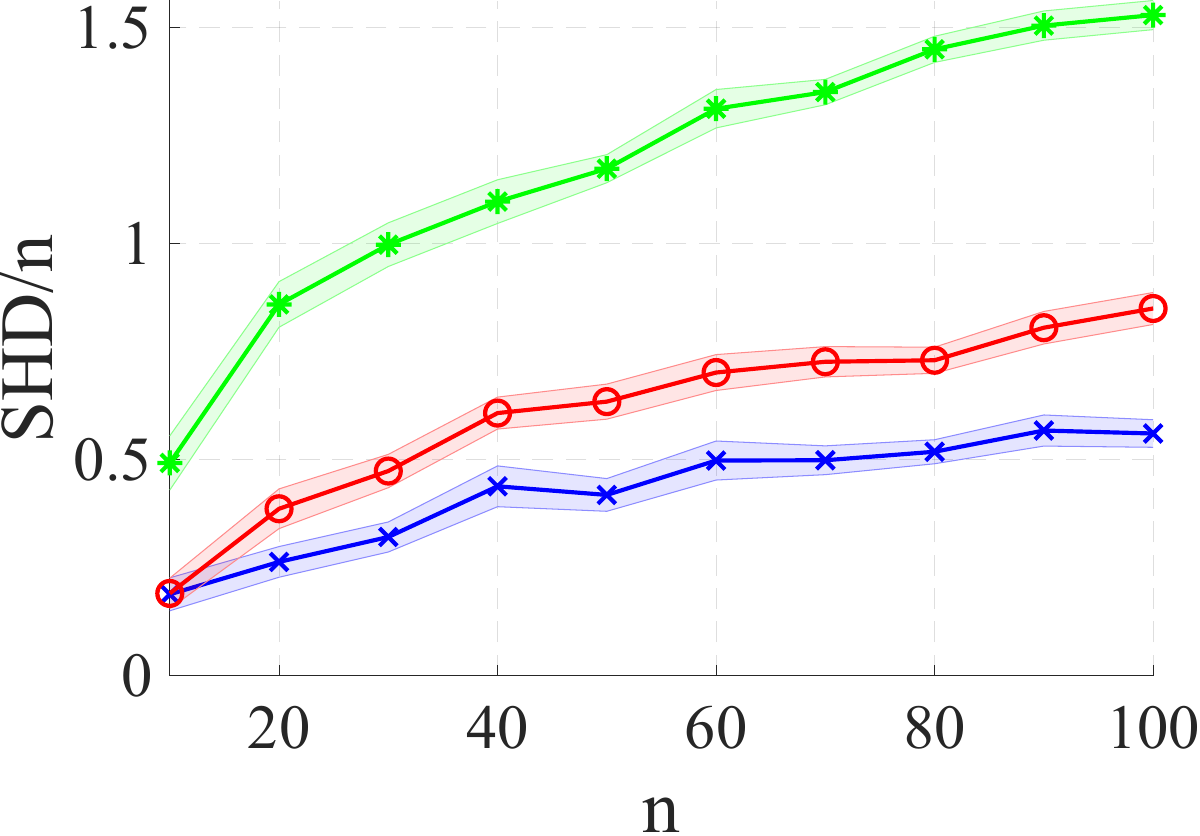}\hfill
                \includegraphics[width=0.32\textwidth]{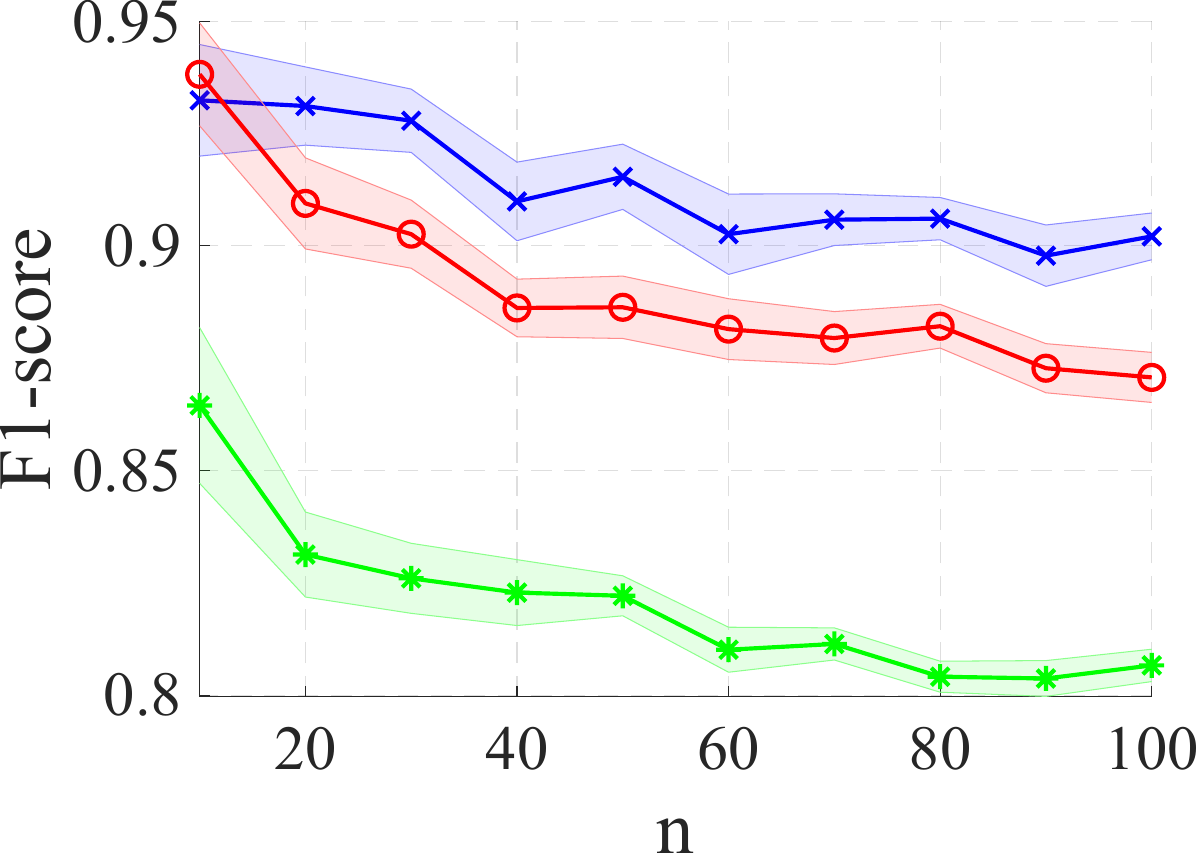}\hfill
                \includegraphics[width=0.32\textwidth]{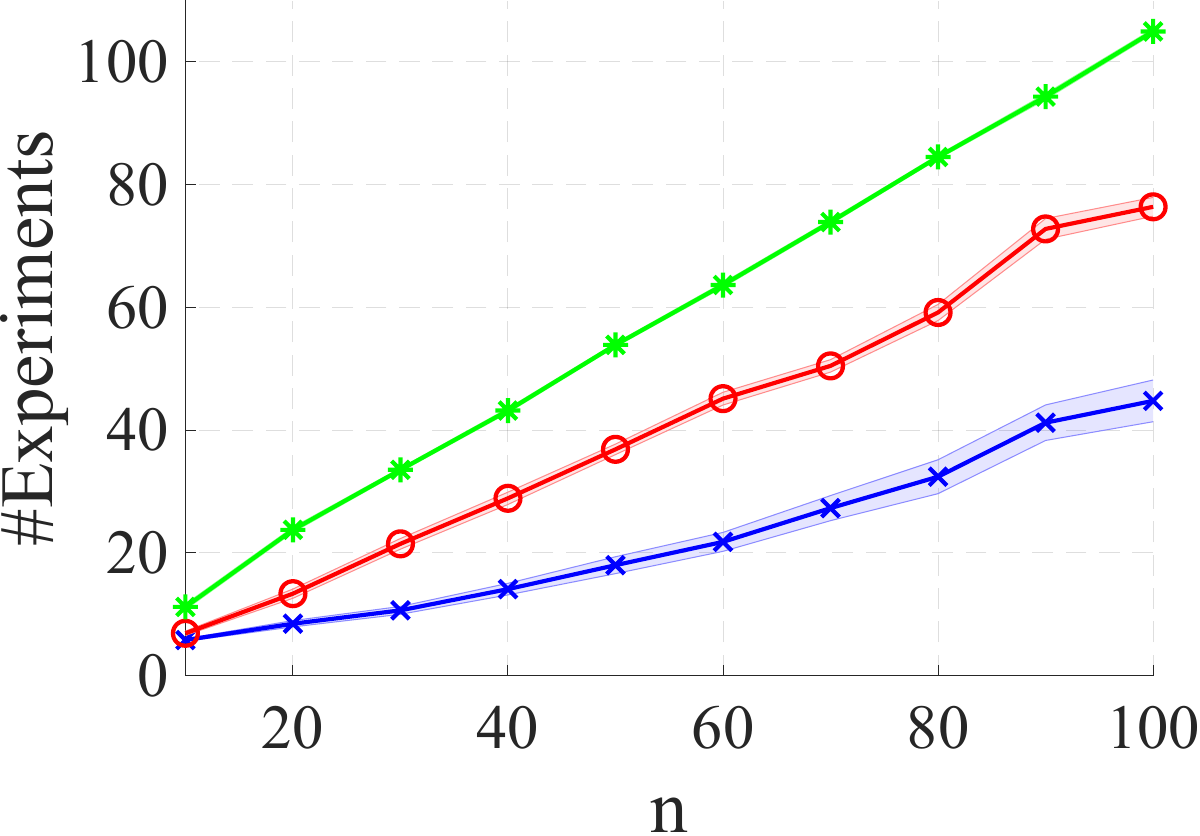}
                \caption{$p = \frac{\log(n)}{n}$, \#samples $= 200n$.}
                \label{fig: sim1}
            \end{subfigure}
            
            \begin{subfigure}{.6\textwidth}
                \centering
                \includegraphics[width=\textwidth]{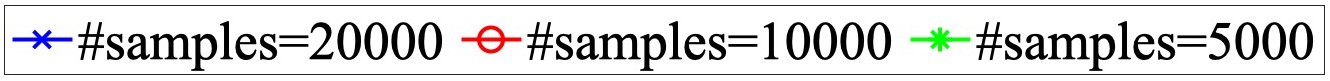}
            \end{subfigure}
            
            \begin{subfigure}{\textwidth}
                \centering
                \includegraphics[width=0.32\textwidth]{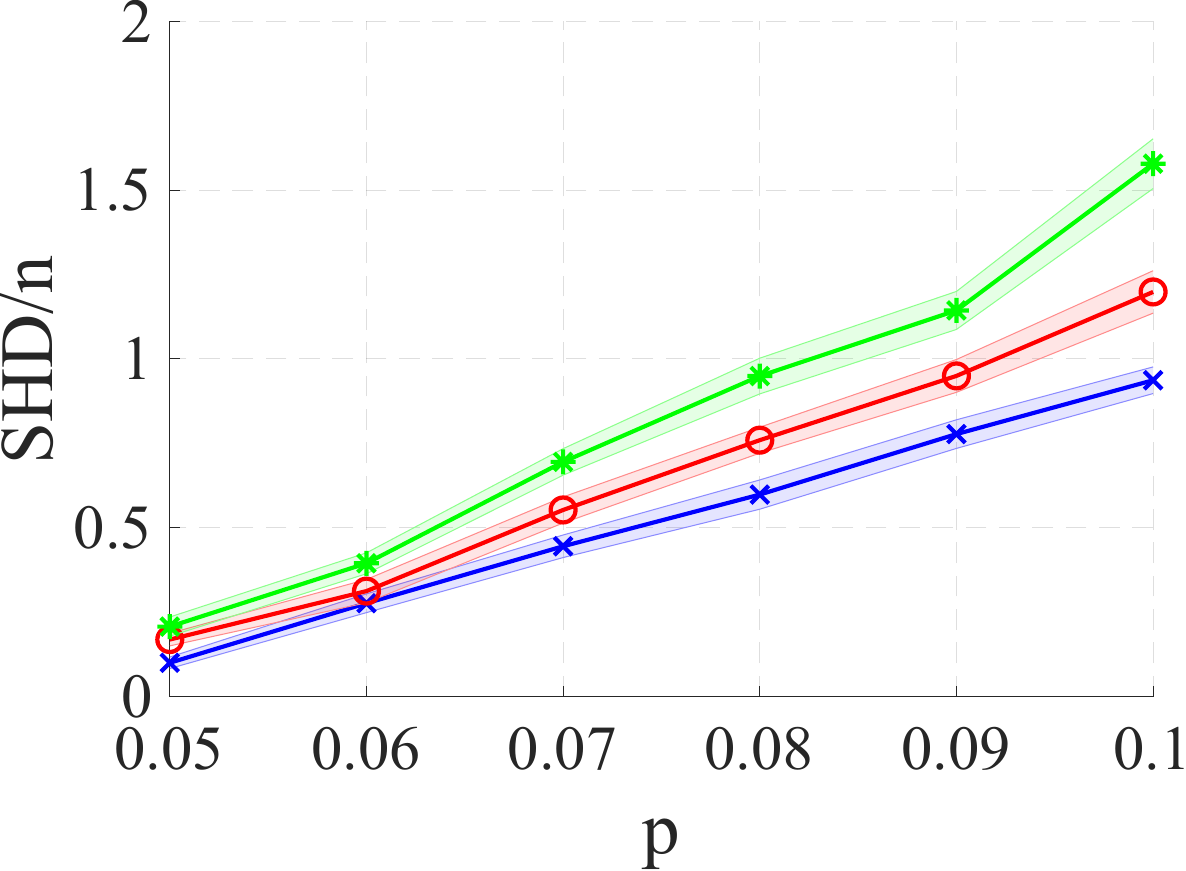}\hfill
                \includegraphics[width=0.32\textwidth]{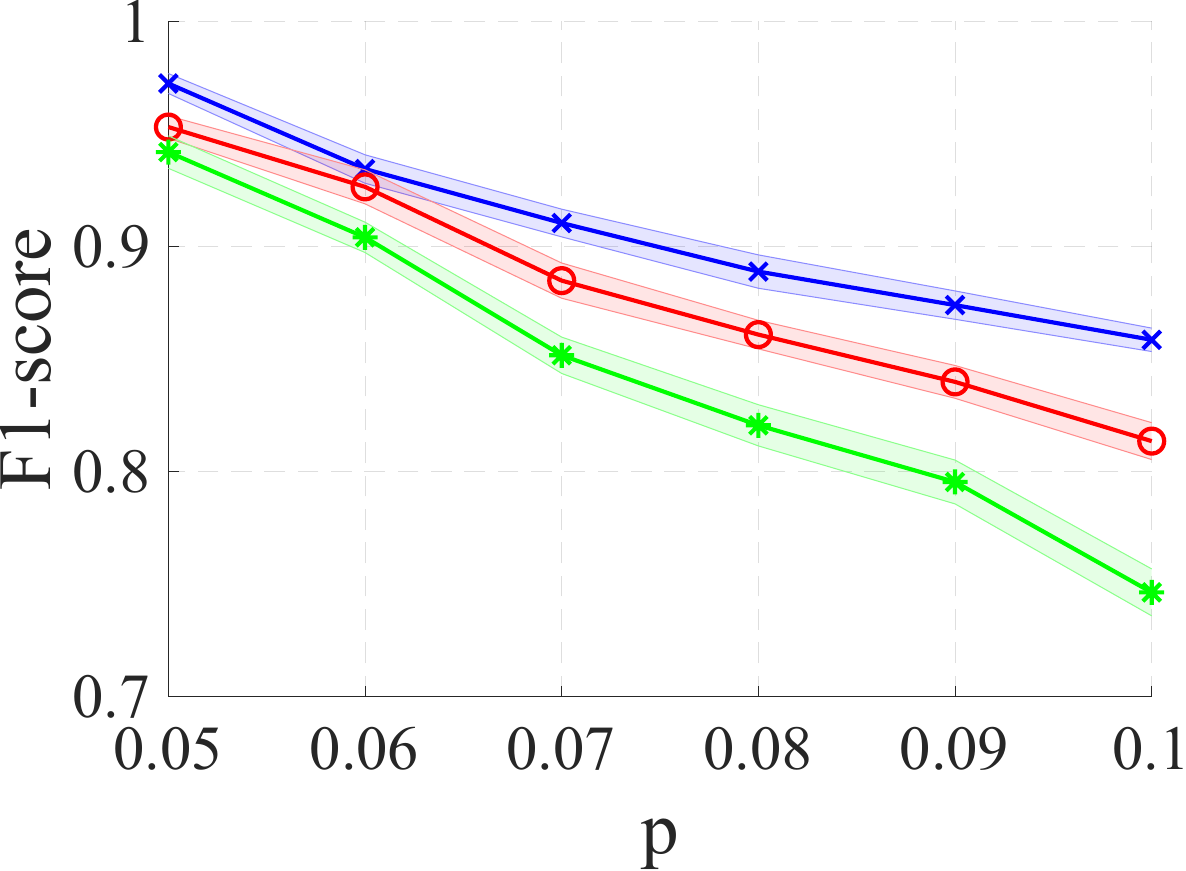}\hfill
                \includegraphics[width=0.32\textwidth]{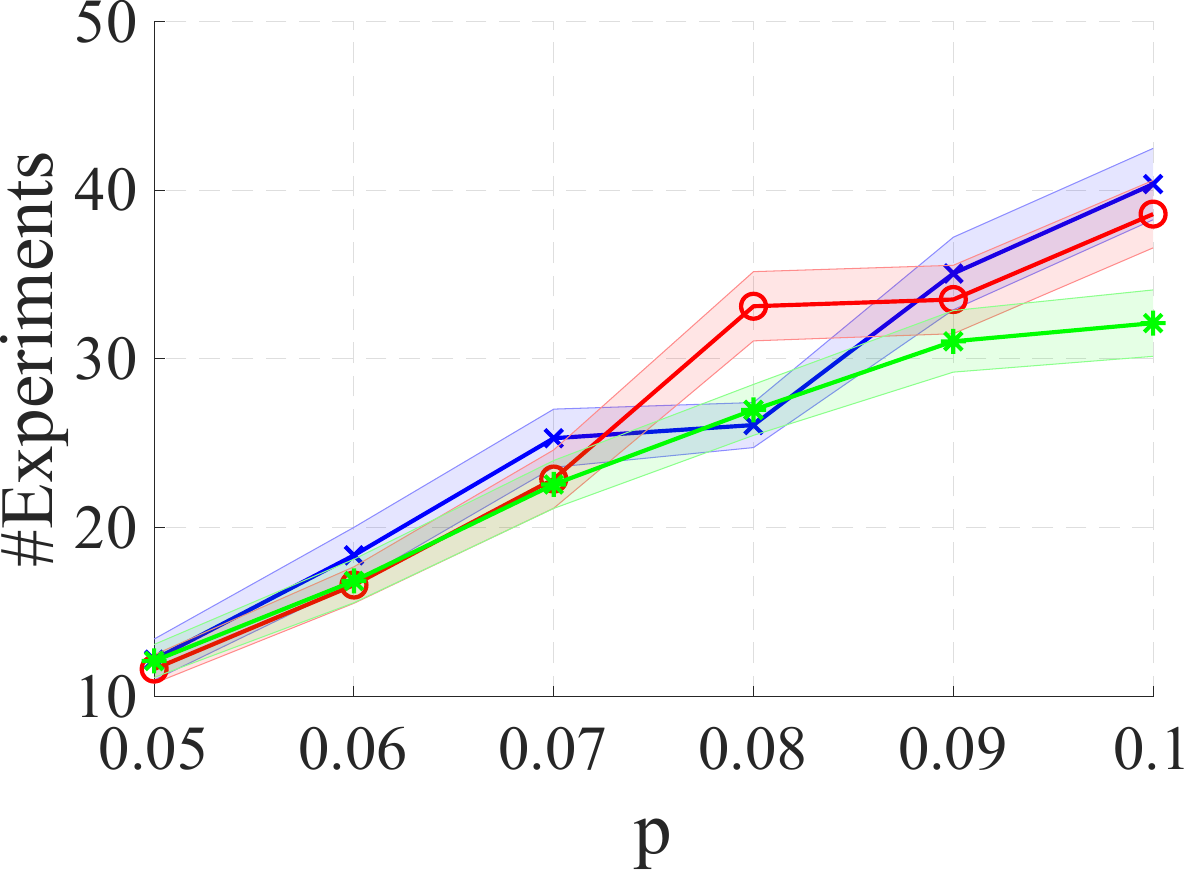}
                \caption{$n=50$, $b=25$.}
                \label{fig: sim2}
            \end{subfigure}
            \caption{Performance of our approach on random graphs generated from SBM($n,p,b$).}
            \label{fig: simulations}
        \end{figure}
        
        Figure \ref{fig: sim1} illustrates the effect of $n$ (number of vertices) and $b$ (the parameter that controls $\smax{\G}$) when $p = \frac{\log(n)}{n}$ (graph density) and the number of samples was fixed at $200n$.
        As can be seen, for moderate values of $b$, and accordingly $\smax{\G}$, the proposed algorithm achieves good accuracy in terms of F1-score and SHD.
        Moreover, the number of experiments scales linearly with $b$, which is consistent with our analysis.
        
        In Figure \ref{fig: sim2}, the effect of graph density is studied by varying $p$ for three different sample sizes (5000, 10000, 20000) when $n=50$ and $b=25$.
        This shows that once we reach an adequate number of samples, the sample size has a negligible effect on the number of experiments.
        Furthermore, we observe that the proposed approach performs better (both in terms of SHD and F1-score) on sparser graphs.
    
    \subsection{Evaluation Metrics} \label{subsec: metrics}
        We measured the accuracy of our algorithm by two commonly used metrics in the literature: F1-score and normalized Structural Hamming Distance (SHD/$n$).
        Herein and similar to \cite{mokhtarian2023novel}, we define these measures.
        
        Let $\G_1$ and $\G_2$ denote the true DG and the learned DG, respectively.
        We first define a few notations.
        True-positive (TP) is the number of edges that appear in both $\G_1$ and $\G_2$.
        False-positive (FP) is the number of edges that appear in $\G_2$ but do not exist in $\G_1$.
        False-negative (FN) is the number of edges in $\G_1$ that the algorithm failed to learn in $\G_2$.
        In this case, SHD is defined as follows.
        \begin{equation*}
            \text{SHD = FP + FN}, \quad \text{SHD/}n = \frac{\text{FP + FN}}{n}.
        \end{equation*}
        SHD is a non-negative integer, and smaller numbers indicate better accuracy.
        F1-score is defined by \textit{precision} and \textit{recall} in the following.
        \begin{equation*}
            \text{Precision = } \frac{\text{TP}}{\text{TP + FP}}, \quad \text{Recall = } \frac{\text{TP}}{\text{TP + FN}}, \quad 
            \text{F1-score = } 2 \times \frac{\text{Precision }\times \text{ Recall} }{\text{Precision }+ \text{ Recall}}.
        \end{equation*}
        Note that $0 \leq $ F1-score $\leq 1$ and larger numbers indicate better accuracy.

\section{Related Work} \label{sec: related work}
    The goal of causal discovery is to learn the causal graph of a system, which represents the existence and direction of relations among the variables of the system under study.
    In general, the causal graph can only be identified up to the Markov equivalence class (MEC) from mere observational data.
    \cite{richardson1996polynomial} provided necessary and sufficient conditions for the Markov equivalence of two DGs, based on which he proposed a consistent structure learning algorithm that can learn a DG up to the MEC \citep{richardson1996discovery}.
    Subsequently, \cite{mooij2020constraint} showed that the Fast Causal Inference (FCI) algorithm, originally designed for learning DAGs, can also learn a cyclic DG up to the MEC.
    \cite{forre2018constraint} introduced $\sigma$-connection graphs ($\sigma$-CG), a new class of mixed graphs (containing undirected, bidirected, and directed edges).
    % with additional structure.
    They proposed a causal discovery algorithm for $\sigma$-CGs, handling non-linear causal mechanisms, latent confounders, and data from multiple interventional distributions.
    \cite{ghassami2020characterizing} instead focused on the notion of distribution equivalence.
    They provided necessary and sufficient conditions for the distribution equivalence of two DGs for linear Gaussian causal DG models and proposed a score-based method for learning the structure from observational data.
    \cite{lacerda2008discovering} focused on the case of linear models with non-Gaussian noises and generalized the ICA-based approach of \cite{shimizu2006linear} to allow for cycles.
    
    As we discussed, to uniquely identify the causal graph, the gold standard is to perform experiments, leading to the experiment design problem.
    To the best of our knowledge, there is no previous work on the problem of experiment design in cyclic models.
    In the following, we mainly review previous work in acyclic models, where it has been studied extensively \citep{eberhardt2007causation, eberhardt2005number, eberhardt2008almost, he2008active, shanmugam2015learning}.
    We review the previous work based on the following three aspects of the experiment design problem.
    
    \begin{itemize}[leftmargin=*]
        \item \textbf{The objective of the problem:}
        The work on experiment design can be divided into two main categories.
        In the first category, the goal is to minimize the cost of experiments while it is required to learn the whole graph.
        This problem is referred to as the \emph{min-cost identification} problem.
        The second category aims to minimize the ambiguity about the causal graph while a limited budget for performing experiments is available.
        This problem is referred to as \emph{fixed budget} or \emph{budgeted experiment design}.

        \item \textbf{Adaptive versus non-adaptive methods:}
        An alternative way to divide methods is in terms of whether the interventions are performed adaptively or non-adaptively.
        \emph{Adaptive} methods sequentially perform experiments, where they exploit the results of previously performed experiments to design the latter ones.
        These methods are practical in cases where the experiments are not highly time-consuming.
        On the other hand, \emph{non-adaptive} methods design all the experiments simultaneously and perform them in parallel.
        
        \item \textbf{Bounded-size experiments:}
        In several applications, it is not feasible to perform large-size experiments.
        In such cases, the size of the designed experiments must be bounded by a given constant.
        This problem is referred to as the \emph{Bounded-size experiment design} problem.
    \end{itemize}
    The majority of earlier work focused on the min-cost identification problem in acyclic models.
    In particular, \cite{eberhardt2005number} proposed worst-case bounds on the number of required experiments where the number of intervened variables could be as large as half of the size of the graph.
    \cite{he2008active} proposed adaptive and non-adaptive algorithms for the case where the experiments are singleton, i.e., each experiment is comprised of a single variable.
    Their non-adaptive approach is brute force, and it can find the optimal solution.
    However, it is not scalable to large graphs as they enumerate all the DAGs in a MEC, and the number of DAGs in a MEC can grow super-exponentially with the number of variables.
    In the adaptive case, they presented a heuristic algorithm based on Shannon’s entropy to select the intervened variable in each step.
    \cite{hauser2014two} proposed an optimal algorithm for minimizing the number of undirected edges in the worst case when we are allowed to perform just one intervention.
    They further utilized this algorithm to propose a heuristic adaptive experiment design method.
    \cite{shanmugam2015learning} proposed a lower bound on the number of experiments for the adaptive methods based on the notion of separating systems.
    \cite{kocaoglu2017experimental} proposed a stage-wise algorithm for the experiment design problem in the presence of unobserved variables.
    First, the induced subgraph between observed variables is recovered, and then, by performing some “do-see” tests, the existence and the location of latent variables are identified.

    The experiment design problem has also been studied when intervention on each variable has a particular cost.
    In this setting, \cite{kocaoglu2017cost} proposed an optimal algorithm when there is no constraint on the number of interventions in each experiment. 
    \cite{greenewald2019sample} presented a 2-approximation adaptive algorithm for the tree causal structures.
    In a follow-up, \cite{squires2020active} proposed an adaptive algorithm for a more general class of causal graphs, matching the optimal number of interventions up to a multiplicative logarithmic factor.

    \cite{ghassami2018budgeted} introduced the fixed budget formulation of the experiment design problem.
    They considered the average number of recovered edges (after an intervention) as the objective function and showed that a general greedy algorithm is an approximation algorithm.
    Moreover, to estimate the objective function, they proposed a sampler from MEC, which evaluates the objective function by a Monte Carlo scheme.
    \cite{ghassami2019counting} presented a uniform sampler on clique trees for accelerating the generating of random DAGs from a given MEC.
    Then, they utilized it as a sub-routine for designing experiments.
    \cite{ghassami2019interventional} proposed an efficient exact algorithm for tree causal structures to minimize the number of undirected edges after performing interventions in the worst-case scenario.
    Later, \cite{ahmaditeshnizi2020lazyiter} proposed a method for iterating over all possible DAGs in the corresponding MEC after intervening on a variable, and introduced an exact algorithm for the fixed budget problem.
    The methods described above can be further reinforced by using state-of-the-art techniques for counting and sampling Markov equivalent DAGs.
    In particular, \cite{wienobst2021polynomial, wienobst2023polynomial} show that these tasks can be performed in polynomial time.

    The experiment design problem has also been studied in the Bayesian framework.
    For instance, \cite{agrawal2019abcd} proposed a tractable adaptive algorithm for the fixed budget problem with an approximation guarantee on sub-modularity.
    \cite{tigas2022interventions} proposed an adaptive experiment design method that designs not only the experiments but also the value at which each intervened variable should be set.

\section{Conclusion and Future Work} \label{sec: discussion}
    Feedback cycles in causal graphs are more the norm rather than the exception.
    We showed that in cyclic models, observational data is far less informative for structure learning, and it is necessary to solve the experiment design problem.
    The presence of cycles also introduces major challenges for the experiment design.
    For instance, intervening on a variable may not lead to recovering the presence or the direction of the edges incident to it.
    
    In this work, we proposed a unified experiment design framework that allows learning cyclic and acyclic graphs. 
    We further provided a theoretical analysis to calculate the required number and size of experiments in the worst case.
    The analysis demonstrated that our proposed approach is order-optimal in terms of the number of experiments up to an additive logarithmic term and optimal in terms of the size of the largest experiment required for unique identification of the causal graph in the worst case.

    In the following, we discuss potential future work.
    \begin{itemize}[leftmargin=*]
        \item 
            The main assumption of our proposed method is causal sufficiency.
            An important unsolved research problem is to relax this assumption and allow for latent confounders.
            We note that in the presence of latent confounders and even in acyclic models, experiment design is a challenging problem.
        \item 
            Although we assumed that the generative model is a simple SCM, for the soundness of our results, we only required that the interventional distribution exists (not necessarily unique) and that the CI assertions in the observational and interventional distributions are equivalent to either $d$-separation or $\sigma$-separations in the causal graph.
            Accordingly, another direction of future work is to characterize the class of SCMs satisfying the aforementioned assumptions.
        \item 
            In Scenario 2, we considered the $\sigma$-faithfulness assumption to ensure that any CI in the distribution implies $\sigma$-separation in the causal graph.
            As mentioned in Remark \ref{rem: d-sigma}, $\sigma$-faithfulness assumption is stronger than $d$-faithfulness assumption.
            It could be interesting to investigate how restrictive the assumption of $\sigma$-faithfulness is.
        \item
            We assumed that intervening in a variable removes its incoming edges.
            This type of intervention is commonly called \emph{hard intervention} (aka, perfect intervention).
            However, there are other types of interventions, such as \emph{soft-interventions}, in which the incoming edges will not necessarily be omitted (even in some cases, new edges will be added to the causal graph).
            Studying the problem of experiment design and investigating the required number and size of experiments under other types of interventions remains open.
        \item
            The lower bounds presented in Theorems \ref{thm: lower bound - size} and \ref{thm: lower bound - number} are \textit{worst-case} lower bounds, in the sense that for any constant $c$, there exists a DG $\G$ with $\smax{\G}=c$, which requires at least the specified number or size of interventions to be identified uniquely.
            A few recent works such as \cite{choo2022verification} have established \textit{instance-wise} lower bounds for experiment design on DAGs.
            These bounds can be used to develop instance-wise competitive guarantees for experiment design algorithms.
            The development of such bounds for cyclic causal models remains an important open problem.
    \end{itemize}

\acks{
    This research was in part supported by the Swiss National Science Foundation under NCCR Automation, grant agreement 51NF40\_180545 and Swiss SNF project 200021\_204355 /1.
}    

\bibliography{biblio}

\clearpage
\end{document}